\definecolor{blue_color}{rgb}{0.0, 0.0, 0.6}
\definecolor{green_color}{rgb}{0.0, 0.6, 0.0}
\definecolor{red_color}{rgb}{0.8, 0.0, 0.0}
\newcommand{\red}[1]{\textcolor{red_color}{#1}}
\definecolor{orange_color}{rgb}{0.9, 0.5, 0.2}
\newcommand{\Reals}[1]{\mathbb{R}^{#1}}	% set of all real numbers
\newcommand{\norm}[1]{\|{#1}\|}			% norm operator
\newcommand{\x}{\mathbf{x}}				% vector x
\renewcommand{\t}{\mathbf{t}}				% vector t
\newcommand{\Zero}[1]{\mathbf{0}_{#1}}		% vector of zeros
\newcommand{\Eye}[1]{\mathbf{I}_{#1}}		% identity matrix
\newcommand{\One}[1]{\mathbf{1}_{#1}}		% vector of ones
\newcommand{\tr}{\mathrm{tr}}				% trace operator
\newcommand{\Null}{\mathrm{Null}}			% null space
\newcommand{\R}{\mathbf{R}}				% rotation matrix
\newcommand{\Neighs}[1]{\mathcal{N}_{#1}}   % graph neighbor set
\newcommand{\Ind}{\mathds{1}}			% indicator function
\DeclareMathOperator*{\minimize}{minimize}	% minimize
\newcommand{\A}{\mathbf{A}}					% dynamics: 		intrinsic matrix
\renewcommand{\u}{\mathbf{u}}				% dynamics: 		input vector
\newtheorem{theorem}{Theorem}
\newtheorem{lemma}{Lemma} % add optional [theorem] for sequential numbering throughout paper
\newtheorem{corollary}{Corollary}
\newtheorem{definition}{Definition}
\newtheorem{problem}{Problem}
\begin{document}
\title{\textbf{GeoD:} Consensus-based \textbf{Geo}desic\\ \textbf{D}istributed Pose Graph Optimization}
\author{Eric~Cristofalo, %~\IEEEmembership{Student Member,~IEEE,}
        Eduardo~Montijano, %~\IEEEmembership{Member,~IEEE,} 
        and 
        Mac~Schwager%,~\IEEEmembership{Member,~IEEE}% <-this % stops a space
\thanks{This work was supported by a NDSEG fellowship; Spanish projects PGC2018-098817-A-I00 (MINECO/FEDER), DGA T04-FSE; NSF grants CNS-1330008 and IIS-1646921;  ONR grant N00014-18-1-2830 and ONRG-NICOP-grant N62909-19-1-2027; and the Ford-Stanford Alliance Program.}%
\thanks{E. Cristofalo and M. Schwager are with the Department of Aeronautics and Astronautics, Stanford University, Stanford, CA 94305, USA {\tt\scriptsize (ecristof;schwager@stanford.edu)}}%
\thanks{E. Montijano is with Instituto de Investigaci\'on en Ingenier\'ia de Arag\'on, Universidad de Zaragoza, Zaragoza 50018, Spain {\tt\scriptsize (emonti@unizar.es)}.}%
}

% \markboth{IEEE TRANSACTIONS ON ROBOTICS, VOL.~X, NO.~X, SEPTEMBER~2020
% }{Cristofalo \MakeLowercase{\textit{et al.}}: \uppercase{GeoD: Consensus-based Geodesic Distributed Pose Graph Optimization}}

\maketitle

\begin{abstract}
We present a consensus-based distributed pose graph optimization algorithm for obtaining an estimate of the 3D translation and rotation of each pose in a pose graph, given noisy relative measurements between poses. The algorithm, called GeoD, implements a continuous time distributed consensus protocol to minimize the geodesic pose graph error. GeoD is distributed over the pose graph itself, with a separate computation thread for each node in the graph, and messages are passed only between neighboring nodes in the graph. We leverage tools from Lyapunov theory and multi-agent consensus to prove the convergence of the algorithm. We identify two new consistency conditions sufficient for convergence: \textit{pairwise consistency} of relative rotation measurements, and \textit{minimal consistency} of relative translation measurements. GeoD incorporates a simple one step distributed initialization to satisfy both conditions. We demonstrate GeoD on simulated and real world SLAM datasets. We compare to a centralized pose graph optimizer with an optimality certificate (SE-Sync) and a Distributed Gauss-Seidel (DGS) method. On average, GeoD converges 20 times more quickly than DGS to a value with 3.4 times less error when compared to the global minimum provided by SE-Sync. GeoD scales more favorably with graph size than DGS, converging over 100 times faster on graphs larger than 1000 poses. Lastly, we test GeoD on a multi-UAV vision-based SLAM scenario, where the UAVs estimate their pose trajectories in a distributed manner using the relative poses extracted from their on board camera images. We show qualitative performance that is better than either the centralized SE-Sync or the distributed DGS methods. 
\center{\href{https://github.com/ericcristofalo/GeoD}{github.com/ericcristofalo/GeoD}}
\end{abstract}

\begin{IEEEkeywords}
Pose Graph Optimization, Multi-Robot Systems, Distributed Robot Systems, SLAM
\end{IEEEkeywords}

\section{Introduction}
\label{sec:introduction}

In this paper, we propose a distributed algorithm for pose graph optimization called \textbf{Geo}desic \textbf{D}istributed Pose Graph Optimization, or GeoD. Our algorithm uses rotation errors computed from the geodesic distance on the manifold of rotations. This is in contrast to existing methods, which typically use an off-manifold rotation error known as the chordal distance. Our algorithm is based on multi-agent consensus, and is therefore distributed over the topology of the pose graph itself. Each pose in the pose graph is optimized through simple iterative update rules that implement a continuous time gradient descent on the geodesic pose graph error. Using tools from Lyapunov theory, we analytically prove that GeoD converges to a set of equilibria corresponding to local minima of the pose graph error. We show that GeoD gives fast and accurate empirical performance in comparisons with state-of-the-art centralized and distributed pose graph optimization algorithms. 

\begin{figure} [t]
\centering
	\includegraphics[width=1.0\linewidth]{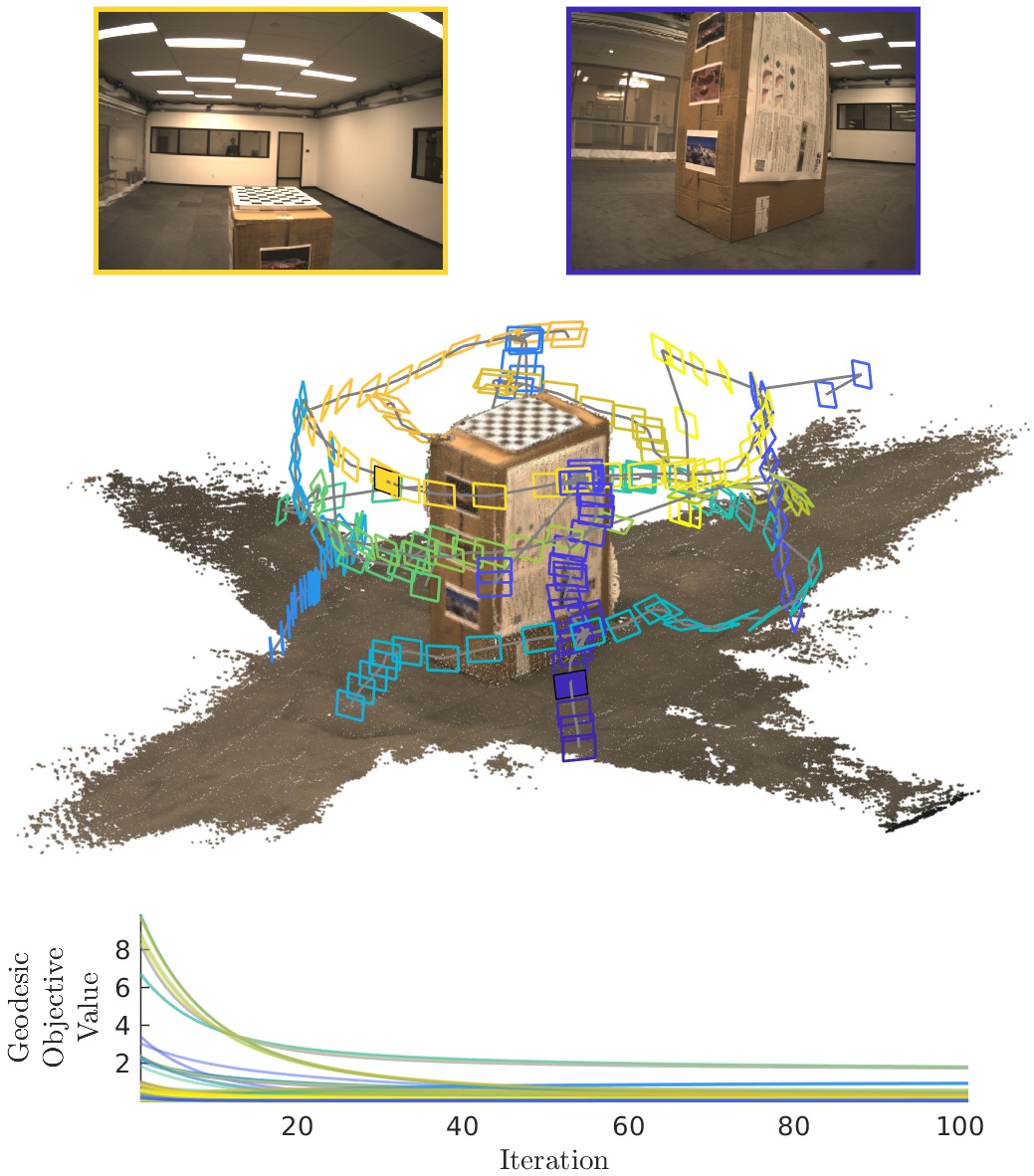}
\caption{We present GeoD, a consensus-based distributed $SE(3)$ pose graph optimization algorithm with provable convergence guarantees. This algorithm enables a group of robots, given noisy relative pose measurements, to reach agreement on the group's $SE(3)$ pose history in a distributed manner. The figure shows GeoD applied to a group of 7 quadrotor robots. (Top) Two images showing onboard views used to compute relative poses. (Middle) The reconstructed 3D map and pose history of all quadrotors. (Bottom) The geodesic objective value over time, showing convergence of the estimate for all quadrotor poses.}
\label{fig:main}
\end{figure}

Our algorithm has the potential to enable systems with hundreds or thousands of robots to cooperatively maintain large-scale SLAM maps of an area. For example, a fleet of autonomous vehicles could use GeoD to maintain an up-to-date distributed SLAM map of an entire city, incorporating the sensor measurements of all vehicles, while distributing the storage and computation among the vehicles. Similarly, a fleet of delivery drones could use GeoD to cooperatively maintain an estimate of the locations of all drones in the area, while maintaining a 3D map of a neighborhood for making safe door-step deliveries. In such safety critical multi-robot systems, mathematical assurances that the algorithm will not diverge or oscillate in an unstable fashion are essential. GeoD provides such convergence assurances.

While the majority of pose graph optimization algorithms are centralized~\cite{CadenaEtAlTRO16PastPresentFutureSLAM,kummerle2011g,rosen2016se}, GeoD is distributed, anytime, and consensus-based. Unlike other distributed pose graph optimization algorithms \cite{choudhary2017distributed}, GeoD is distributed over the pose graph itself, so there is a different computational thread responsible for optimizing each pose in the graph. Each thread only passes messages with other threads that are neighbors in the pose graph. This architecture is flexible in that the multiple threads can be partitioned among multiple robots as a distributed multi-robot SLAM back-end. Each robot implements its own threads and communicates over a wireless network to pass messages with threads owned by other robots. In the case of a single robot, our algorithm also can be implemented on a GPU to give a fast, parallelized SLAM back-end. 

GeoD operates on the geodesic distance on $SO(3)$ (the Special Orthogonal group of $3$D rotation matrices) as opposed to using convex chordal relaxations, which is the state-of-the-art in the literature~\cite{kummerle2011g,rosen2016se,choudhary2017distributed}.
% ---~\cite{choudhary2017distributed} is a distributed method that does provide a condition for convergence, but that condition is non-trivial to establish in practice. 
We leverage tools from Lyapunov theory and multi-agent consensus to formally prove that GeoD will converge on an estimate. We introduce the concept of \textit{measurement consistency} to define the conditions for convergence which have not previously been identified in the consensus-based pose estimation literature~\cite{tron2009distributed, thunberg2014distributed, thunberg2017distributed}. 
Given arbitrary noisy relative pose measurements, GeoD starts with a single-step distributed initialization round to satisfy these conditions for convergence. 
% Specifically we first prove that \textit{pairwise consistent} rotation measurements are a sufficient condition for the convergence of the distributed rotation estimate. We then build on this result and prove that \textit{minimally consistent} coupled translation measurements are a sufficient condition for input-to-state stability of the distributed translation estimate. 
% In practice, these formal \textit{measurement consistency} conditions are easy to satisfy with one distributed communication round, e.g., each robot shares its raw measurements with each immediate neighbor and averages the measurements along that undirected edge. 

We illustrate the advantages of GeoD by comparing it to state-of-the-art centralized (SE-Sync~\cite{rosen2016se}) and distributed (DGS~\cite{choudhary2017distributed}) chordal relaxation approaches. 
On average over several SLAM  datasets, GoeD converges to a solution more than $700\times$ faster than SE-Sync with only $32.5\%$ error from the global minimum on chordal value --- despite its gradient-like nature --- compared to $110.2\%$ error obtained by DGS. 
Our method requires similar computation time to DGS for small networks, but requires approximately $100\times$ less time for large networks that contain more than $1000$ poses. 
We finally demonstrate our algorithm in hardware experiments with a multi-UAV network, where the camera-equipped drones optimize a SLAM map using noisy relative pose measurements extracted from shared image features (see Fig.~\ref{fig:main}). 

% The novel contributions of this paper are three-fold: 
% \begin{enumerate}
%     \item A proof using LaSalle's of the convergence of the distributed rotation estimation on the Special Orthogonal group in $3$D, $SO(3)$, given \textit{pairwise consistent} noisy relative rotation measurements. 
%     \item A simulation study of this algorithm compared to state-of-the-art centralized and distributed pose estimation algorithms. 
%     \item An experiment where a network of camera-equipped autonomous quadrotors estimate their poses from noisy pose measurements extracted from sharing extracted image features. 
% \end{enumerate}

This work builds upon an earlier conference version~\cite{cristofalo2019consensus}, which used the angle-axis rotation parameterization. The current paper presents a fundamentally new algorithm using rotation matrices, together with an entirely new mathematical analysis. We also present new performance comparisons with two state-of-the-art pose graph optimization algorithms, and give results from new hardware experiments. We briefly compare GeoD with the earlier iteration from \cite{cristofalo2019consensus} in Section~\ref{sec:results} and show that GeoD is advantageous in terms of both convergence speed as well as solution accuracy by over a factor of $5\times$. 
% The angle-axis method becomes more easily trapped in local minima compared to the rotation matrix variant. 

The rest of this paper is organized as follows. We review the pose graph optimization literature in Section~\ref{sec:related_work}. We present the problem formulation and overview our solution in Section~\ref{sec:problem}. We briefly review the group of $3$D rotation matrices in Section~\ref{sec:lie_group}. We prove convergence of GeoD in Section~\ref{sec:analysis}. Lastly, we present comparisons to state-of-the-art centralized and distributed approaches on SLAM datasets and a multi-agent system experiment in Section~\ref{sec:results}. Finally, we draw conclusions in Section~\ref{sec:conclusion}.

\section{Related Work}
\label{sec:related_work}

Pose graph optimization refers to a standard class of optimization problems in which a set of SE(3) rigid body poses (usually corresponding to poses for a robot, or robots, over time) must be estimated based on noisy relative measurements between these poses. Each pose represents a node and each relative measurement represents an edge, hence the underlying structure is represented by a graph. Pose graph optimization is fundamental to Simultaneous Localization and Mapping (SLAM), as it typically forms the ``back-end'' of a SLAM system, while the ``front-end'' produces the edges in the pose graph by extracting relative pose estimates from raw sensor measurements \cite{CadenaEtAlTRO16PastPresentFutureSLAM,engel2014lsd,mur2015orb}. While mostly studied in the context of SLAM, pose graph optimization can also be used as an underlying optimization model for multi-robot relative localization \cite{MartinelliICRA05RelativeLocalization}, trajectory planning \cite{ToussaintICML09PlanningInference}, structure from motion \cite{DellaertCVPR00SFM}, and a variety of other problems fundamental to robotics \cite{DellaertFoundationaAndTrends17FactorGraphs}.

%Networked systems of robots appear in many multi-robot applications such as controlling mobile camera networks~\cite{schwager2011eyes}, multi-robot manipulation~\cite{wang2016multi}, or flying formations of drones~\cite{montijano2016vision}. 
%In these scenarios, each agent typically requires an accurate estimate of its pose in some global coordinate frame in order to accomplish higher-level tasks. 
%Pose graph optimization is also a fundamental step in modern SLAM algorithms~\cite{engel2014lsd,mur2015orb} where it is formulated as the maximum likelihood estimation of every robot pose given the graph of noisy relative measurements. 
Pose graph optimization is a nonlinear, nonconvex, least squares optimization for a translation vector and rotation matrix corresponding to each pose in the network.
Advances in bundle adjustment have led to very efficient gradient methods~\cite{duckett2002fast} or Levenberg-Marquardt algorithms~\cite{kummerle2011g} for pose graph optimization. 
Most recently, the authors of~\cite{rosen2016Certifiably,rosen2016se} introduced a semi-definite relaxation to the pose graph optimization problem that can be efficiently solved using an algorithm called SE-Sync. This algorithm, under broad conditions, gives a certificate of optimality in the following sense: if the certificate is produced, the solution is globally optimal, if not, the solution may or may not be global optimal. 

These previous pose graph optimization methods are centralized and do not immediately lend themselves to decentralization. Distributed pose graph optimization is important for applications where multiple robots must collaborate to estimate a common SLAM map. Distributed algorithms for pose graph optimization are also important for taking advantage of multiple cores on a single chip, parallelization in a GPU, or in maintaining very large SLAM maps over many machines in a data center, as in cloud robotics applications. There are numerous multi-robot SLAM solutions that consider the distributed maximum likelihood estimation problem for linear Gaussian systems~\cite{xiao2005scheme} and 3D nonlinear systems~\cite{indelman2014multi}. 
There are also distributed Kalman filter-like approaches~\cite{olfati2005distributed}, filtering methods using vision-specific sensors~\cite{nageli2014environment}, and distributed SLAM with cloud-based resources~\cite{riazuelo2014c2tam}. 
Like the centralized approaches, distributed pose graph optimizers similarly rely on a convex chordal relaxation, but employ either distributed Successive Over-Relaxation or Jacobi Over-Relaxation solvers~\cite{choudhary2017distributed}. Specifically,~\cite{choudhary2017distributed} presents a two-stage algorithm for Distributed Gauss-Seidel (DGS) optimization that first distributes the chordal initialization of rotations, and then distributes a single Gauss-Newton step of the full pose graph optimization problem. In contrast, our method optimizes the full $SE(3)$ state in a single algorithm and does not require a chordal relaxation. 
A recent distributed pose graph solver~\cite{tian2019block} provides a certificate of optimality using a sparse semidefinite relaxation, similarly to SE-Sync. Finally, another recent solver is asynchronous and distributed~\cite{tian2020asynchronous}, but requires the pose graph updates to be executed separately, considering a single robot at a given time.
% Another recent distributed pose graph solver~\cite{tian2019block,tian2020asynchronous} gives a certificate of optimality in some cases, similarly to SE-Sync, but requires the pose graph updates to be executed separately, considering a single robot at a given time. 
Differently, our algorithm is able to update all the graph poses simultaneously in the fashion of consensus-based estimation methods.
In this paper we compare GeoD to SE-Sync~\cite{rosen2016se}, as a representative of a state-of-the-art centralized and certifiable pose graph solver, and DGS~\cite{choudhary2017distributed}, as a representative of a state-of-the-art distributed pose graph solver. 
% Unfortunately, we do not compare with this algorithm, as no implementation of~\cite{tian2019block,tian2020asynchronous} has been made public at the time of writing.

Our algorithm is based on multi-agent consensus, which are a family of continuous time control algorithms for a group of computational agents to agree on a common variable of interest~\cite{olfati2004consensus}. Consensus-based algorithms are advantageous because they only require communication among local neighbors in a graph. These algorithms are generally light-weight, simple to implement, and can come with formal guarantees of convergence. Our work builds upon consensus algorithms in the multi-robot control literature~\cite{montijano2011multi, wang2016multi,montijano2016vision}, where robots specifically reach consensus on their relative positions while sharing raw measurements. 

Prior work on distributed consensus-based pose estimation can thus be split into work that considers either relative pose measurements corrupted by noise or not corrupted by noise. Without noise, classic consensus algorithms~\cite{olfati2004consensus} are well-suited for translation-only estimation. A number of methods have been proposed for orientation synchronization without noise that consider the angle-axis representation~\cite{thunberg2014distributed}, QR-factorization~\cite{thunberg2018dynamic},  synchronization on the manifold of $SO(3)$~\cite{tron2012intrinsic}, or rotation averaging~\cite{moakher2002means}. A common-frame $SE(3)$ estimation method is proposed in~\cite{tron2008distributed}, where all agents desire to reach consensus on a common frame given a relative $SE(3)$ pose from that frame. 

Noisy relative measurements have largely been considered in the analysis of translation-only estimation, a form of distributed linear least squares estimation~\cite{garin2010survey}. 
For example,~\cite{anderson2010formal} considers distance-only measurements while~\cite{savvides2005analysis} considers bearing only measurements. 
Noisy relative translation measurements themselves are considered in~\cite{xiao2005scheme}, which discusses the distributed computation of the maximum likelihood translation estimate. 
Other methods solve for the centroid of the network and anchor-based translations simultaneously~\cite{aragues2012distributed}. 
Asynchronous estimation is treated in~\cite{carron2014asynchronous} and $2$D estimation with Gaussian noise is analyzed in~\cite{piovan2013frame}. 
There are even fewer solutions that consider distributed $3$D pose estimation given noisy relative measurements. 
Most notably, the distributed gradient descent method in~\cite{tron2014distributed} sequentially considers three separate objectives: first a convex relaxation of the rotation-only objective, then a translation-only objective, and finally the full $3$D pose objective. 
Recently, the distributed $SE(3)$ synchronization method in~\cite{thunberg2017distributed} guarantees convergence with 
an iterative algorithm that projects the rotation estimates at each step onto $SO(3)$. 
These works consider distributed estimation for $3$D pose, but do not characterize and analyze the conditions on the noisy measurements for consensus-based estimation systems to converge as we do in Section~\ref{sec:analysis}. Finally, the bulk of the existing methods are not deployed on real robotic networks in SLAM applications as our method is in Section~\ref{sec:results}. 
\section{Problem Formulation and Solution Overview}
\label{sec:problem}

The goal of this paper is to compute the global translations and rotations of each pose in a pose graph to minimize a translation and rotation error with respect to the noisy relative measurements. Specifically, we aim to solve Problem~\ref{prob:main}, 
\begin{problem}[Pose Graph Optimization Problem]
\label{prob:main}
\begin{equation*} \begin{tiny}
	\minimize_{ \substack{\t_1,\hdots,\t_n \\ \R_1,\hdots,\R_n} } 
	\quad 
	\sum_{i \in \mathcal{V}} \sum_{j\in\Neighs{i}}
	\| \t_j - \t_i - \R_i\tilde{\t}_{ij} \|_2^2 + 
	\| \log ( \R_i^T\R_j\tilde{\R}_{ij}^T )^{\vee} \|_2^2
	\, .
\end{tiny} \end{equation*}
\end{problem}
\noindent
The output of the optimization are the $n$ poses\footnote{Each \textit{pose} here can represent either i) the pose of one robot or multiple robots that are evolving over time (as in SLAM) or ii) the poses of a group of robots at a given time (as in multi-robot relative localization). } that are labeled by the vertex set $\mathcal{V} = \{1,\hdots,n\}$. 
The $3$D rigid body pose of $i \in \mathcal{V}$ is measured with respect to an arbitrary global coordinate frame as a tuple containing a translation vector and a rotation matrix, $\left(\t_i \in \Reals{3}, \R_i \in SO(3)\right) \in SE(3)$, or the $3$D Special Euclidean group. Here $SO(3) = \{ \R \in \Reals{3\times 3} \mid \R^T\R = \Eye{3}, \ \det(\R) = +1 \}$ is the $3$D Special Orthogonal group where $\Eye{3}$ is a $3\times 3$ identity matrix and $\det(\cdot)$ is the determinant of a matrix. 

The noisy relative measurements are defined over the undirected edge set $\mathcal{E}$. 
Each noisy measurement is represented the directed edge component $(i, j)$ and is defined as the tuple $(\tilde{\t}_{ij}, \tilde{\R}_{ij}) \in SE(3)$. 
% The directed edge set $\vec{\mathcal{E}}$ defines the noisy measurement tuple in each direction, i.e., $\exists\, (\tilde{\t}_{ij}, \tilde{\R}_{ij})\ \forall (i,j)\in \vec{\mathcal{E}}$. 
This notation indicates the pose of $j$ measured in the coordinate frame of pose $i$. 
In an undirected graph, we assume there exists another unique noisy measurement in the opposite direction (of pose $i$ in frame $j$) such that $\tilde\R_{ij}\tilde\R_{ji} \neq \Eye{3}$, for example.
% If this is not the case, we generate an undirected graph by transforming directed edge measurements into the corresponding frames without measurements. 
% For example, the noisy measurement could be generated by adding noise to the ground truth relative pose, $( \R_i^T(\t_j - \t_i), \R_i^T\R_j)$, where the noise is distributed as Gaussian for translation and Langevin for rotation matrices~\cite{chiuso2008wide}. 
% Note that we make no assumptions about the form of the noise in this paper.
The noisy measurements are assumed to be acquired prior to solving the problem and are not measured at each time-step of the solution as in a visual odometry (VO) or online SLAM framework. 
The vertex set and the edge set define the graph $\mathcal{G} = (\mathcal{V}, \mathcal{E})$. Finally, each pose has a local neighborhood subgraph, $\Neighs{i} \subset \mathcal{V}$, that defines each pose's immediately connected neighbor labels. 

%The objective of Problem~\ref{prob:main} is the sum over all directed edges in the graph of two squared error terms and is nonlinear and non-convex. 
%The translation error term is the Euclidean distance between the relative translation ($\t_j - \t_i$) and the relative translation measurement ($\R_i\tilde\t_{ij}$) in the global frame: $\| \t_j - \t_i - \R_i\tilde{\t}_{ij} \|_2$. Note that the noisy measurements are represented in local body coordinate frames and must be rotated to the global frame, thus coupling the rotation estimate. 

The rotation error term in Problem~\ref{prob:main} is the \textit{geodesic distance} between the relative rotation ($\R_i^T\R_j$) and the relative rotation measurement ($\tilde\R_{ij}$): $\| \log ( \R_i^T\R_j\tilde{\R}_{ij}^T )^{\vee} \|_2$. Geodesics are distances along the manifold and require additional machinery from Lie group theory. We will briefly review concepts related to the Lie group structure of $SO(3)$ in Section~\ref{sec:lie_group}. The use of the geodesic distance is in contrast to many recent works that instead consider the \textit{chordal distance}~\cite{rosen2016Certifiably,rosen2016se,choudhary2017distributed}. The chordal distance, $\| \R_i^T\R_j - \tilde{\R}_{ij}\|_{F}$, computes the Frobenius matrix norm of the difference in rotation matrices. This objective is used in \textit{chordal relaxation} methods --- such as rotation initialization techniques~\cite{carlone2015initialization} --- that formulate a convex optimization to solve for the rotations embedded in a vector space $(\Reals{9})$, and then projects the final result back to $SO(3)$. A great comparison of the geodesic and chordal distance for rotation averaging can be found in~\cite{moakher2002means}. We present a detailed comparison to state-of-the-art chordal relaxation methods in Section~\ref{sec:results}. 

Our distributed consensus-based $SE(3)$ method is conceptually similar to distributed formation control, where the global translation and rotation estimates of each individual pose $i$ evolve according to rigid body kinematics, 
\begin{subequations} 
\label{eq:estimation_dynamics}
\begin{align}
	\dot{\t}_i &= \bm{\nu}_i \, , \\
	\dot{\R}_i &= \R_i \bm{\omega}_i^{\wedge}
	\, .
\end{align} \end{subequations}
The \textit{control inputs} to the kinematic system are the linear velocity $\bm{\nu}_i\in\Reals{3}$ and angular velocity $\bm{\omega}_i\in\Reals{3}$ for the rigid body pose. The \textit{hat} operator $(\cdot)^{\wedge}$ maps a vector into its skew-symmetric matrix analog (more detail is provided in Section~\ref{sec:lie_group}). In particular, we propose to use the following feedback input laws for pose graph optimization, 
\begin{subequations}
\label{eq:estimation_control}
\begin{align}
\label{eq:estimation_control_translation}
	\bm{\nu}_i &= \sum_{j\in\Neighs{i}} \t_j - \t_i - \R_i \tilde\t_{ij} \, , \\
\label{eq:estimation_control_rotation}
	\bm{\omega}_i^{\wedge} &= \sum_{j\in\Neighs{i}} \log(\R_i^T\R_j \tilde\R_{ij}^T)
	\, .
\end{align} \end{subequations}
Note that the control terms take the same form as the error terms in the objective of Problem~\ref{prob:main}. 
The full $SE(3)$ distributed estimation consists of executing the kinematics of system~\eqref{eq:estimation_dynamics} with control inputs~\eqref{eq:estimation_control}, i.e., 
\begin{subequations}
\label{eq:estimation_system}
\begin{align}
\label{eq:translation_estimation_system}
	\dot{\t}_i &= \sum_{j\in\Neighs{i}} \t_j - \t_i - \R_i \tilde\t_{ij} \, , \\
\label{eq:rotation_estimation_system}
	\dot{\R}_i &= \R_i \left( \sum_{j\in\Neighs{i}} \log(\R_i^T\R_j \tilde\R_{ij}^T) \right)
	\, .
\end{align} \end{subequations}
In the rest of this paper, we refer to system~\eqref{eq:estimation_system} as the \textbf{Geo}desic \textbf{D}istributed pose graph optimization system, or GeoD. GeoD is nonconvex for the same reasons as in Problem~\ref{prob:main}. We also highlight that even though system~\eqref{eq:estimation_system} looks like the first-order dynamics of a physical rigid body, we are using it to describe the nonlinear rate of change of our $SE(3)$ pose estimation. Lastly, the two expressions in~\eqref{eq:estimation_system} are assumed to run simultaneously, which is a notable difference compared to~\cite{tron2014distributed} which sequentially considers rotation, translation, and then the full $SE(3)$ problem. 

It is worth noting that the estimation scheme we propose is similar to other distributed consensus-based algorithms. In particular, it is inspired by previous $3$D distributed pose estimation work~\cite{cristofalo2019consensus} and by $3$D formation control ~\cite{montijano2016vision}. Note that there are three major differences. First, with respect to~\cite{cristofalo2019consensus}, we use the rotation matrix parameterization of rotations instead of the angle-axis. This allows us to converge to a solution that is closer to the global minimum as we demonstrate in Section~\ref{sec:results_distributed}. 
Second, the measurements we consider are corrupted by noise and no longer encode a \textit{desired formation} as in the formation control literature. 
Consensus-based estimators with perturbed measurements are not guaranteed to converge to a stable equilibria set in general~\cite{garin2010survey}.
We show conditions on $(\tilde\t_{ij}, \tilde\R_{ij})$ for convergence of this system in Section~\ref{sec:analysis}. 
Third, the measurements are observed once at the beginning of the estimation and not at each step in the estimation. 
This way, the noise is not fed back into the system at each time step. 

Finally, we define a few additional quantities to support the development in the next sections. 
The graph Laplacian $\mathbf{L} \in\Reals{n\times n}$ for an undirected graph $\mathcal{G}$ is constructed such that each $(i,j)$\textsuperscript{th} entry is $l_{ij}$ and 
 \begin{equation*}
 \label{eq:graph_laplacian}
 	l_{ij} = 
 	\begin{cases}
 		|\Neighs{i}| \, , 	&\text{if} \ i=j \\
 		-1\, ,				&\text{if} \ i\neq j \ \text{and} \ (i,j)\in\mathcal{E} \\
 		0\, ,				&\text{otherwise}
 	\end{cases}
 	\, .
 \end{equation*}
It is well known that for undirected graphs $\mathbf{L}$ has eigenvalues $\lambda_1 = 0 \leq \lambda_2 \leq \hdots \leq \lambda_n$~\cite{olfati2004consensus}. 
The eigenvector associated to $\lambda_1$ is in the set of vectors with equal entries, i.e., $\bm{\lambda}_1 \in \Ind_n = \{ \mathbf{a} \in \Reals{n} \mid \mathbf{a} = \alpha \One{n},\ \alpha\in\Reals{} \}$~\cite{chung1997spectral}, where $\One{n}$ represents the $n$-dimensional column vector of ones. 
This vector is then a left and right eigenvector of $\mathbf{L}$ --- as graph $\mathcal{G}$ is undirected --- and the null space of $\mathbf{L}$ is $\Null(\mathbf{L}) = \Ind_n$. 
% We additionally define the following: $\Zero{n}$ represents an $n$-dimensional column vector of zeros and $\Zero{n\times n}$ (or $\One{n\times n}$) represents an $n\times n$ matrix of zeros (or ones).
Lastly, the full rotation estimates lie in $SO(3)^n = \{ \R_1, \hdots, \R_n \mid \R_i \in SO(3) ,\, \forall i\in\mathcal{V} \}$. 
 
% The additive translation noise $\t_{ij}^{\epsilon} \sim \Neighs(\Zero{3},\mathbf{Q}_{\t_{ij}})$ is a continuous random variable with a zero-mean Gaussian distribution with covariance $\mathbf{Q}_{\t_{ij}} \in \Reals{3\times 3}$ and the rotation noise $\R_{ij}^{\epsilon} \sim Langevin(\Eye{3},\mathbf{Q}_{\R_{ij}})$ is also a continuous random variable with Langevin distribution with covariance $\mathbf{Q}_{\R_{ij}} \in \Reals{3\times 3}$. 
%Although relative rotations are commonly measured in rotation matrix form, we consider the logarithmic map of such measurements and define the relative angle-axis measurement $\tilde{\x}_{ij} = \log(\tilde\R_{ij})^{\vee}$. 

% \begin{remark} [$SE(2)$ pose variants]
% 	The algorithms and analyses presented in this paper are intended for $SE(3)$, however the results are also applicable for $SE(2)$. In fact in $2$D, both estimators from above simplify into linear systems because all rotation vectors are aligned. Thus, the update the rotation matrix is simply an affine consensus update of a scalar quantity, i.e., the yaw angle. 
% \end{remark}

\section{The \texorpdfstring{$SO(3)$}\ \ matrix Lie Group}
\label{sec:lie_group}

We make extensive use of the \textit{Lie group} $SO(3)$ in the analysis of GeoD and therefore we review several relevant properties, as well as introduce a new useful result. Both $SO(3)$ and $SE(3)$ are Lie groups which are simultaneously groups and differentiable manifolds~\cite{barfoot2017state}. Every Lie group has an associated \textit{Lie algebra} which is defined as the \textit{tangent space} of the Lie group at identity element of the group. Lie algebra often take non-trivial forms but have associated vector spaces. 

Recall that $SO(3)$ is the set of valid $3$D rotation matrices. The Lie algebra of $SO(3)$ is $so(3) = \{\mathbf{S}\in\Reals{3\times 3} \mid \mathbf{S}^T = -\mathbf{S}\}$ which is the set of skew-symmetric $3\times 3$ matrices. Define the \textit{vee} operator $(\cdot)^{\vee}: so(3) \rightarrow \Reals{3}$ as the map between the Lie algebra and the associated vector space. The inverse operation is the \textit{hat} operator $(\cdot)^{\wedge} : \Reals{3} \rightarrow so(3)$. Elements of the Lie algebra are mapped back to the group $SO(3)$ using the \textit{exponential map} $\exp(\x^{\wedge}): so(3) \rightarrow SO(3)$, which is given in closed form by the Rodrigues' rotation formula, 
\begin{equation*} \begin{small} \begin{aligned}
	\exp(\x^{\wedge}) &= \Eye{3} \, + \sin(\theta) \left(\frac{\x}{\theta}\right)^{\wedge} + (1-\cos(\theta)) {\left(\frac{\x}{\theta}\right)^{\wedge}}^2
	\, ,
\end{aligned} \end{small} \end{equation*}
where $\x\in\Reals{3}$, $\x^{\wedge} \in so(3)$, and $\theta = \norm{\x}_2$. 
The inverse mapping is the \textit{logarithmic map} $\log(\R): SO(3) \rightarrow so(3)$,
\begin{equation}
\label{eq:matrix_logarithm}
	\log(\R) = 
	\begin{cases}
		\frac{\theta}{2\sin(\theta)}(\R-\R^T)\, ,	& \text{if} \ \theta\neq 0 \\
		0\, ,							&  \text{if} \ \theta=0
	\end{cases}
	\, ,
\end{equation}
where, 
\begin{equation}
\label{eq:angle_axis_theta}
    \theta = \arccos\left(\frac{1}{2}(\tr(\R)-1)\right)
    \, ,
\end{equation} and $\tr(\cdot)$ returns the trace of a matrix. The logarithmic map, or sometimes called the matrix logarithm, is defined for $|\theta| < \pi$ to ensure the mapping is bijective. 
% Moreover, geodesic balls less than $\frac{\pi}{2}$, i.e., where $\norm{\log(\R)^{\vee}}_2 \< \frac{\pi}{2}$, are convex~\cite{moakher2002means}. 
The \textit{angle-axis} representation of the rotation matrix $\R$ is defined by the corresponding element of the vector space $\x = \theta\u = \log(\R)^{\vee}$, where $\theta=\norm{\x}_2 = \norm{\log(\R)^{\vee}}_2$ is the magnitude of the rotation about a unit axis $\u = \frac{\x}{\norm{\x}_2}$. This is the same $\theta$ defined in~\eqref{eq:angle_axis_theta} and corresponds to the geodesic distance along the manifold from the identity element to $\R$. 
Conversely, we can write the rotation matrix as a function of the angle-axis, i.e., $\R = \exp(\x^{\wedge})$.

In this paper, we are primarily interested in expressions that are functions of Lie algebra (see~\eqref{eq:rotation_estimation_system}) and thus state some general properties of the matrix logarithm on real-valued matrices. 
From properties of the matrix exponential, we know that $\exp(\mathbf{B}\A\mathbf{B}^{-1}) = \mathbf{B}\exp(\A)\mathbf{B}^{-1}$ for any $\A\in\Reals{d\times d}$ and invertible $\mathbf{B}\in \Reals{d\times d}$. 
If the logarithmic map of this generic $\A$ exists such that $\log(\exp(\A)) = \exp(\log(\A)) = \A$, then we can also write $\log(\mathbf{B} \A \mathbf{B}^{-1}) = \mathbf{B} \log(\A) \mathbf{B}^{-1}$. 
For rotation matrices $\R, \mathbf{Q} \in SO(d)$, the equivalent expression is, 
\begin{equation}
\label{eq:log_products_result}
	\log(\mathbf{Q} \R \mathbf{Q}^T) = \mathbf{Q} \log(\R) \mathbf{Q}^T
	\, .
\end{equation}
Note that in general $\log(\R^T\mathbf{Q}) \neq \log(\R) - \log(\mathbf{Q})$, however this statement is indeed true for rotations in $SO(2)$~\cite{thunberg2011distributed}. 
Additionally, $\log(\mathbf{R}^{T}) = \log(\mathbf{R})^{T} = -\log(\mathbf{R})$. 
Finally, we state the following lemma for several well-known results involving skew-symmetric matrices~\cite{lee2010geometric, lee2010control}. 
% The proofs are provided in Appendix~\ref{app:lie_group} for completeness. 
\begin{lemma}
\label{lem:skew_symmetric}
	It follows that for $\mathbf{x}, \mathbf{y} \in\Reals{3}$, and $\mathbf{A} \in \Reals{3\times 3}$ that, 
	\begin{equation}
	\label{eq:dot_product_trace}
		\mathbf{x}^T\mathbf{y} = \frac{1}{2} \tr \left( {\mathbf{x}^{\wedge}}^T\mathbf{y}^{\wedge} \right)
		\, ,
	\end{equation}
	\begin{equation}
	\label{eq:trace_rotation_skew_product}
		\tr(\mathbf{A} \mathbf{x}^{\wedge}) = -\mathbf{x}^T(\mathbf{A} - \mathbf{A}^T)^{\vee}
		\, .
	\end{equation}
\end{lemma}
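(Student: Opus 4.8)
The plan is to establish the two identities separately by direct computation, exploiting the explicit coordinate form of the hat operator. Recall that for $\mathbf{x}=(x_1,x_2,x_3)^T\in\Reals{3}$ we have
\[
\mathbf{x}^{\wedge} = \begin{pmatrix} 0 & -x_3 & x_2 \\ x_3 & 0 & -x_1 \\ -x_2 & x_1 & 0 \end{pmatrix},
\]
and similarly for $\mathbf{y}^{\wedge}$. For the first identity~\eqref{eq:dot_product_trace}, I would simply multiply ${\mathbf{x}^{\wedge}}^T = -\mathbf{x}^{\wedge}$ by $\mathbf{y}^{\wedge}$ and read off the diagonal entries of the product; summing them gives $2(x_1y_1 + x_2y_2 + x_3y_3) = 2\,\mathbf{x}^T\mathbf{y}$, so dividing by two yields the claim. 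This is a three-line calculation with no subtlety beyond bookkeeping.

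For the second identity~\eqref{eq:trace_rotation_skew_product}, the key observation is that $\tr(\mathbf{A}\mathbf{x}^{\wedge})$ depends on $\mathbf{A}$ only through its skew-symmetric part: since $\mathbf{x}^{\wedge}$ is skew-symmetric, the symmetric part of $\mathbf{A}$ contributes zero to the trace (the trace of a product of a symmetric and a skew-symmetric matrix vanishes). Writing $\mathbf{A} = \tfrac12(\mathbf{A}+\mathbf{A}^T) + \tfrac12(\mathbf{A}-\mathbf{A}^T)$, I would therefore reduce to $\tr\bigl(\tfrac12(\mathbf{A}-\mathbf{A}^T)\mathbf{x}^{\wedge}\bigr)$. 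Now set $\mathbf{y} = (\mathbf{A}-\mathbf{A}^T)^{\vee}$, so that $\mathbf{y}^{\wedge} = \mathbf{A}-\mathbf{A}^T$ by definition of the vee/hat correspondence on skew-symmetric matrices. Then the expression becomes $\tfrac12\tr(\mathbf{y}^{\wedge}\mathbf{x}^{\wedge})$, and applying~\eqref{eq:dot_product_trace} — using ${\mathbf{y}^{\wedge}}^T = -\mathbf{y}^{\wedge}$ so that $\tr(\mathbf{y}^{\wedge}\mathbf{x}^{\wedge}) = -\tr({\mathbf{y}^{\wedge}}^T\mathbf{x}^{\wedge}) = -2\,\mathbf{y}^T\mathbf{x}$ — gives $-\mathbf{y}^T\mathbf{x} = -\mathbf{x}^T(\mathbf{A}-\mathbf{A}^T)^{\vee}$, which is exactly the claimed identity.

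Alternatively, if one prefers a self-contained route for~\eqref{eq:trace_rotation_skew_product}, one can expand $\tr(\mathbf{A}\mathbf{x}^{\wedge})$ directly in terms of the entries $a_{k\ell}$ of $\mathbf{A}$: only the off-diagonal entries of $\mathbf{x}^{\wedge}$ survive, and collecting coefficients of $x_1, x_2, x_3$ yields $-(a_{32}-a_{23})x_1 - (a_{13}-a_{31})x_2 - (a_{21}-a_{12})x_3$, which is precisely $-\mathbf{x}^T(\mathbf{A}-\mathbf{A}^T)^{\vee}$ once one notes $(\mathbf{A}-\mathbf{A}^T)^{\vee} = (a_{32}-a_{23},\, a_{13}-a_{31},\, a_{21}-a_{12})^T$.

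There is no real obstacle here; the only thing to be careful about is the sign and index conventions in the definition of $(\cdot)^{\wedge}$ and $(\cdot)^{\vee}$, so that the factor of $\tfrac12$ and the minus sign come out consistently. I would prove~\eqref{eq:dot_product_trace} first and then reuse it for~\eqref{eq:trace_rotation_skew_product} via the symmetric/skew-symmetric splitting, since that keeps the argument short and makes the logical dependence transparent.
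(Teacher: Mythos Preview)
Your proposal is correct and matches the paper's approach: the paper treats Lemma~\ref{lem:skew_symmetric} as well-known (citing \cite{lee2010geometric,lee2010control}) and does not print a proof, but the authors' own (commented-out) derivations proceed exactly as you do---direct diagonal computation for~\eqref{eq:dot_product_trace}, and for~\eqref{eq:trace_rotation_skew_product} extracting the skew-symmetric part of $\mathbf{A}$ via trace manipulations and then invoking~\eqref{eq:dot_product_trace}. Your symmetric/skew-symmetric splitting is just a slightly cleaner packaging of the same idea.
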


The rate of change of a rotation matrix is well known given a vector of rotation rates $\bm\omega \in\Reals{3}$: $\dot\R = \R \bm\omega^{\wedge}$ and $\dot\R = \bm\omega^{\wedge}\R$ for a body-frame $\bm\omega$ and global-frame $\bm\omega$, respectively. Unfortunately, finding a closed-form expression for the time rate of change of $\log(\R)$ is challenging because it requires differentiating an infinite series representation of the matrix logarithm (See 3.3 of~\cite{dieci1996computational}) or, for $SO(3)$, differentiating~\eqref{eq:matrix_logarithm} with respect to time --- leading to a cumbersome expression in both cases. Below we state a theorem that will be central to the Lyapunov analysis of our algorithm in Section~\ref{sec:analysis_rotation_matrix}. The theorem allows us to compactly express the time derivative of the squared geodesic distance, without explicitly requiring the time derivative of the matrix logarithm itself. 
% --- in fact we provide a closed form expression for $\frac{d}{dt}\log(\R)$ in Lemma~\ref{lem:logR_time_derivative} in Appendix~\ref{app:lie_group} based on the inverse Rodrigues' formula in~\eqref{eq:matrix_logarithm}. 
%Notice that the rate of change of the angle $\theta$ can be neatly expressed by differentiating~\eqref{eq:angle_axis_theta}, 
%\begin{equation}
%\label{eq:theta_dot}
%	\dot\theta = \frac{d}{dt} \|\log(\R)^{\vee}\|_2 = \frac{-\tr(\dot\R)}{2 \sin(\theta)}
%	\, .
%\end{equation}
 % Here, we present a theorem for the time derivative of the square of the norm of the matrix logarithm. 
\begin{theorem}%[Time derivative of the squared norm of the matrix logarithm]
\label{thm:time_derivative_dot_matrix_logarithm}
	For a rotation matrix $\R\in SO(3)$, if
% 	$(\R^T\dot\R)\in so(3)$ and
	$\norm{\log(\R)^{\vee}}_2 < \pi$, then, 
	\begin{equation}
		\frac{d}{dt} \frac{1}{2} \|\log(\R)^{\vee}\|_2^2 = {\log(\R)^{\vee}}^T (\R^T\dot\R)^{\vee}
		\, .
	\end{equation}
% 	Additionally, if $\norm{\log(\R)^{\vee}}=0$, then $\frac{d}{dt} \frac{1}{2} \|\log(\R)^{\vee}\|_2^2 = 0$. 
\end{theorem}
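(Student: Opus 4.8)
The plan is to reduce everything to a single scalar quantity, $\theta = \|\log(\R)^\vee\|_2 = \arccos\!\big(\tfrac{1}{2}(\tr(\R)-1)\big)$, and to differentiate $\tfrac12\theta^2$ directly using the trace formula for $\theta$. Since $\tfrac12\|\log(\R)^\vee\|_2^2 = \tfrac12\theta^2$, the chain rule gives $\tfrac{d}{dt}\tfrac12\theta^2 = \theta\,\dot\theta$, so the first task is to compute $\dot\theta$. From $\cos\theta = \tfrac12(\tr(\R)-1)$ we get $-\sin\theta\,\dot\theta = \tfrac12\tr(\dot\R)$, hence $\dot\theta = -\tfrac{1}{2\sin\theta}\tr(\dot\R)$ whenever $\sin\theta \neq 0$. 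Substituting $\dot\R = \R(\R^T\dot\R)$ and writing $\bm\omega^\wedge := (\R^T\dot\R)$ (which is skew-symmetric since $\R^T\R = \Eye{3}$ implies $\R^T\dot\R + \dot\R^T\R = 0$), we have $\tr(\dot\R) = \tr(\R\bm\omega^\wedge)$. Now invoke equation~\eqref{eq:trace_rotation_skew_product} of Lemma~\ref{lem:skew_symmetric} with $\A = \R$: $\tr(\R\bm\omega^\wedge) = -\bm\omega^T(\R - \R^T)^\vee$.

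The next step is to identify $(\R - \R^T)^\vee$ in terms of $\log(\R)^\vee$. By~\eqref{eq:matrix_logarithm}, for $\theta \neq 0$ we have $\log(\R) = \tfrac{\theta}{2\sin\theta}(\R - \R^T)$, so $(\R - \R^T)^\vee = \tfrac{2\sin\theta}{\theta}\log(\R)^\vee$. Combining the last three displays,
\begin{equation*}
	\theta\,\dot\theta = \theta\cdot\Big(-\frac{1}{2\sin\theta}\Big)\cdot\Big(-\bm\omega^T\,\frac{2\sin\theta}{\theta}\log(\R)^\vee\Big) = \bm\omega^T\log(\R)^\vee = {\log(\R)^\vee}^T(\R^T\dot\R)^\vee,
\end{equation*}
which is exactly the claimed identity. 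The hypothesis $\theta < \pi$ guarantees that $\log(\R)$ is well-defined and bijective, and also that $\sin\theta > 0$ on the open interval $(0,\pi)$, so the division by $\sin\theta$ is legitimate away from $\theta = 0$.

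The main obstacle is the degenerate case $\theta = 0$ (equivalently $\R = \Eye{3}$), where the formula for $\dot\theta$ above has a $0/0$ form and $\log(\R)^\vee = \Zero{3}$. Here the right-hand side of the theorem is manifestly zero, so it suffices to show $\tfrac{d}{dt}\tfrac12\theta^2 = 0$ at such an instant. I would argue this by a limiting/continuity argument: either observe that $\tfrac12\theta^2$ attains a local minimum (value $0$) at $\R = \Eye{3}$ and is differentiable there (using the known smoothness of the squared geodesic distance on the open ball of radius $\pi$), forcing its derivative to vanish; or expand $\theta^2$ to second order near identity, noting $\tfrac12\theta^2 \approx \tfrac14\|\R - \R^T\|_F^2/2 = O(\|\R-\Eye{3}\|^2)$, so $\tfrac{d}{dt}\tfrac12\theta^2$ vanishes whenever $\R\to\Eye{3}$. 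Either way both sides agree at $\theta = 0$, completing the proof. A careful write-up should also note that $\theta$ is a smooth function of $\R$ on $\{\theta<\pi\}$ away from $\theta=0$ (since $\arccos$ is smooth on $(-1,1)$), which justifies the chain-rule manipulations, and that $\dot\R$ is assumed to exist along whatever trajectory $\R(t)$ is under consideration.
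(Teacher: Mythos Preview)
Your proof is correct and follows essentially the same route as the paper: differentiate $\tfrac12\theta^2$ via $\cos\theta = \tfrac12(\tr(\R)-1)$, rewrite $\tr(\dot\R)=\tr(\R(\R^T\dot\R))$, apply~\eqref{eq:trace_rotation_skew_product}, and then identify $(\R-\R^T)^\vee$ with $\tfrac{2\sin\theta}{\theta}\log(\R)^\vee$ via~\eqref{eq:matrix_logarithm}. Your treatment is in fact slightly more careful than the paper's, since you explicitly address the degenerate case $\theta=0$, which the paper's proof silently skips.
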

\begin{proof}
See Appendix~\ref{app:lie_group}. 
\end{proof}

\section{Analysis}
\label{sec:analysis}

In this section we prove that the dynamics described by the system~\eqref{eq:estimation_system} will converge to an equilibrium if the noisy measurements $(\tilde\t_{ij}, \tilde\R_{ij})$ are \textit{consistent}. 
Before we begin the analysis, we introduce our taxonomy of measurement consistency with three definitions. 
\begin{definition}[Global consistency]
\label{def:global_consistency}
The measurements are \textit{globally consistent} if $\tilde{\R}_{ij} = \tilde{\R}_{i \ell} \tilde{\R}_{\ell j}$ and $\tilde\t_{ij} = \tilde\t_{i\ell} + \tilde\R_{i\ell}\tilde\t_{\ell j}$, for all $i,\ell,j \in\mathcal{V}$. 
In other words, the product of relative pose measurements along any cycle in the graph must return the identity element. 
\end{definition}
\begin{definition}[Pairwise consistency]
The measurements are \textit{pairwise consistent} if $\tilde{\R}_{ij} = \tilde{\R}_{ji}^T$ and $\tilde\t_{ij} = \tilde\R_{ij}\tilde\t_{ji}$, $\forall \ (i,j) \in\mathcal{E}$. 
\end{definition}
\begin{definition}[Minimal consistency]
\label{def:min_consistency}
The measurements are \textit{minimally consistent} if $\sum_{i\in\mathcal{V}}\sum_{j\in\Neighs{i}} \log(\tilde\R_{ij})^{\vee} = \Zero{3}$ and $\sum_{i\in\mathcal{V}}\sum_{j\in\Neighs{i}} \tilde\t_{ij} = \Zero{3}$. 
\end{definition}

Global consistency is the most restrictive form of consistency. It often appears in formation control literature where the desired formations are assumed to be provided to the system as inputs and represent a valid, desired formation~\cite{montijano2016vision}. Global consistency is almost always violated with real-world noisy relative measurements. 
Pairwise consistency is less restrictive than global consistency because it only requires consistency among each undirected edge as opposed to all cycles in the graph. This condition is easy to satisfy in practice with a distributed system because the raw measurements can be shared and averaged along each edge in a single step (see Section~\ref{sec:results} for more details). 
Minimal consistency is physically the least restrictive definition as it encompasses both global and pairwise consistency. However, achieving minimal consistency within a distributed network is more complicated than pairwise consistency. 
%We only make use of the translation component of minimal consistency in this paper for our coupled translation result, but the rotation component is used in~\cite{cristofalo2019consensus}. 

In Section~\ref{sec:analysis_rotation_matrix}, we first prove that \textit{pairwise consistency} is a sufficient condition for convergence of the rotation-only estimation system~\eqref{eq:rotation_estimation_system} using LaSalle's Invariance Principle. 
We then show in Section~\ref{sec:analysis_coupled} that \textit{minimal consistency} is a sufficient condition for the equilibria of the coupled translation system~\eqref{eq:translation_estimation_system} to be input-to-state stable. Note that this system is a function of the time-varying rotation estimates. Finally, we show that the coupled system, with rotations and translations simultaneously evolving, converges to an equilibrium. Note that equilibria of (\ref{eq:estimation_system}) are local minima of the pose graph objective function in Problem~\ref{prob:main}.
%We note that much of the analysis in this section is performed on systems that are comprised of all agent's estimates, but the distributed estimation algorithm does not need this global information. The stability of the translation-only equilibria is not discussed in this work as there has been extensive research on this topic in literature~\cite{garin2010survey}. In fact, the necessary and sufficient conditions on the noisy translation measurements are shown in~\cite{cristofalo2019consensus}. We instead focus on the distributed $SO(3)$ estimation as well as the coupled translation estimation --- the two combined yielding the complete distributed $SE(3)$ estimate. 

\subsection{Equilibria of the \texorpdfstring{$SO(3)$}\ \ Rotation Matrix Estimate}
\label{sec:analysis_rotation_matrix}

We now show that pairwise consistent relative rotation measurements, i.e., $\tilde\R_{ij} = \tilde\R_{ji}^T,\ \forall (i,j)\in\mathcal{E}$, implies the convergence of the rotation estimate~\eqref{eq:rotation_estimation_system}, i.e., $\dot\R_{i} = \Zero{3\times 3},\ \forall i\in\mathcal{V}$. 
For the analysis, let $\R_{ij} = \R_i^T \R_j$ represent the rotation product and $\bm\omega_{ij}^{\wedge}$ the \textit{relative control components}, 
\begin{equation} \begin{aligned}
\label{eq:relative_control_component}
	\bm{\omega}_{ij}^{\wedge} &= \log(\R_i^T\R_j \tilde\R_{ij}^T) = \log(\R_{ij}\tilde\R_{ij}^T) \\
	\bm{\omega}_{ji}^{\wedge} &= \log(\R_j^T\R_i \tilde\R_{ji}^T) = \log(\R_{ij}^T\tilde\R_{ij})
	\, .
\end{aligned} \end{equation}
so that $\bm\omega_i^{\wedge} = \sum_{j\in\Neighs{i}} \bm{\omega}_{ij}^{\wedge}$; noting that $\bm\omega_{ij}^{\wedge} \neq -\bm\omega_{ji}^{\wedge}$. 

We introduce the following Lemmas to assist in the rotation estimate convergence result in Theorem~\ref{thm:geodesic} that use properties of the matrix logarithm and the relative control components from~\eqref{eq:relative_control_component}. For clarity, the proofs of these helper Lemmas are shown in Appendix~\ref{app:analysis_rotation_matrix}. 

\begin{lemma}
\label{lem:rotated_control}
	For any $\mathbf{Q} \in \{ \R_{ij}, \tilde\R_{ij} \}$, if the rotations measurements are pairwise consistent, then, 
\begin{equation}
	\mathbf{Q}^T\bm\omega_{ij}^{\wedge}\mathbf{Q} = -\bm\omega_{ji}^{\wedge}
	\, .
\end{equation}
\end{lemma}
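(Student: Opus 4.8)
The plan is to reduce the claim to three ingredients that are already available: the conjugation identity \eqref{eq:log_products_result} for the matrix logarithm, the transpose rule $\log(\R^T)=-\log(\R)$, and the pairwise consistency hypothesis $\tilde\R_{ij}=\tilde\R_{ji}^T$. As a preliminary observation I would record that $\R_{ij}^T=\R_j^T\R_i=\R_{ji}$, and that under pairwise consistency $\bm\omega_{ji}^\wedge=\log(\R_{ij}^T\tilde\R_{ij})$, exactly as written in \eqref{eq:relative_control_component}. I would then treat the two choices of $\mathbf{Q}$ in turn and show they collapse to the same computation.

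For $\mathbf{Q}=\tilde\R_{ij}$, apply \eqref{eq:log_products_result} with conjugating rotation $\tilde\R_{ij}^T\in SO(3)$ and argument $\R_{ij}\tilde\R_{ij}^T$:
\[
\tilde\R_{ij}^T\,\bm\omega_{ij}^\wedge\,\tilde\R_{ij}=\tilde\R_{ij}^T\log(\R_{ij}\tilde\R_{ij}^T)\tilde\R_{ij}=\log\bigl(\tilde\R_{ij}^T\R_{ij}\tilde\R_{ij}^T\tilde\R_{ij}\bigr)=\log(\tilde\R_{ij}^T\R_{ij}).
\]
Using the transpose rule, $\log(\tilde\R_{ij}^T\R_{ij})=-\log\bigl((\tilde\R_{ij}^T\R_{ij})^T\bigr)=-\log(\R_{ij}^T\tilde\R_{ij})=-\bm\omega_{ji}^\wedge$, which is the desired identity. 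For $\mathbf{Q}=\R_{ij}$, the same step with conjugating rotation $\R_{ij}^T$ gives $\R_{ij}^T\log(\R_{ij}\tilde\R_{ij}^T)\R_{ij}=\log(\R_{ij}^T\R_{ij}\tilde\R_{ij}^T\R_{ij})=\log(\tilde\R_{ij}^T\R_{ij})$, i.e. literally the same quantity as before, so it again equals $-\bm\omega_{ji}^\wedge$. Both cases therefore follow from the single identity $\log(\tilde\R_{ij}^T\R_{ij})=-\bm\omega_{ji}^\wedge$.

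The only point requiring care — more a bookkeeping matter than a genuine obstacle — is ensuring that the matrix logarithms appearing above are well defined, i.e. that $\R_{ij}\tilde\R_{ij}^T$ and its conjugates/transposes $\tilde\R_{ij}^T\R_{ij}$ and $\R_{ij}^T\tilde\R_{ij}$ have rotation angle strictly less than $\pi$, so that \eqref{eq:log_products_result} and the transpose rule apply; these three matrices share the same angle, and the bound holds under the standing well-posedness assumptions on the control \eqref{eq:estimation_control_rotation}. Finally, I would note that, since $\det\mathbf{Q}=+1$, conjugation commutes with the hat map, so Lemma~\ref{lem:rotated_control} is equivalent to the vector statement $\mathbf{Q}^T\bm\omega_{ij}=-\bm\omega_{ji}$, which is the form that will be convenient in the proof of Theorem~\ref{thm:geodesic}.
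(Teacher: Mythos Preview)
Your proof is correct and follows essentially the same route as the paper's: both apply the conjugation identity \eqref{eq:log_products_result} to reduce each choice of $\mathbf{Q}$ to $\log(\tilde\R_{ij}^T\R_{ij})$, and then use $\log(\R^T)=-\log(\R)$ together with pairwise consistency to identify this with $-\bm\omega_{ji}^\wedge$. Your added remarks on well-definedness of the logarithm and on the equivalent vector form $\mathbf{Q}^T\bm\omega_{ij}=-\bm\omega_{ji}$ are sound and do not depart from the paper's argument.
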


\begin{lemma}
\label{lem:dot_product_equivalence_1}
	For a pair of relative control components $\bm\omega_{ij}$ and $\bm\omega_{ik}$ as well as rotation matrix $\tilde\R_{ij}$, the following are equivalent, 
	\begin{equation}
		\bm\omega_{ij}^T \left( \tilde\R_{ij} \bm\omega_{ik}^{\wedge} \tilde\R_{ij}^T \right)^{\vee} = -\bm\omega_{ji}^T\bm\omega_{ik}
		\, .
	\end{equation}
\end{lemma}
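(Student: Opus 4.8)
The plan is to collapse the left-hand side to a single Euclidean inner product using the $SO(3)$ conjugation identity, and then invoke Lemma~\ref{lem:rotated_control}.

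First I would recall the standard fact that for any rotation $\mathbf{Q}\in SO(3)$ and vector $\mathbf{x}\in\Reals{3}$ one has $(\mathbf{Q}\mathbf{x})^{\wedge} = \mathbf{Q}\mathbf{x}^{\wedge}\mathbf{Q}^T$, equivalently $\left(\mathbf{Q}\mathbf{x}^{\wedge}\mathbf{Q}^T\right)^{\vee} = \mathbf{Q}\mathbf{x}$ (this is the vector-level companion of \eqref{eq:log_products_result}). Applying it with $\mathbf{Q} = \tilde\R_{ij}$ and $\mathbf{x} = \bm\omega_{ik}$ gives $\left(\tilde\R_{ij}\bm\omega_{ik}^{\wedge}\tilde\R_{ij}^T\right)^{\vee} = \tilde\R_{ij}\bm\omega_{ik}$, so the left-hand side of the claim becomes $\bm\omega_{ij}^T\tilde\R_{ij}\bm\omega_{ik} = \left(\tilde\R_{ij}^T\bm\omega_{ij}\right)^T\bm\omega_{ik}$.

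Next I would extract from Lemma~\ref{lem:rotated_control}, applied with $\mathbf{Q} = \tilde\R_{ij}$, the relation $\tilde\R_{ij}^T\bm\omega_{ij}^{\wedge}\tilde\R_{ij} = -\bm\omega_{ji}^{\wedge}$, and then apply the vee operator, using the conjugation identity once more (now with $\mathbf{Q} = \tilde\R_{ij}^T$), to obtain the vector-level statement $\tilde\R_{ij}^T\bm\omega_{ij} = -\bm\omega_{ji}$. Substituting this into the expression above yields $\left(\tilde\R_{ij}^T\bm\omega_{ij}\right)^T\bm\omega_{ik} = -\bm\omega_{ji}^T\bm\omega_{ik}$, which is exactly the asserted equality. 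Note that this step is where pairwise consistency enters, since Lemma~\ref{lem:rotated_control} requires it.

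The argument is short and essentially computational; the only thing demanding care is the bookkeeping of transposes and the direction of conjugation, i.e., making sure the matrix $\tilde\R_{ij}\bm\omega_{ik}^{\wedge}\tilde\R_{ij}^T$ appearing here is matched to the $\mathbf{Q}^T(\cdot)^{\wedge}\mathbf{Q}$ pattern of Lemma~\ref{lem:rotated_control} after swapping $\tilde\R_{ij}$ and $\tilde\R_{ij}^T$. If one prefers to avoid invoking the conjugation identity, the same conclusion follows by rewriting the inner product as a trace via \eqref{eq:dot_product_trace}, cycling the trace to expose $\tilde\R_{ij}^T{\bm\omega_{ij}^{\wedge}}^T\tilde\R_{ij} = \bm\omega_{ji}^{\wedge}$ (combining skew-symmetry with Lemma~\ref{lem:rotated_control}), and converting back with \eqref{eq:dot_product_trace}; I expect the main, and still minor, obstacle to be simply selecting the cleanest of these equivalent routes rather than any real difficulty.
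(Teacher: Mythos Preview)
Your argument is correct and essentially matches the paper's proof: both rely on Lemma~\ref{lem:rotated_control} with $\mathbf{Q}=\tilde\R_{ij}$ to turn $\bm\omega_{ij}$ into $-\bm\omega_{ji}$. The only cosmetic difference is that your primary route uses the vector-level conjugation identity $(\mathbf{Q}\mathbf{x})^{\wedge}=\mathbf{Q}\mathbf{x}^{\wedge}\mathbf{Q}^T$ directly, whereas the paper passes through the trace via~\eqref{eq:dot_product_trace} and cycles---precisely the alternative you already describe in your last paragraph.
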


\begin{lemma}
\label{lem:dot_product_equivalence_2}
	For a pair of relative control components $\bm\omega_{ij}$ and $\bm\omega_{ik}$ as well as rotation matrices $\R_{ij}$ and $\tilde\R_{ij}$, the following dot products are equivalent, 
	\begin{equation}
		\bm\omega_{ij}^T \left( \tilde\R_{ij}\R_{ij}^T \bm\omega_{ik}^{\wedge} \R_{ij}\tilde\R_{ij}^T \right)^{\vee} = \bm\omega_{ij}^T\bm\omega_{ik}
		\, .
	\end{equation}
\end{lemma}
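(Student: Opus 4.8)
The plan is to reduce the statement to a purely algebraic identity about a single rotation matrix and its logarithm, and then discharge it with the trace identities of Lemma~\ref{lem:skew_symmetric}. Write $\Q = \R_{ij}\tilde\R_{ij}^T \in SO(3)$. By the definition of the relative control components in~\eqref{eq:relative_control_component} we have $\bm\omega_{ij}^{\wedge} = \log(\Q)$, and also $\tilde\R_{ij}\R_{ij}^T = (\R_{ij}\tilde\R_{ij}^T)^T = \Q^T$. Hence the inner conjugation in the claim is $\tilde\R_{ij}\R_{ij}^T \bm\omega_{ik}^{\wedge} \R_{ij}\tilde\R_{ij}^T = \Q^T \bm\omega_{ik}^{\wedge} \Q$, and the Lemma is equivalent to $\bm\omega_{ij}^T(\Q^T\bm\omega_{ik}^{\wedge}\Q)^{\vee} = \bm\omega_{ij}^T\bm\omega_{ik}$. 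It is worth noting up front that, in contrast to Lemmas~\ref{lem:rotated_control}--\ref{lem:dot_product_equivalence_1}, this identity requires no assumption on $\tilde\R_{ij}$ (in particular, no pairwise consistency): it uses only that $\bm\omega_{ij}^{\wedge}$ is the matrix logarithm of $\Q$.

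The key intermediate fact is that $\bm\omega_{ij}^{\wedge}$ commutes with $\Q$, i.e. $\Q^T\bm\omega_{ij}^{\wedge}\Q = \bm\omega_{ij}^{\wedge}$, equivalently $\Q\bm\omega_{ij}^{\wedge}\Q^T = \bm\omega_{ij}^{\wedge}$. This follows directly from~\eqref{eq:log_products_result}: applying that identity with conjugator $\Q^T$ and argument $\Q$ gives $\Q^T\log(\Q)\Q = \log(\Q^T\Q\Q) = \log(\Q)$, since $\Q^T\Q\Q = \Q$. This step implicitly uses that the logarithm is taken in its principal branch, $\norm{\log(\Q)^{\vee}}_2 = \norm{\bm\omega_{ij}}_2 < \pi$, consistent with the standing assumption under which the relative control components are defined.

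With this in hand, finish using Lemma~\ref{lem:skew_symmetric}. Let $\mathbf{z}^{\wedge} = \Q^T\bm\omega_{ik}^{\wedge}\Q$, which is skew-symmetric since $\Q$ is orthogonal. Then, using~\eqref{eq:dot_product_trace}, ${\bm\omega_{ij}^{\wedge}}^{T} = -\bm\omega_{ij}^{\wedge}$, the cyclic property of the trace, and the commutation fact to collapse $\Q\bm\omega_{ij}^{\wedge}\Q^T$ to $\bm\omega_{ij}^{\wedge}$,
\begin{align*}
\bm\omega_{ij}^T\mathbf{z}
&= \tfrac12\tr\left({\bm\omega_{ij}^{\wedge}}^{T}\mathbf{z}^{\wedge}\right)
= -\tfrac12\tr\left(\bm\omega_{ij}^{\wedge}\,\Q^T\bm\omega_{ik}^{\wedge}\Q\right) \\
&= -\tfrac12\tr\left(\Q\bm\omega_{ij}^{\wedge}\Q^T\,\bm\omega_{ik}^{\wedge}\right)
= -\tfrac12\tr\left(\bm\omega_{ij}^{\wedge}\bm\omega_{ik}^{\wedge}\right)
= \bm\omega_{ij}^T\bm\omega_{ik},
\end{align*}
where the last equality is~\eqref{eq:dot_product_trace} again (after rewriting $\bm\omega_{ij}^{\wedge} = -{\bm\omega_{ij}^{\wedge}}^{T}$). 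Alternatively, one can bypass the final trace manipulation by invoking the standard adjoint identity $(\Q^T\bm\omega_{ik}^{\wedge}\Q)^{\vee} = \Q^T\bm\omega_{ik}$ together with $\Q\bm\omega_{ij} = \bm\omega_{ij}$ — a rotation fixes its own axis, since $\bm\omega_{ij}^{\wedge}\bm\omega_{ij} = \Zero{3}$ annihilates the non-identity terms of Rodrigues' formula — so that $\bm\omega_{ij}^T(\Q^T\bm\omega_{ik}^{\wedge}\Q)^{\vee} = (\Q\bm\omega_{ij})^T\bm\omega_{ik} = \bm\omega_{ij}^T\bm\omega_{ik}$.

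The main obstacle is simply pinning down the commutation $\Q^T\bm\omega_{ij}^{\wedge}\Q = \bm\omega_{ij}^{\wedge}$ cleanly and ensuring the logarithm lies in the principal branch so that~\eqref{eq:log_products_result} applies; once that is in place, the rest is bookkeeping with the trace identities already established in Lemma~\ref{lem:skew_symmetric}.
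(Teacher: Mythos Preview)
Your proof is correct and slightly more direct than the paper's. The paper applies Lemma~\ref{lem:rotated_control} twice --- first with $\tilde\R_{ij}$ to turn ${\bm\omega_{ij}^{\wedge}}^T$ into $-{\bm\omega_{ji}^{\wedge}}^T$, then with $\R_{ij}$ (in the $j\!\leftrightarrow\! i$ direction) to turn it back into ${\bm\omega_{ij}^{\wedge}}^T$ --- and so routes the computation through the opposite-edge component $\bm\omega_{ji}$. That detour implicitly uses pairwise consistency, since Lemma~\ref{lem:rotated_control} is stated under that hypothesis. You instead collapse the two conjugations into a single conjugation by $\Q=\R_{ij}\tilde\R_{ij}^T$ and observe directly from~\eqref{eq:log_products_result} that $\Q$ commutes with $\bm\omega_{ij}^{\wedge}=\log(\Q)$. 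This is cleaner, avoids $\bm\omega_{ji}$ entirely, and --- as you note --- shows the identity holds without any consistency assumption on the measurements. The paper's route has the minor advantage of reusing Lemma~\ref{lem:rotated_control} so the two dot-product lemmas have parallel proofs, but your argument isolates the actual mechanism (a rotation fixes its own axis) more transparently.
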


\begin{lemma}
\label{lem:relative_control_product}
	For $\bar\R = \R_{ij}\tilde\R_{ij}^T$, the product $\bar\R^T \dot{\bar\R}$ is, 
	\begin{equation}
		 \bar\R^T \dot{\bar\R} = 
		- \tilde\R_{ij}\R_{ij}^T \bm\omega_i^{\wedge} \R_{ij}\tilde\R_{ij}^T + \tilde\R_{ij}\bm\omega_j^{\wedge}\tilde\R_{ij}^T \in so(3)
		\, .
	\end{equation}
\end{lemma}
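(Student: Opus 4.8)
The plan is to differentiate $\bar\R = \R_{ij}\tilde\R_{ij}^T = \R_i^T\R_j\tilde\R_{ij}^T$ directly, using that $\tilde\R_{ij}$ is a constant measurement (the noise is observed once at the start, per the model). Since $\tilde\R_{ij}$ does not depend on time, $\dot{\bar\R} = \dot{\R_{ij}}\,\tilde\R_{ij}^T$, so it suffices to compute $\dot{\R_{ij}}$ where $\R_{ij} = \R_i^T\R_j$. Applying the product rule and the kinematics~\eqref{eq:estimation_dynamics}, namely $\dot\R_i = \R_i\bm\omega_i^{\wedge}$ (hence $\dot{\R_i^T} = -\bm\omega_i^{\wedge}\R_i^T$ by skew-symmetry of $\bm\omega_i^{\wedge}$) and $\dot\R_j = \R_j\bm\omega_j^{\wedge}$, I get
\begin{equation*}
  \dot{\R_{ij}} = \dot{\R_i^T}\R_j + \R_i^T\dot\R_j = -\bm\omega_i^{\wedge}\R_i^T\R_j + \R_i^T\R_j\bm\omega_j^{\wedge} = -\bm\omega_i^{\wedge}\R_{ij} + \R_{ij}\bm\omega_j^{\wedge}.
\end{equation*}

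Next I would assemble $\bar\R^T\dot{\bar\R}$. Write $\bar\R^T = \tilde\R_{ij}\R_{ij}^T$ and $\dot{\bar\R} = (-\bm\omega_i^{\wedge}\R_{ij} + \R_{ij}\bm\omega_j^{\wedge})\tilde\R_{ij}^T$. Multiplying and using $\R_{ij}^T\R_{ij} = \Eye{3}$ to cancel the interior factors in the second term yields
\begin{equation*}
  \bar\R^T\dot{\bar\R} = -\tilde\R_{ij}\R_{ij}^T\bm\omega_i^{\wedge}\R_{ij}\tilde\R_{ij}^T + \tilde\R_{ij}\bm\omega_j^{\wedge}\tilde\R_{ij}^T,
\end{equation*}
which is exactly the claimed expression. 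The final assertion that this lies in $so(3)$ is immediate: $\bar\R \in SO(3)$ (it is a product of rotation matrices), and for any smooth curve in $SO(3)$ the quantity $\bar\R^T\dot{\bar\R}$ is skew-symmetric — differentiate $\bar\R^T\bar\R = \Eye{3}$ to get $\dot{\bar\R}^T\bar\R + \bar\R^T\dot{\bar\R} = \Zero{3\times3}$. Alternatively, one can observe directly that each of the two terms is a conjugation of a skew-symmetric matrix ($\bm\omega_i^{\wedge}$ and $\bm\omega_j^{\wedge}$, respectively) by an orthogonal matrix, and conjugation by an orthogonal matrix preserves skew-symmetry.

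There is no real obstacle here; the only point requiring care is the bookkeeping of which factor of $\bar\R^T = \tilde\R_{ij}\R_{ij}^T$ cancels against which factor of $\dot{\bar\R}$, and correctly tracking the sign and transpose when differentiating $\R_i^T$. It is worth being explicit that $\bm\omega_i^{\wedge}$ here denotes the full control $\sum_{j\in\Neighs{i}}\bm\omega_{ij}^{\wedge}$ from~\eqref{eq:rotation_estimation_system}, not a single relative component, so that the identity is stated at the level of the aggregated angular velocities; this matches how the lemma will be used in the Lyapunov argument of Section~\ref{sec:analysis_rotation_matrix}.
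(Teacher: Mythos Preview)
Your proof is correct and follows essentially the same approach as the paper: differentiate $\bar\R = \R_i^T\R_j\tilde\R_{ij}^T$ via the product rule (with $\tilde\R_{ij}$ constant), substitute the kinematics $\dot\R_i = \R_i\bm\omega_i^{\wedge}$, and simplify; the paper merely expands $\bar\R^T\dot{\bar\R}$ all at once rather than isolating $\dot\R_{ij}$ first, but the algebra is identical. Your justification that the result lies in $so(3)$ via conjugation of skew-symmetric matrices by orthogonal matrices is exactly the paper's argument as well.
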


\begin{theorem}[Rotation estimation convergence]
\label{thm:geodesic}
	If, $\forall (i,j) \in \mathcal{E}$, the rotation measurements $\tilde\R_{ij}$ and $\tilde\R_{ji}$ are pairwise consistent and the initial rotation estimates $\R_i$ lie in $\mathcal{B}_{\frac{\pi}{2}-\varepsilon} = \{ \R_1, \hdots, \R_n \mid \norm{\log(\R_i^T\R_j\tilde\R_{ij}^T)}_2 \leq (\frac{\pi}{2}-\varepsilon) ,\, \forall (i, j)\in\mathcal{E} \} \subset SO(3)^n$, where $\varepsilon\in\mathbb{R}_{+}$ is an arbitrarily small nonnegative number, then the estimates evolving according to~\eqref{eq:rotation_estimation_system} will converge to the set of equilibria $\mathbb{E}_{\R} = \{ \R_1, \hdots, \R_n \mid \dot\R_i = \R_i\bm\omega_i^{\wedge} = \R_i \sum_{j\in\Neighs{i}} \bm\omega_{ij}^{\wedge} = \Zero{3 \times 3} ,\, \forall i\in\mathcal{V}\}$. 
\end{theorem}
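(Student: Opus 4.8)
The plan is to prove convergence with a Lyapunov argument based on LaSalle's Invariance Principle, taking as the candidate function the rotation part of the Problem~\ref{prob:main} objective evaluated at the current estimates,
\begin{equation*}
	V \;=\; \frac{1}{2}\sum_{i\in\mathcal{V}}\sum_{j\in\Neighs{i}} \|\log(\R_i^T\R_j\tilde\R_{ij}^T)^{\vee}\|_2^2 \;=\; \frac{1}{2}\sum_{i\in\mathcal{V}}\sum_{j\in\Neighs{i}} \|\bm\omega_{ij}\|_2^2 \, .
\end{equation*}
First I would check that $V$ is well posed on $\mathcal{B}_{\frac{\pi}{2}-\varepsilon}$: pairwise consistency together with Lemma~\ref{lem:rotated_control} gives $\bm\omega_{ji} = -\R_{ij}^T\bm\omega_{ij}$, so $\|\bm\omega_{ij}\|_2 = \|\bm\omega_{ji}\|_2$ and $V$ is just the sum of squared geodesic edge errors; it is nonnegative, and it is continuously differentiable wherever every $\|\bm\omega_{ij}\|_2 < \pi$ so the matrix logarithm in~\eqref{eq:matrix_logarithm} is smooth --- in particular on the compact set $\mathcal{B}_{\frac{\pi}{2}-\varepsilon} \subset SO(3)^n$.

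Next I would differentiate $V$ along the trajectories of~\eqref{eq:rotation_estimation_system}. Applying Theorem~\ref{thm:time_derivative_dot_matrix_logarithm} to each summand with $\R = \bar\R = \R_{ij}\tilde\R_{ij}^T$ (valid because $\|\bm\omega_{ij}\|_2 < \pi$) and substituting $\bar\R^T\dot{\bar\R}$ from Lemma~\ref{lem:relative_control_product}, the derivative of the $(i,j)$ term becomes $\bm\omega_{ij}^T\big(-\tilde\R_{ij}\R_{ij}^T\bm\omega_i^{\wedge}\R_{ij}\tilde\R_{ij}^T + \tilde\R_{ij}\bm\omega_j^{\wedge}\tilde\R_{ij}^T\big)^{\vee}$. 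Expanding $\bm\omega_i^{\wedge} = \sum_{k\in\Neighs{i}}\bm\omega_{ik}^{\wedge}$ and $\bm\omega_j^{\wedge} = \sum_{k\in\Neighs{j}}\bm\omega_{jk}^{\wedge}$ and then reducing the conjugations with Lemma~\ref{lem:dot_product_equivalence_2}, Lemma~\ref{lem:dot_product_equivalence_1}, and the standard identity $(\mathbf{Q}\mathbf{x}^{\wedge}\mathbf{Q}^T)^{\vee} = \mathbf{Q}\mathbf{x}$ collapses that term to $-\bm\omega_{ij}^T\bm\omega_i - \bm\omega_{ji}^T\bm\omega_j$. Summing over all directed edges, the first family of terms equals $-\sum_{i\in\mathcal{V}}\|\bm\omega_i\|_2^2$, and after relabelling $i\leftrightarrow j$ so does the second, giving
\begin{equation*}
	\dot V \;=\; -2\sum_{i\in\mathcal{V}} \|\bm\omega_i\|_2^2 \;\le\; 0 \, ,
\end{equation*}
with equality exactly when $\bm\omega_i = \Zero{3}$ for all $i\in\mathcal{V}$.

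Finally I would invoke LaSalle's Invariance Principle on a compact positively invariant set contained in $\mathcal{B}_{\frac{\pi}{2}-\varepsilon}$: since $SO(3)^n$ is compact, the only difficulty in obtaining such a set is showing that trajectories starting in $\mathcal{B}_{\frac{\pi}{2}-\varepsilon}$ do not leave it, which is also what keeps Theorem~\ref{thm:time_derivative_dot_matrix_logarithm} (hence the computation of $\dot V$) valid for all $t$. Granting this, the set $\{\dot V = 0\} = \{\bm\omega_i = \Zero{3}\ \forall i\}$ consists entirely of equilibria --- each such point satisfies $\dot\R_i = \R_i\bm\omega_i^{\wedge} = \Zero{3\times 3}$ --- so it equals its own largest invariant subset, and LaSalle gives convergence of the estimates to $\mathbb{E}_{\R}$. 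The main obstacle I anticipate is precisely the positive invariance of $\mathcal{B}_{\frac{\pi}{2}-\varepsilon}$: monotonicity of $V$ by itself does not bound the individual edge errors (and these are not individually monotone), so I would argue by contradiction at a first exit time $t^{\star}$, where some edge error equals $\tfrac{\pi}{2}-\varepsilon$ and no other is larger, and show its time derivative $\frac{d}{dt}\tfrac{1}{2}\|\bm\omega_{kl}\|_2^2 = -\bm\omega_{kl}^T\bm\omega_k - \bm\omega_{lk}^T\bm\omega_l$ cannot be positive there, using Cauchy--Schwarz and maximality on the cross terms together with the geodesic convexity of the squared distance on $SO(3)$ within a ball of radius $\pi/2$. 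This boundary estimate is where the $\pi/2$ threshold and the slack $\varepsilon$ enter, and it is the step I expect to require the most care.
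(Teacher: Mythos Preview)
Your Lyapunov computation is essentially the paper's own proof: same candidate $V$, same use of Theorem~\ref{thm:time_derivative_dot_matrix_logarithm} and Lemma~\ref{lem:relative_control_product} to differentiate each edge term, same reduction via Lemmas~\ref{lem:dot_product_equivalence_1}--\ref{lem:dot_product_equivalence_2} to $-\bm\omega_{ij}^T\bm\omega_i - \bm\omega_{ji}^T\bm\omega_j$, and the same final collapse to $\dot V = -2\sum_{i}\|\bm\omega_i\|_2^2$ followed by LaSalle. The paper writes the double sum slightly differently (a ``double counting'' bookkeeping step) but the algebra is identical to your relabelling $i\leftrightarrow j$.

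The one substantive difference is how positive invariance of $\mathcal{B}_{\frac{\pi}{2}-\varepsilon}$ is handled. The paper does \emph{not} prove it directly: it invokes Theorem~10 of Thunberg et al.\ (2014), which in turn relies on Moakher's geodesic-convexity results for the squared Riemannian distance on $SO(3)$, and remarks that the comparison between the gradient and the covariant derivative carries over despite the measurement offsets $\tilde\R_{ij}$. Your first-exit-time sketch is the right instinct, but the specific mechanism you propose --- Cauchy--Schwarz plus maximality on the cross terms in
\[
\tfrac{d}{dt}\tfrac{1}{2}\|\bm\omega_{kl}\|_2^2 \;=\; -\|\bm\omega_{kl}\|_2^2 - \sum_{m\in\Neighs{k}\setminus\{l\}}\bm\omega_{kl}^T\bm\omega_{km} \;-\; \|\bm\omega_{lk}\|_2^2 - \sum_{m\in\Neighs{l}\setminus\{k\}}\bm\omega_{lk}^T\bm\omega_{lm}
\]
only yields $|\bm\omega_{kl}^T\bm\omega_{km}|\le\|\bm\omega_{kl}\|_2^2$, which is not enough once $|\Neighs{k}|>2$: the bound allows the right-hand side to be positive. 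The geodesic-convexity ingredient you mention is indeed the key, but it does not enter through a termwise Cauchy--Schwarz estimate; rather, it is what makes the maximal edge error nonincreasing via a comparison of the Riemannian gradient of the squared distance with the control direction on the $\pi/2$-ball. If you want a self-contained argument, that is the property you would need to reproduce; otherwise, citing the Thunberg/Moakher result as the paper does is the clean route.
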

\begin{proof}

First, the set $\mathcal{B}_{\frac{\pi}{2}-\varepsilon}$ is positively invariant under~\eqref{eq:rotation_estimation_system} according on the analysis in~\cite{thunberg2014distributed} (Theorem 10) which leverages properties from~\cite{moakher2002means}. Our controller differs from~\cite{thunberg2014distributed} due to the rotation measurement offset terms, however the analysis that compares the gradient and the covariant derivative on $SO(3)$ holds for our controller as well. 

Define a Lyapunov-like function $V : SO(3)^n \to \Reals{}$ that is a function of the agent's rotation matrices:
\begin{equation}
\label{eq:lyapunov_function}
    V (\R_1, \hdots, \R_n) = \sum_{i\in\mathcal{V}} \sum_{j\in\Neighs{i}} \frac{1}{2}\norm{\log( \R_i^T \R_j \tilde\R_{ij}^T)^{\vee}}_2^2.
\end{equation}
This is a sum of squared relative control components from~\eqref{eq:relative_control_component} over all directed edges in the graph. 
% The time derivative can be written using the definition of $\bm\omega_{ij}$, 
% \begin{equation} \begin{aligned}
%     \dot{V}
% % 	\dot{V} &= \frac{1}{2} \frac{d}{dt} \sum_{i\in\mathcal{V}} \sum_{j\in\Neighs{i}} \norm{\bm\omega_{ij}}_2^2 \\
% %	&= \frac{1}{2} \frac{d}{dt} \sum_{i\in\mathcal{V}} \sum_{j\in\Neighs{i}} \bm\omega_{ij}^T\bm\omega_{ij} \\
% %	&= \sum_{i\in\mathcal{V}} \sum_{j\in\Neighs{i}} \frac{1}{2}\frac{d}{dt} {\log( \R_{ij}\tilde\R_{ij}^T)^{\vee}}^T\log( \R_{ij}\tilde\R_{ij}^T)^{\vee} \\
% 	&= \sum_{i\in\mathcal{V}} \sum_{j\in\Neighs{i}} \frac{1}{2}\frac{d}{dt} \norm{\log( \R_{ij}\tilde\R_{ij}^T)^{\vee}}_2^2
% 	\, .
% \end{aligned} \end{equation}
The time derivative of $V$ is written using the time derivative of the squared geodesic distance from Theorem~\ref{thm:time_derivative_dot_matrix_logarithm} as well as the result from Lemma~\ref{lem:relative_control_product}, 
\begin{equation} \begin{small} \begin{aligned}
	\dot{V}
	&= \sum_{i\in\mathcal{V}} \sum_{j\in\Neighs{i}} \frac{1}{2}\frac{d}{dt} \norm{\log( \R_{ij}\tilde\R_{ij}^T)^{\vee}}_2^2 \\
	&= \sum_{i\in\mathcal{V}} \sum_{j\in\Neighs{i}} {\log( \R_{ij}\tilde\R_{ij}^T)^{\vee}}^T \left( (\R_{ij}\tilde\R_{ij}^T)^T\frac{d}{dt}(\R_{ij}\tilde\R_{ij}^T) \right)^{\vee} \\
	&= \sum_{i\in\mathcal{V}} \sum_{j\in\Neighs{i}} \bm\omega_{ij}^T \left( - \tilde\R_{ij}\R_{ij}^T \bm\omega_i^{\wedge} \R_{ij}\tilde\R_{ij}^T + \tilde\R_{ij}\bm\omega_j^{\wedge}\tilde\R_{ij}^T \right)^{\vee}
	\, .
\end{aligned} \end{small} \end{equation}
Each control term $\bm\omega_i^{\wedge}$ is now replaced by the definition of the rotational control input in~\eqref{eq:estimation_control_rotation}, 
\begin{equation} \begin{small} \begin{aligned}
	\dot{V}
	&= \sum_{i\in\mathcal{V}} \sum_{j\in\Neighs{i}} \bm\omega_{ij}^T \Bigg( - \tilde\R_{ij}\R_{ij}^T \Bigg(\sum_{k\in\Neighs{i}}\bm\omega_{ik}^{\wedge} \Bigg) \R_{ij}\tilde\R_{ij}^T \\
		&\qquad\qquad\qquad\quad + \tilde\R_{ij} \Bigg(\sum_{l\in\Neighs{j}}\bm\omega_{jl}^{\wedge} \Bigg) \tilde\R_{ij}^T \Bigg)^{\vee} \\
	&= \sum_{i\in\mathcal{V}} \sum_{j\in\Neighs{i}} \Bigg( \sum_{k\in\Neighs{i}} - \bm\omega_{ij}^T \left( \tilde\R_{ij}\R_{ij}^T \bm\omega_{ik}^{\wedge} \R_{ij}\tilde\R_{ij}^T \right)^{\vee} \\ 
		&\qquad\qquad\qquad\quad + \sum_{l\in\Neighs{j}} \bm\omega_{ij}^T \left( \tilde\R_{ij} \bm\omega_{jl}^{\wedge} \tilde\R_{ij}^T \right)^{\vee} \Bigg)
	\, .
\end{aligned} \end{small} \end{equation}
And finally, we use the dot product equivalence results from Lemmas~\ref{lem:dot_product_equivalence_1} and~\ref{lem:dot_product_equivalence_2} to arrive at the following, 
\begin{equation} \begin{small} \begin{aligned}
\label{eq:Vdot_sums_before_doublecount}
	\dot{V}
	&= \sum_{i\in\mathcal{V}} \sum_{j\in\Neighs{i}} \Bigg( - \sum_{k\in\Neighs{i}} \bm\omega_{ij}^T \bm\omega_{ik} - \sum_{l\in\Neighs{j}} \bm\omega_{ji}^T \bm\omega_{jl} \Bigg)
	\, .
\end{aligned} \end{small} \end{equation}
Notice that for each directed edge $(i, j)$, there is a sum over neighbors of $i$ including $\bm\omega_{ij}$ and a sum over all neighbors of node $j$ containing $\bm\omega_{ji}$. Since we are considering a directed edge, these neighbors will be counted again when we consider the directed edge $(j, i)$ in the opposite direction --- see the illustration of each edge's sums in Fig.~\ref{fig:geodesic_proof}. Thus, we can include these double counted edges for each directed edge, 
\begin{equation} \begin{aligned}
	\dot{V}
	&= \sum_{i = 1}^n \sum_{j \in \Neighs{i}} \left( - 2\sum_{k\in\Neighs{i}} \bm\omega_{ij}^T \bm\omega_{ik} \right) \\
	&= -2 \sum_{i = 1}^n \sum_{j \in \Neighs{i}} \sum_{k\in\Neighs{i}} \bm\omega_{ij}^T \bm\omega_{ik}
	\, .
\end{aligned} \end{equation}
Finally, we rearrange the two sums over neighbors of $i$ into the product of the same sums over neighbors of $i$,
\begin{equation} \begin{aligned}
	\dot{V}
	&= -2 \sum_{i = 1}^n \left( \sum_{j \in \Neighs{i}} \bm\omega_{ij}^T \right) \left(\sum_{j\in\Neighs{i}} \bm\omega_{ij} \right) \\
	&= -2 \sum_{i = 1}^n \bm\omega_i^T \bm\omega_i
	\, .
\end{aligned} \end{equation}
Therefore the set of rotation estimates where $\dot V = 0$ corresponds to the estimates where the control inputs $\bm{\omega}_i$ are all zero, i.e., the rotation estimate equilibria set $\mathbb{E}_{\R}$. 
\end{proof}

\begin{figure}
\centering
\subfigure[\label{fig:geodesic_proof_ij}]{
	\begin{tikzpicture}[scale=1]
	\pgfmathsetmacro{\offset}{0.18}
	\draw (0,0) node{$i$};
	\draw (1,0) node{$j$};
	\draw (0,-1) node{$k$};
	\draw (1,-1) node{$l$};
	\draw[draw=red,-stealth,thin,->] (\offset,0.08) -- (1-\offset,0.08);
	\draw[-stealth,thin,<-] (\offset,-0.08) -- (1-\offset,-0.08);
	\draw[-stealth,thin,->] (0,-\offset) -- (0,-1+\offset);
	\draw[-stealth,thin,->] (1,-\offset) -- (1,-1+\offset);
	\end{tikzpicture}
}
\subfigure[\label{fig:geodesic_proof_ji}]{
	\begin{tikzpicture}[scale=1]
	\pgfmathsetmacro{\offset}{0.18}
	\draw (0,0) node{$i$};
	\draw (1,0) node{$j$};
	\draw (0,-1) node{$k$};
	\draw (1,-1) node{$l$};
	\draw[-stealth,thin,->] (\offset,0.08) -- (1-\offset,0.08);
	\draw[draw=red,-stealth,thin,<-] (\offset,-0.08) -- (1-\offset,-0.08);
	\draw[-stealth,thin,->] (0,-\offset) -- (0,-1+\offset);
	\draw[-stealth,thin,->] (1,-\offset) -- (1,-1+\offset);
	\end{tikzpicture}
}
\caption{Example of the directed edges represented in the sums of~\eqref{eq:Vdot_sums_before_doublecount}. Fig.~\ref{fig:geodesic_proof_ij} represents the directed edge $(i,j)$ and the neighbors of $i$ and $j$. For this edge, the neighbors of $i$ show up in the product with $\bm\omega_{ij}$ (first component of sum in~\eqref{eq:Vdot_sums_before_doublecount}). Fig.~\ref{fig:geodesic_proof_ji} represents the directed edge $(j,i)$ and the same neighbors as before. Similarly, the neighbors of $i$ show up in the product with $\bm\omega_{ji}$ (now the second component of sum in~\eqref{eq:Vdot_sums_before_doublecount}). Therefore, when considering one directed edge $(i,j)$, we can instead count the neighbors of $i$ twice and leave the neighbors of $j$ for the opposite edge. }
\label{fig:geodesic_proof}
\end{figure}
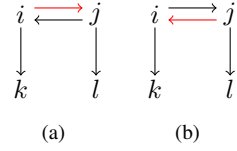

% Theorem~\ref{thm:geodesic} requires the condition that the initial estimates lie in a particular geodesic ball $\mathcal{B}_{\frac{\pi}{2}-\varepsilon}$. This can be interpreted as the error between the relative rotation $\R_{ij}$ and the measurement $\tilde\R_{ij}$ is less than $\frac{\pi}{2}$. We have found this condition is not restrictive in practice. 

\subsection{Equilibria of the Coupled \texorpdfstring{$SE(3)$}\ \ Translation Estimate}
\label{sec:analysis_coupled}

The coupled translation estimation given by system~\eqref{eq:translation_estimation_system}, 
\begin{equation}
\label{eq:translation_estimation_system_analysis}
	\dot\t_i = \sum_{j\in\Neighs{i}} \t_j - \t_i - \R_i \tilde\t_{ij}
	\, ,
\end{equation}
where the $\R_i$ terms are the time-varying rotation estimates that were analyzed in the previous section. 
We consider the stacked version of system~\eqref{eq:translation_estimation_system_analysis} and define vector $\t = [\t_1^T,\hdots,\t_n^T]^T \in \Reals{3n}$ such that the stacked dynamics of translation estimate are, 
\begin{equation}
\label{eq:translation_dynamics_disturbance}
	\dot\t = -(\mathbf{L}\otimes\Eye{3})\t - \bm{\delta}(t)
	\, ,
\end{equation}
where $\mathbf{L} \in\Reals{n\times n}$ is the graph Laplacian, $\otimes$ represents the Kronecker product required for $3$D estimates, and $\bm{\delta}\in \Reals{3n}$ is the vector of desired offsets where each component is $\bm\delta_{i}(t) = \sum_{j\in\Neighs{i}} \R_i \tilde{\t}_{ij} \in\Reals{3}$. Note that the desired offsets are functions of the rotation estimates. The equlibria set of this joint system is defined as $\mathbb{E}_{\t} = \{\t \in \Reals{3n} \mid \dot{\t}=\Zero{3n} \}$ such that a $\t \in\mathbb{E}_{\t}$ is an equilibrium point of~\eqref{eq:translation_estimation_system}. 

The analysis of this coupled estimation system follows from the coupled $3$D estimation analysis: Corollary 8 from~\cite{cristofalo2019consensus}. 
\begin{corollary}[Translation estimation input-to-state stability]
\label{cor:translation_rotation_equilibria}
	If the distributed pose estimate system~\eqref{eq:estimation_dynamics} is controlled by~\eqref{eq:estimation_control}, the rotation measurements are minimally consistent, and the coupled rotation-translation offsets obey $\lim_{t \rightarrow \infty} \bm\delta(t) = \bm\delta_{\infty}$ with $\bm\delta_{\infty}$ minimally consistent, then, 
    \begin{equation}
    	\lim_{t \rightarrow \infty} \t = \t_{eq}
    	\, ,
    \end{equation}
    where $\t_{eq} \in \mathbb{E}_{\t}$ is an equilibrium point of~\eqref{eq:translation_estimation_system}. 
\end{corollary}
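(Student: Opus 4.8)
The plan is to split the stacked translation dynamics \eqref{eq:translation_dynamics_disturbance} along the spectral decomposition of the Laplacian and then treat the ``disagreement'' part as an exponentially stable linear system driven by a convergent forcing term. Put $\A := \mathbf{L}\otimes\Eye{3}$; this matrix is symmetric positive semidefinite, and since $\mathcal{G}$ is connected its kernel is the consensus subspace $\Ind_n\otimes\Reals{3}$, while its range is the orthogonal complement, on which $\A$ is positive definite with smallest nonzero eigenvalue $\lambda_2>0$. Let $\Pi_{\parallel}:=\frac1n\One{n}\One{n}^T\otimes\Eye{3}$ be the orthogonal projector onto $\ker\A$ and $\Pi_{\perp}:=\Eye{3n}-\Pi_{\parallel}$. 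Writing $\t=\t_{\parallel}+\t_{\perp}$ with $\t_{\parallel}:=\Pi_{\parallel}\t$ and $\t_{\perp}:=\Pi_{\perp}\t$, and using $\Pi_{\parallel}\A=\Zero{3n\times 3n}$ together with $\A\t=\A\t_{\perp}$, system \eqref{eq:translation_dynamics_disturbance} decouples into
\begin{equation}
\label{eq:decoupled_translation}
\dot{\t}_{\perp}=-\A\t_{\perp}-\Pi_{\perp}\bm\delta(t),\qquad \dot{\t}_{\parallel}=-\Pi_{\parallel}\bm\delta(t).
\end{equation}

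For the first equation in \eqref{eq:decoupled_translation} I would invoke input-to-state stability. Restricted to $\mathrm{range}(\Pi_{\perp})$ the matrix $-\A$ is Hurwitz, so that subsystem is ISS with respect to the input $\Pi_{\perp}\bm\delta(t)$; combining this with $\Pi_{\perp}\bm\delta(t)\to\Pi_{\perp}\bm\delta_{\infty}$ yields the converging-input/converging-state conclusion $\t_{\perp}(t)\to\t_{\perp,eq}:=-\A^{\dagger}\bm\delta_{\infty}$, the unique equilibrium of $\dot{\t}_{\perp}=-\A\t_{\perp}-\Pi_{\perp}\bm\delta_{\infty}$ inside $\mathrm{range}(\Pi_{\perp})$. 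At the detail level this is done by setting $\e:=\t_{\perp}-\t_{\perp,eq}$, so that $\dot\e=-\A\e-\bigl(\Pi_{\perp}\bm\delta(t)-\Pi_{\perp}\bm\delta_{\infty}\bigr)$, applying the variation-of-constants formula, splitting the integral at a time past which $\norm{\Pi_{\perp}\bm\delta(t)-\Pi_{\perp}\bm\delta_{\infty}}$ is as small as desired, and bounding the two pieces with $\norm{e^{-\A t}\Pi_{\perp}}\le e^{-\lambda_2 t}$.

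For the second equation in \eqref{eq:decoupled_translation}, minimal consistency of $\bm\delta_{\infty}$ says precisely that $\sum_{i}\bm\delta_{i,\infty}=\Zero{3}$, i.e.\ $\Pi_{\parallel}\bm\delta_{\infty}=\Zero{3n}$, so the forcing there tends to zero. Since $\t_{\parallel}$ obeys a pure integrator, to pass from ``vanishing forcing'' to ``$\t_{\parallel}$ converges'' I would use that the rotation subsystem of Theorem~\ref{thm:geodesic} converges to $\mathbb{E}_{\R}$ at an exponential rate --- extracted from the facts that $V$ in \eqref{eq:lyapunov_function} is a genuine analytic Lyapunov function on the invariant set $\mathcal{B}_{\frac{\pi}{2}-\varepsilon}$ (so that, modulo the global $SO(3)$ symmetry of $V$, trajectories converge to a single critical point) and that generically this critical point is transversally nondegenerate --- hence $\bm\delta(t)-\bm\delta_{\infty}$ is integrable over $[0,\infty)$ and $\t_{\parallel}(t)$ converges to some $\t_{\parallel,eq}\in\ker\A$. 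Setting $\t_{eq}:=\t_{\parallel,eq}+\t_{\perp,eq}$ gives $\t(t)\to\t_{eq}$, and $\A\t_{eq}=\A\t_{\perp,eq}=-\bm\delta_{\infty}$ shows the right-hand side of \eqref{eq:translation_dynamics_disturbance} vanishes at $\t_{eq}$, i.e.\ $\t_{eq}\in\mathbb{E}_{\t}$; this is exactly Corollary~8 of~\cite{cristofalo2019consensus} specialized to the present offsets, so one may alternatively just verify that its hypotheses hold here and cite it.

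The main obstacle is the consensus direction: $\Pi_{\parallel}$ removes all of the Laplacian damping, so $\t_{\parallel}$ accumulates $\int_0^{t}\Pi_{\parallel}\bm\delta(s)\,ds$, and mere convergence of $\bm\delta(t)$ to a minimally consistent limit does not by itself rule out an unbounded drift --- one needs a decay rate making $\bm\delta(t)-\bm\delta_{\infty}$ integrable. Supplying that rate is precisely where the exponential (rather than merely asymptotic, LaSalle-type) convergence of the rotation estimates is used; everything else is the routine linear-systems ISS estimate applied on $\mathrm{range}(\Pi_{\perp})$.
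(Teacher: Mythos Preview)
Your proposal and the paper's proof are essentially the same at the top level: both reduce the claim to the ISS machinery behind Corollary~8 of~\cite{cristofalo2019consensus} (the paper simply cites that corollary together with Proposition~7 of the same reference and Lemma~4.6 of~\cite{khalil2015nonlinear}), and you explicitly note that one may ``just verify that its hypotheses hold here and cite it.'' Where you differ is that you unpack the argument via the spectral splitting $\t=\t_{\parallel}+\t_{\perp}$, which is exactly the right way to expose what the cited results are doing.

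One substantive point: you correctly isolate the genuine difficulty---the consensus component $\t_{\parallel}$ is a pure integrator of $\Pi_{\parallel}\bm\delta(t)$, so convergence of $\bm\delta(t)$ to a minimally consistent limit is not enough by itself; integrability of $\bm\delta(t)-\bm\delta_{\infty}$ is needed. Your resolution invokes an \emph{exponential} rate for the rotation subsystem, appealing to analyticity of $V$ and generic transversal nondegeneracy of the limiting critical point. That is a reasonable heuristic, but it is not established by Theorem~\ref{thm:geodesic} (which is a LaSalle argument yielding only asymptotic convergence), and the ``generically'' qualifier is doing real work. The paper does not address this point either---it is absorbed into the citation of~\cite{cristofalo2019consensus}---so your write-up is, if anything, more candid about where the technical content lies. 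If you want a self-contained version, you would need to either (i) prove a local exponential rate for~\eqref{eq:rotation_estimation_system} near $\mathbb{E}_{\R}$, or (ii) argue, as in the paper's practical remarks around~\eqref{eq:pairwise_trans}, that the offsets are re-averaged so that $\Pi_{\parallel}\bm\delta(t)=\Zero{3n}$ holds for all $t$, which kills the drift outright.
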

\begin{proof}
	    The proof directly follows Corollary 8 of~\cite{cristofalo2019consensus}. In brief, the rotation estimates converge (Theorem~\ref{thm:geodesic}) to constant values, thus the translation offsets $\bm\delta (t)$ can be set as minimally consistent (e.g., via pairwise averaging). Thus, the forced component of the translation estimation error dynamics vanishes asymptotically (Proposition 7 of~\cite{cristofalo2019consensus}) and, from Lemma 4.6 of~\cite{khalil2015nonlinear}, the system is input-to-state stable. 
\end{proof}

The result in Corollary~\ref{cor:translation_rotation_equilibria} tells us that the translation and rotation estimation can be run concurrently. Previous distributed estimation works often consider these in separate, sequential algorithms, however this is not necessary with our method. 
Additionally, Corollary~\ref{cor:translation_rotation_equilibria} requires that the coupled rotation-translation offsets tend towards a minimally consistent vector. 
% , \red{this is more restrictive than minimum consistency} that is to say $\exp(\xe_i^{\wedge}) \tilde\t_{ij} = - \exp(\xe_j^{\wedge}) \tilde\t_{ji}$ for each edge in the graph. 
How exactly that consistency is achieved is a design element of the algorithm and may be accomplished in many ways. We propose a number of options here and relate them to other algorithms in literature. 

Since rotation is independent of translation, an intuitive technique is to average the initial pairwise rotation measurements between connected neighbors. The rotation measurements may be initialized at the beginning of the algorithm for each edge pair $(i,j) \in\mathcal{E}$ by, 
\begin{equation}
\label{eq:pairwise_rot}
	\tilde\R_{ij}^{\prime} = \tilde\R_{ij}\exp\left(\frac{1}{2}\log(\tilde\R_{ij}^T\tilde\R_{ji}^T)\right)
	\, ,
\end{equation}
such that $\tilde\R_{ij}^{\prime}\tilde\R_{ji}^{\prime} = \Eye{3}$. This operation is not expensive as it involves one communication round between agent $i$ and its immediate neighbors. Further, if the original graph is directed, it can be transformed into an undirected graph by inverting measurements for the opposite directions which will inherently constitute a pairwise consistent undirected edge. 

For translation, one must consider the time-varying rotation estimate. One approach is to average the coupled translation terms between each update loop using, 
\begin{equation}
\label{eq:pairwise_trans}
	\tilde\t_{ij}^{\prime} = \frac{1}{2}(\tilde\t_{ij} - \R_{ij}^{T}\tilde\t_{ji})
	\, .
\end{equation}
This operation only requires a similar one-hop communication at the beginning, however agent $i$ can perform this operation without additional communication at all steps in the future. This is the technique used in Section~\ref{sec:results}. 

In the algorithm by Tron et al.~\cite{tron2014distributed}, the estimates are updated using the gradient of an objective similar to the one in Problem~\ref{prob:main}. 
%(we refer the reader to~\cite{tron2009distributed} for the rotation (17) and translation (22) updates). 
Notably, these gradient-based updates naturally enforce pairwise consistency of the noisy measurements and thus satisfy our requirement for stability. 
Inspired by their method, this real-time consistency can be incorporated into a modified control law for the estimation system~\eqref{eq:estimation_dynamics}: 
\begin{equation}
\label{eq:translation_control_online}
	\bm\nu_i = \sum_{j\in\Neighs{i}} (\t_j - \t_i) + \frac{1}{2} \left(\R_j \tilde\t_{ji} - \R_i \tilde\t_{ij} \right)
	\, .
\end{equation}
This type of pairwise averaging also appears in~\cite{thunberg2017distributed} (Algorithm 1) %of~\cite{thunberg2017distributed}) 
despite the different form of rotation. 
We also relate our findings to formation control literature such as~\cite{montijano2016vision}, where they require global consistency on the desired poses for the proof of convergence of the coupled translation and rotation formation. The global consistency requirement is significantly more restrictive than our pairwise and minimal consistency. However, our method would also converge given globally consistent measurements if they were somehow provided. 
\section{Empirical Results and Performance Comparisons}
\label{sec:results}

We present results from deploying GeoD\footnote{\href{https://github.com/ericcristofalo/GeoD}{github.com/ericcristofalo/GeoD}} on a series of simulated and experimental pose graph datasets in Section~\ref{sec:results_simulation}. %Our pose graph optimization method is also suitable for multi-robot localization where centralized approaches are not immediately applicable for decentralization. In light of this, the scenarios incorporate both pose graph optimization and multi-robot localization examples.
We also show results from a distributed multi-robot SLAM and $3$D reconstruction scenario in Section~\ref{sec:results_experiment}, in which seven quadrotor robots share their camera images with their neighbors to estimate their relative poses along their trajectories. We run GeoD using the relative poses collected from these robots to estimate the group's poses while simultaneously building a $3$D reconstruction of an unknown scene. 

For all  datasets, we compare GeoD's performance with the state-of-the-art centralized $SE(3)$ pose synchronization algorithm, \textit{SE-Sync}~\cite{rosen2016se}, and a Distributed Gauss-Seidel algorithm, \textit{DGS}~\cite{choudhary2017distributed}. 
Both methods solve a convex relaxation of a pose graph optimization problem that is similar to Problem~\ref{prob:main}, except using the chordal distance instead of the geodesic distance as we do. 
SE-Sync solves a semi-definite relaxation version of chordal problem while DGS solves a quadratic relaxation of the chordal problem in a distributed fashion. 
We implement the variant of DSG that uses Jacobi Over-Relaxation (JOR)~\cite{choudhary2017distributed}. 
SE-Sync provides a certificate of optimality if the final solution of the relaxed problem yields the same result as the original problem. In this way, we can compare our distributed method to the centralized, chordal global minimum. In this section, we denote this global minimum using the chordal distance with a *. We reiterate that this global minimum is for the chordal objective and not the geodesic objective shown in Problem~\ref{prob:main}. We report both objective values in this section for completeness. 

In practice, we implement a discretized version of GeoD~\eqref{eq:estimation_system} with a time step of $0.05\ \mathrm{seconds}$. 
The only requirement to use our method is the pairwise averaging condition. We enforce this by averaging the rotation measurement between neighboring robots once using~\eqref{eq:pairwise_rot} before running GeoD, and then by enforcing~\eqref{eq:pairwise_trans} upon each time step. 
Convergence for the distributed methods is determined by the stopping condition for the difference between consecutive objectives values decreasing below $10^{-2}$, an order of magnitude lower than the condition used in~\cite{choudhary2017distributed}. 

Although this paper provides convergence results using the distributed estimation scheme, the approach is not guaranteed to reach the global minimum due to the gradient-like nature of consensus methods. 
Therefore the initialization will impact the final solution. 
The initialization quality is not the focus of this paper, but an excellent discussion on initialization is provided in~\cite{carlone2015initialization}. 
%Distributed estimation is often used in multi-robot scenarios where each sensing and computation agent is a separate physical system in the environment. Thus, we make initialization assumptions that are common with state-of-the-art autonomous robots. 
We initialize the pose estimates using simulated \textit{GPS-like} measurements, assuming a robot has access to GPS and an inertial Measurement Unit (IMU), to provide a translation and rotation estimate. 
If such sensors are not available, we can initialize the poses by computing a minimum \textit{spanning tree} through the graph that represents the chain of relative pose transformations from am arbitrary root node. There exists standard distributed algorithms to compute such a spanning tree that can be computed in a time on the order of the number of robots in the network~\cite{awerbuch1987optimal}. Random or identity initialization will also work with GeoD. 

% \begin{enumerate}
% 	\item {\textit{Random}: Initialize the poses by sampling from a uniform (or other) distribution, e.g., uniform rotation sampling~\cite{arvo1992fast}. }
% 	\item {\textit{Noisy external position}: Initialize the poses using global frame localization quantities from onboard robotic sensors, e.g., GPS or odometry. }
% 	\item {\textit{Spanning tree}: Initialize the poses by computing a minimum spanning tree through the graph and computing the chain of relative pose transformations from am arbitrary root node. 
% 	There exists standard distributed algorithms to compute such a spanning tree that can be computed on the order of the number of robots in the network~\cite{awerbuch1987optimal}. 
% 	}
% 	\item{\textit{Chordal Relaxation}: Perhaps the most widely used initialization is the chordal relaxation technique~\cite{carlone2015initialization} with both centralized and distributed implementations~\cite{choudhary2017distributed}. 
% 	This method relaxes the problem to a convex optimization over vectors in $\Reals{9}$ and then projects the final result to the rotation manifold. 
% 	This initialization is often close to the global minimum in practice. 
% 	}
% \end{enumerate}

\subsection{Simulation Results}
\label{sec:results_simulation}

We demonstrate our distributed method with two sets of numerical simulations that illustrate
\begin{enumerate}
% 	\item the effects of \textit{consistency} on the estimation convergence (Section~\ref{sec:results_consistency}),
	\item the performance compared to our previous consensus-based distributed algorithm~\cite{cristofalo2019consensus} that uses the angle-axis rotation representation (Section~\ref{sec:results_distributed}), and 
	\item the performance compared to state-of-the-art centralized~\cite{rosen2016se} and distributed~\cite{choudhary2017distributed} methods that utilize chordal relaxations (Section~\ref{sec:results_comparison}). 
\end{enumerate}

We consider seven network topologies: the \textit{random}, \textit{circular}, \textit{grid}, \textit{sphere}, \textit{experiment}, \textit{parking-garage}, and \textit{cubicle} (Fig.~\ref{fig:results_networks}). 
The \textit{random} network's ground truth poses are incrementally generated by uniformly random sampling translations within a communication ball around a randomly selected node such that the graph is guaranteed to be connected. The ground truth rotation matrices are uniformly randomly sampled~\cite{arvo1992fast}. The \textit{circular}, \textit{grid}, and \textit{sphere} networks have prescribed translations and topologies while the rotations are uniformly randomly sampled as well. We generate the noisy pairwise relative measurements for the first four networks by adding aggressive noise to ground truth relative poses. Translation noise is generated by a Gaussian distribution $\mathcal{N}(\Zero{3}, \tau^2 \Eye{3})$, where $\tau = 0.5 \ \mathrm{meters}$, and the rotation noise is sampled in the vector space of $SO(3)$ and mapped back to the group using the exponential map: $\bm{\nu} \sim \mathcal{N}(\Zero{3}, \kappa^2 \Eye{3})$ and $\tilde{\R}_{ij} = \R_{i}^T\R_{j} \exp(\bm{\nu}^{\wedge})$, where $\kappa = 0.524 \ \mathrm{radians}$ or $30 \ \mathrm{degrees}$. 
We evaluate the methods on the \textit{experiment} data from Section~\ref{sec:results_experiment} as well for completeness. 
The \textit{parking-garage} and \textit{cubicle} networks are part of a SLAM benchmark  dataset~\cite{rosen2016se} that includes the noisy measurements and no ground truth positioning. 
% We visualize the ground truth and global minimum as computed by SE-Sync for the parking-garage in Fig.~\ref{fig:results_networks}. 
These datasets illustrate the ability for GeoD to scale effective to very large pose graphs. 

We initialize the estimates using the GPS-like initialization technique. %because we are primarily interested in the multi-robot localization application for which GPS and odometry is readily available. 
The initial translation estimate is generated by corrupting the ground truth translation with additive zero-mean Guassian noise sampled from same distribution as the relative measurement noise defined above. 
The initial rotation estimate is generated in a similar manner to the relative rotation measurements using the same distribution: $\hat{\R}_{i} = \R_{i} \exp(\mathbf{\bm\nu}^{\wedge})$. 
These lead to exaggerated GPS-like initial conditions that may be much more accurate in real robotics scenarios. 
% In Section~\ref{sec:results_comparison}, we use the best-performing initialization presented in SE-Sync: the chordal initialization~\cite{rosen2016se,carlone2015initialization}. 

% We remark briefly that SLAM  datasets are often not representative of the networks found in multi-robot systems. 
% For example, our random network includes $25$ robot poses and $390$ communication edges where the smallest  dataset compared in SE-Sync~\cite{rosen2016se} contains $1611$ poses and $6275$ edges. This data is collected from a large experiment using a single robot and subsequently has far more nodes and vertices than we would expect in a real multi-robot system. In addition, the topologies tend to be more sparsely connected. Deciding how to distribute this data among a smaller subset of robots to actually perform the optimization is certainly of research interest, but is out of the scope of this publication. However, we do include this parking-garage SLAM  dataset to demonstrate the scaling benefits of our distributed method over DGS~\cite{choudhary2017distributed}. 

\begin{figure*} [t]
\centering
    \subfigure{\raisebox{15mm}{\rotatebox[origin=t]{90}{Ground Truth}}\hspace{5mm}}
	\subfigure{\includegraphics[width=0.18\linewidth]{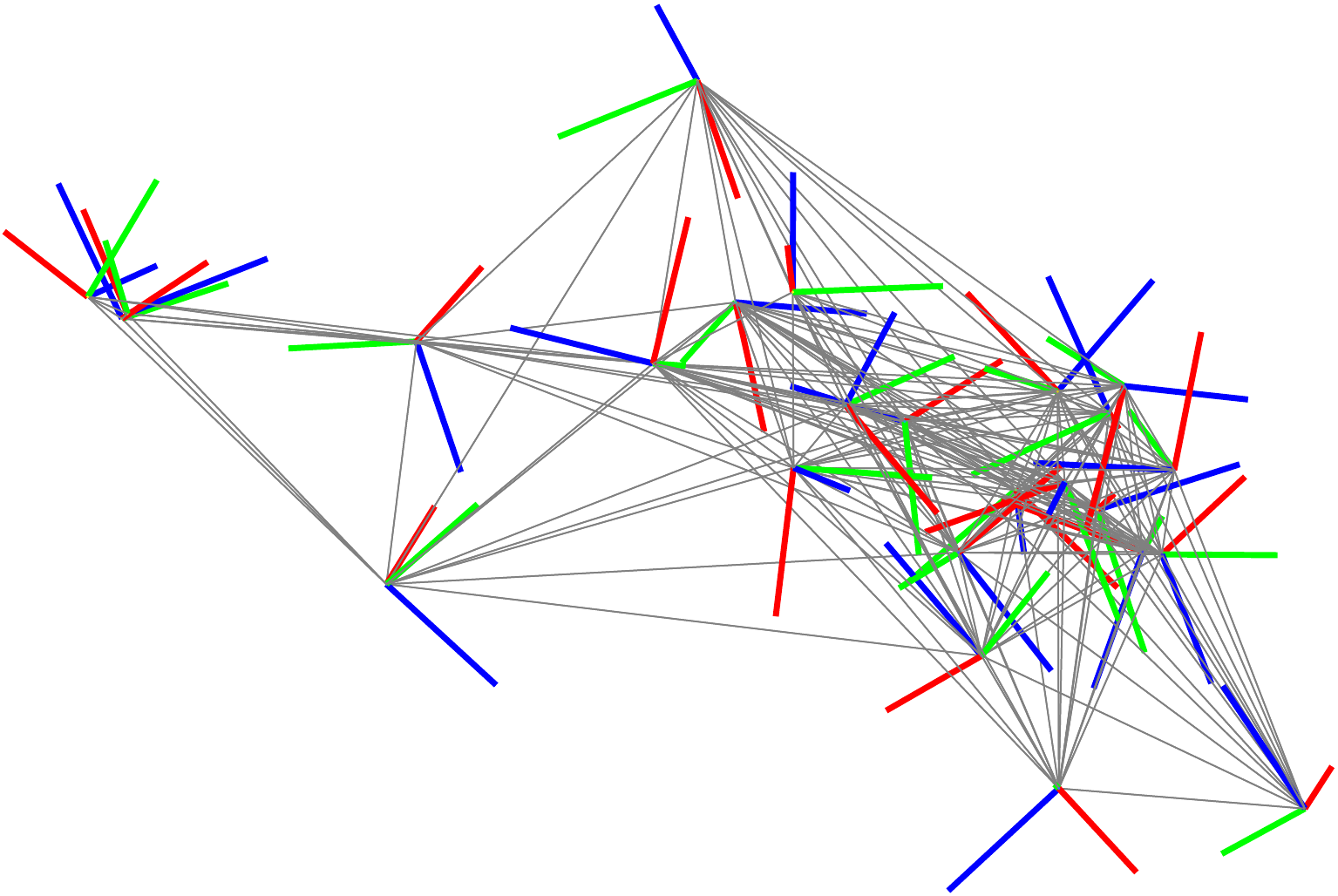}}
	\subfigure{\includegraphics[width=0.18\linewidth]{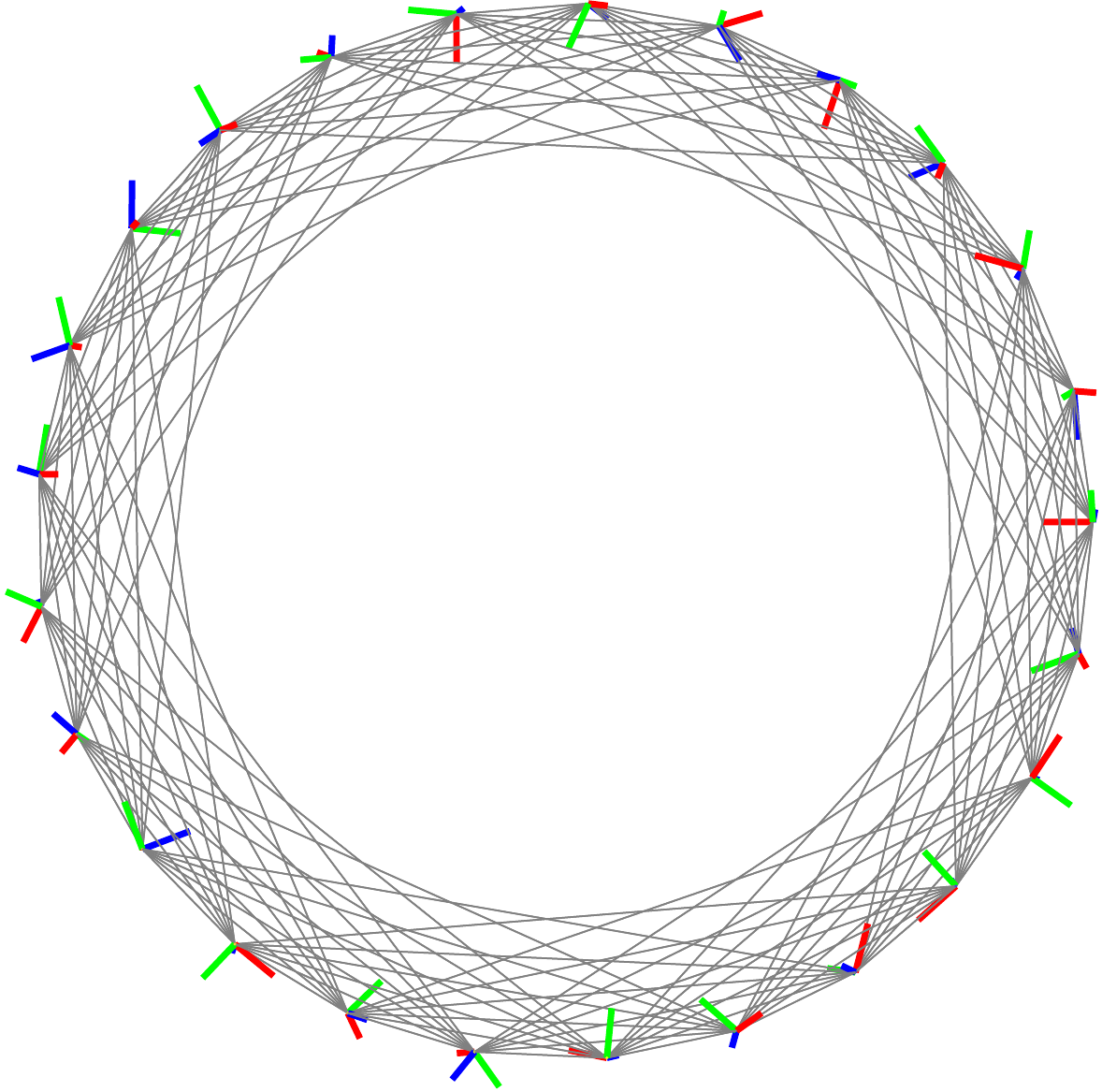}}
	\subfigure{\includegraphics[width=0.18\linewidth]{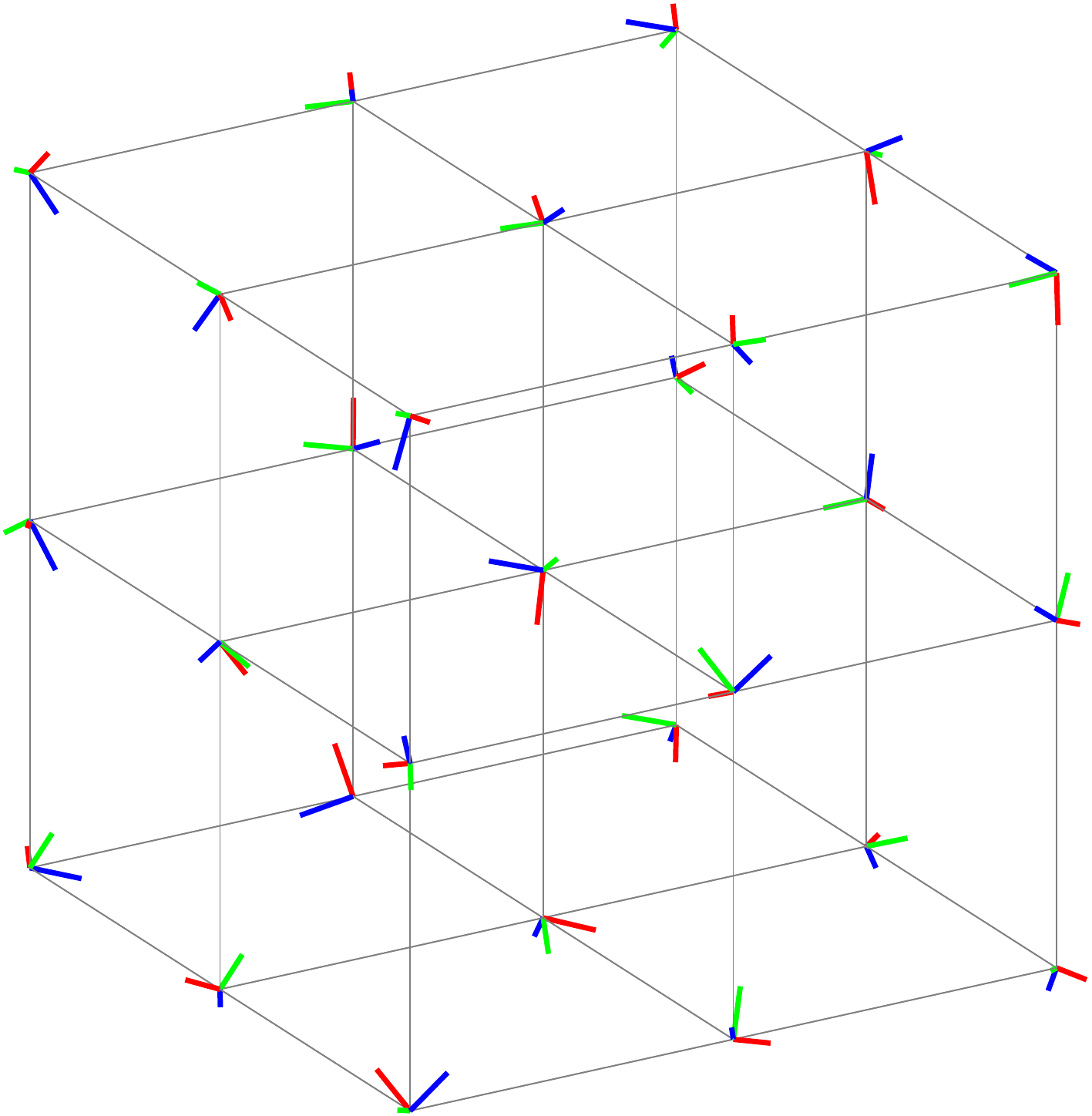}}
	\subfigure{\includegraphics[width=0.18\linewidth]{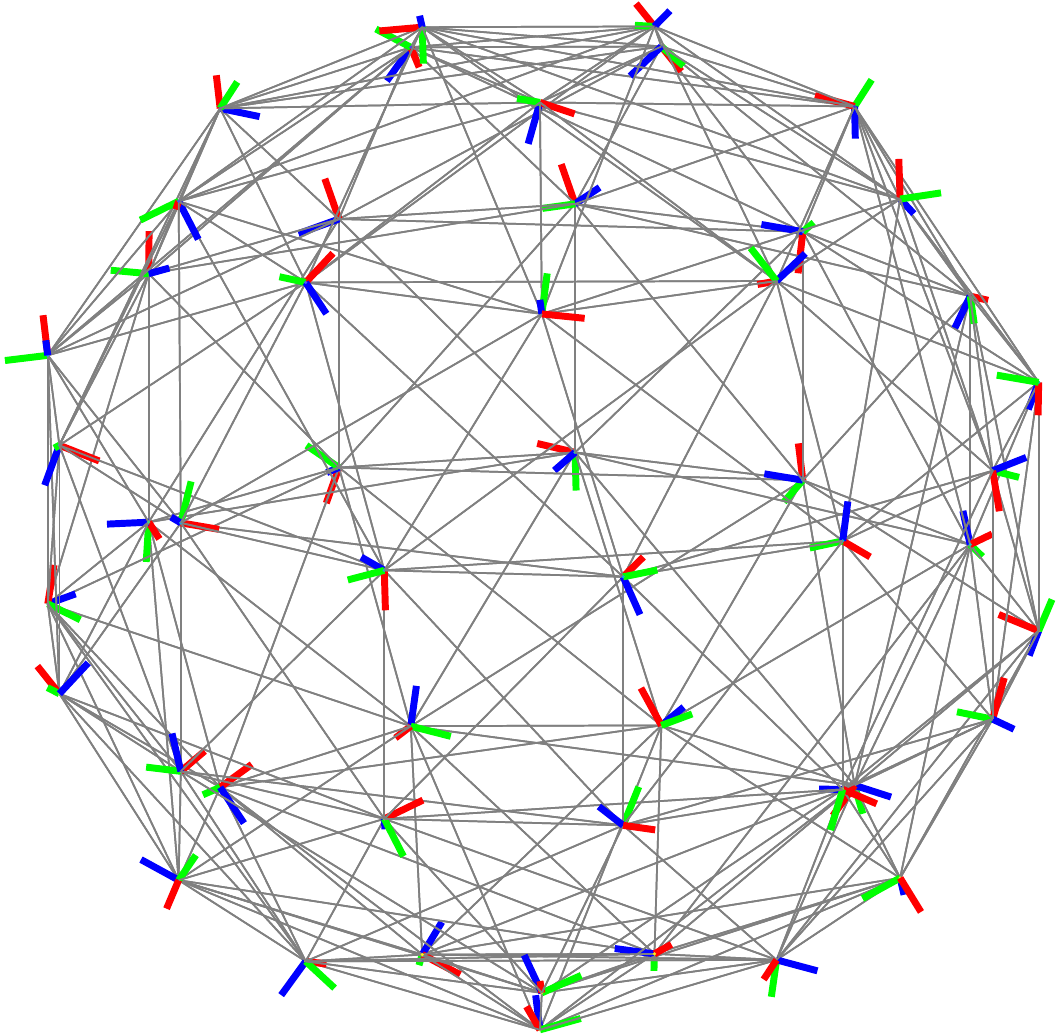}}
	\subfigure{\includegraphics[width=0.18\linewidth]{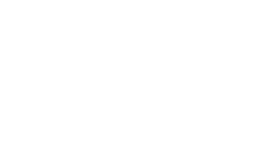}}\\
	\subfigure{\raisebox{15mm}{\rotatebox[origin=t]{90}{SE-Sync~\cite{rosen2016se}}}\hspace{5mm}}
	\subfigure{\includegraphics[width=0.18\linewidth]{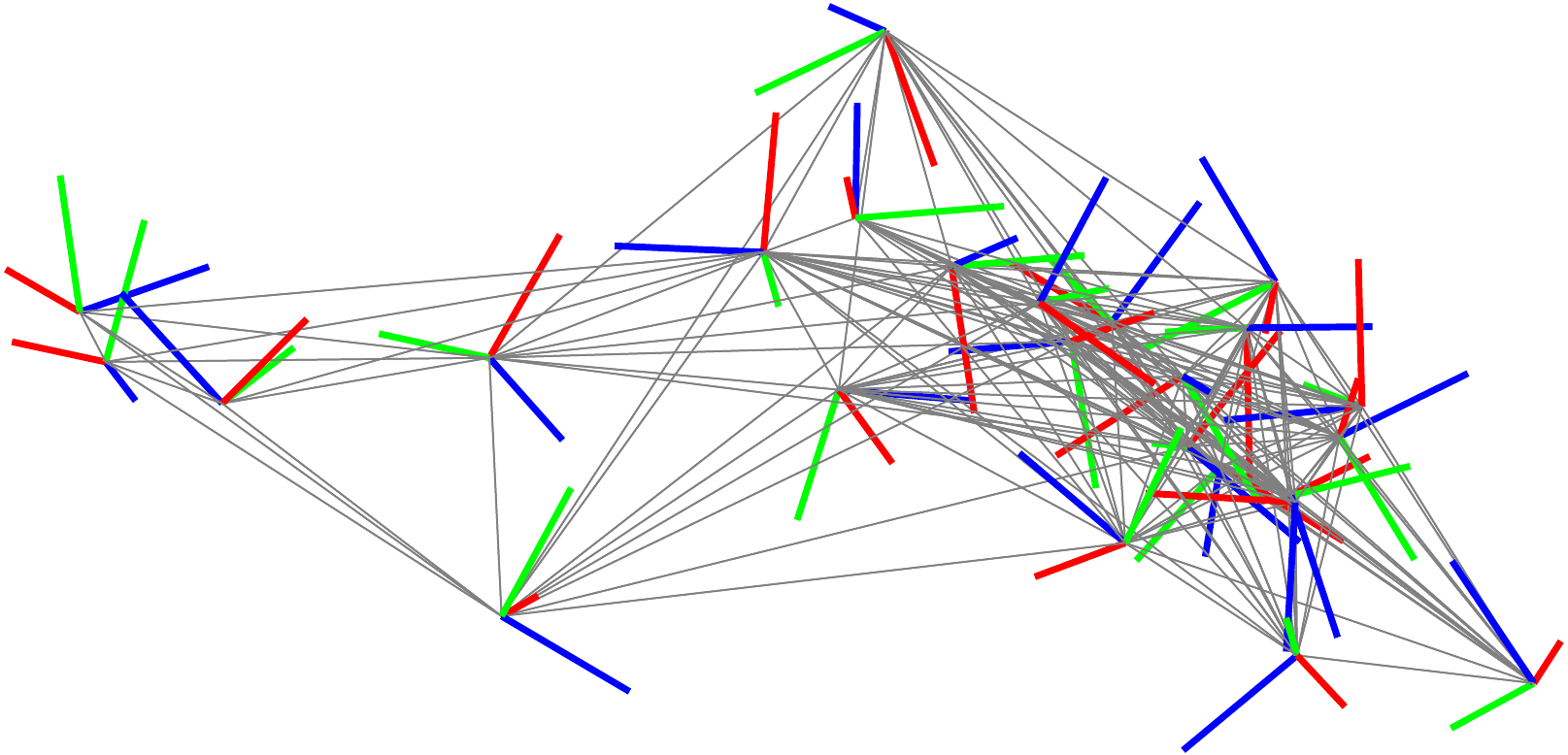}}
	\subfigure{\includegraphics[width=0.18\linewidth]{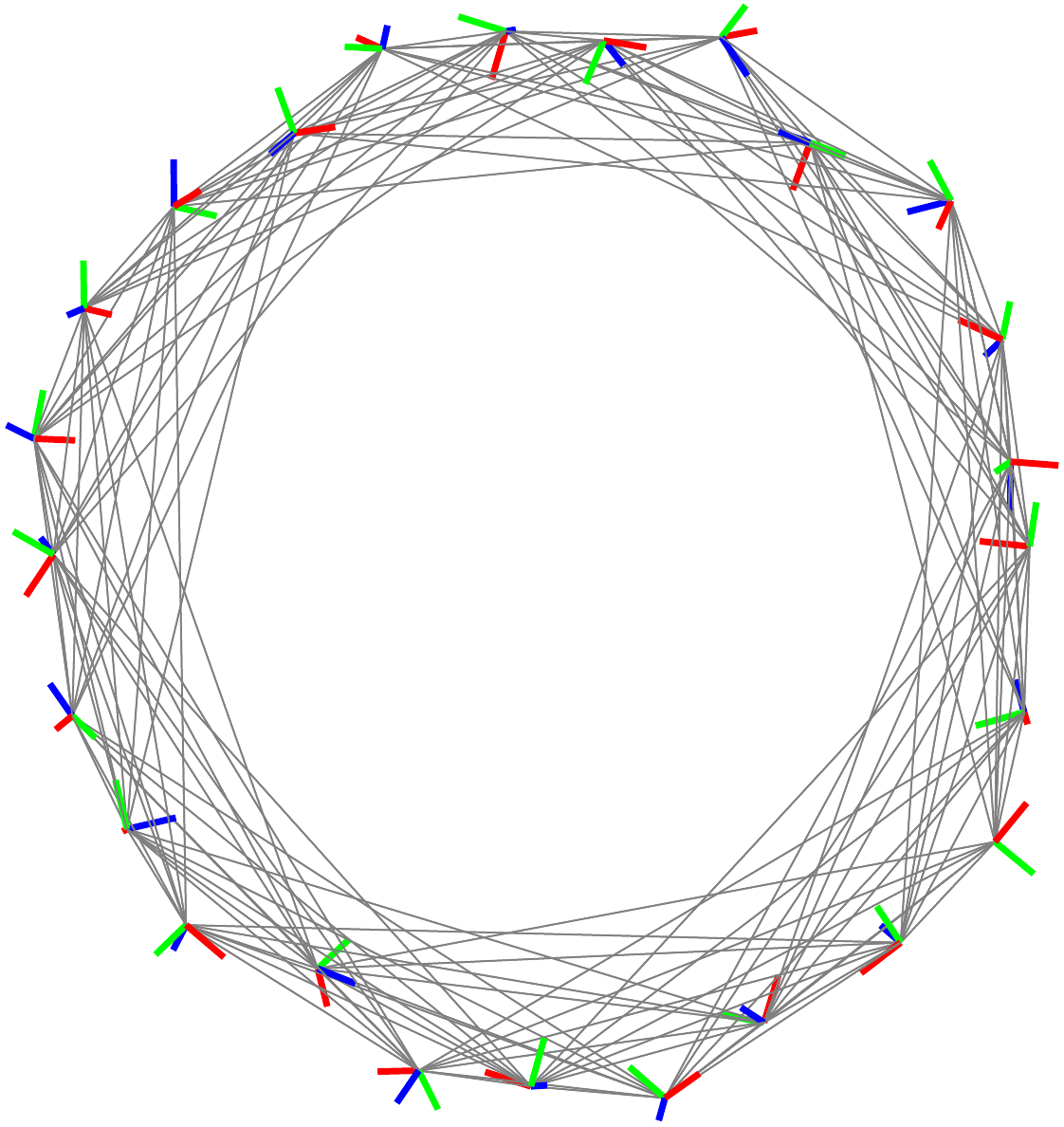}}
	\subfigure{\includegraphics[width=0.18\linewidth]{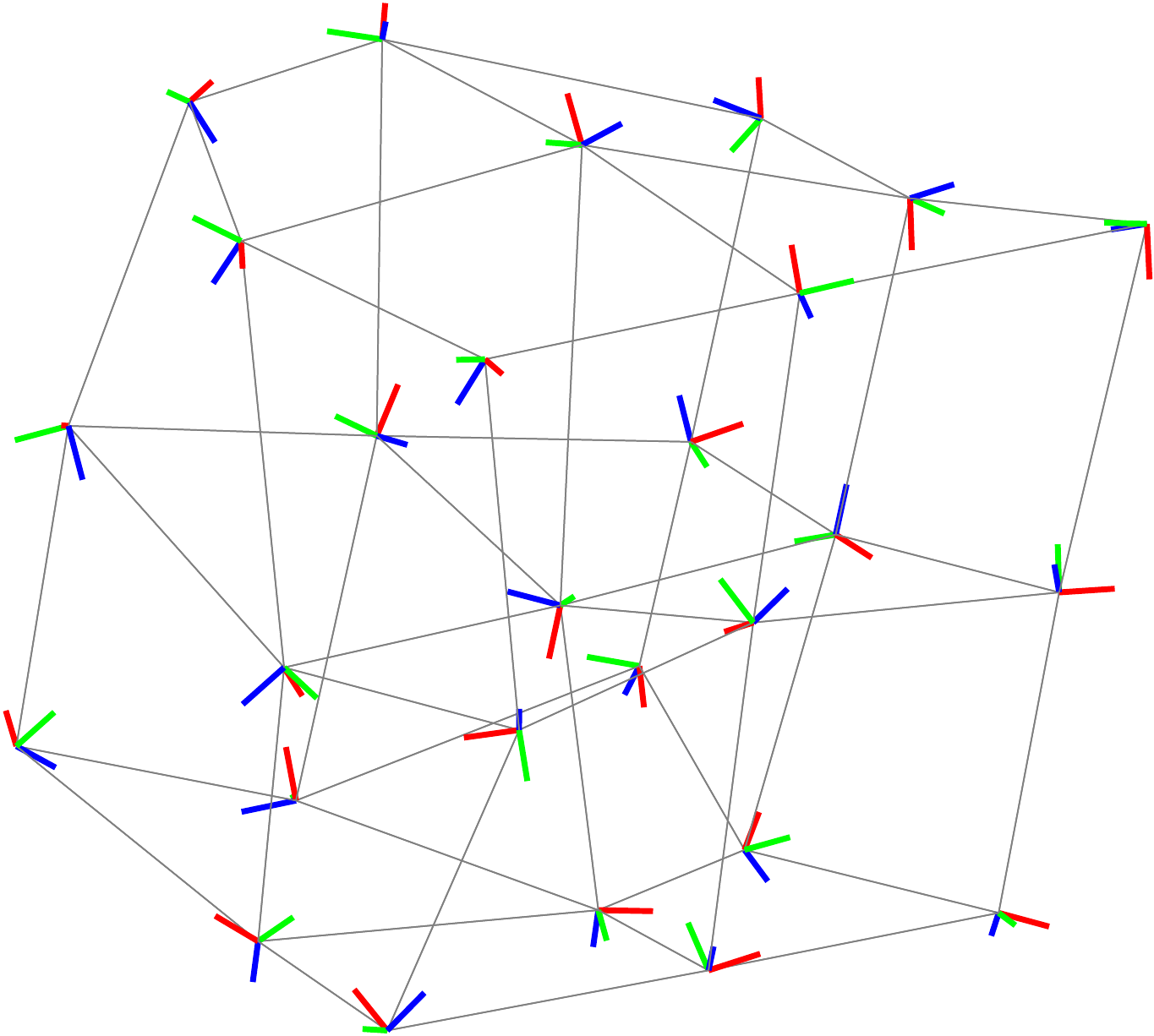}}
	\subfigure{\includegraphics[width=0.18\linewidth]{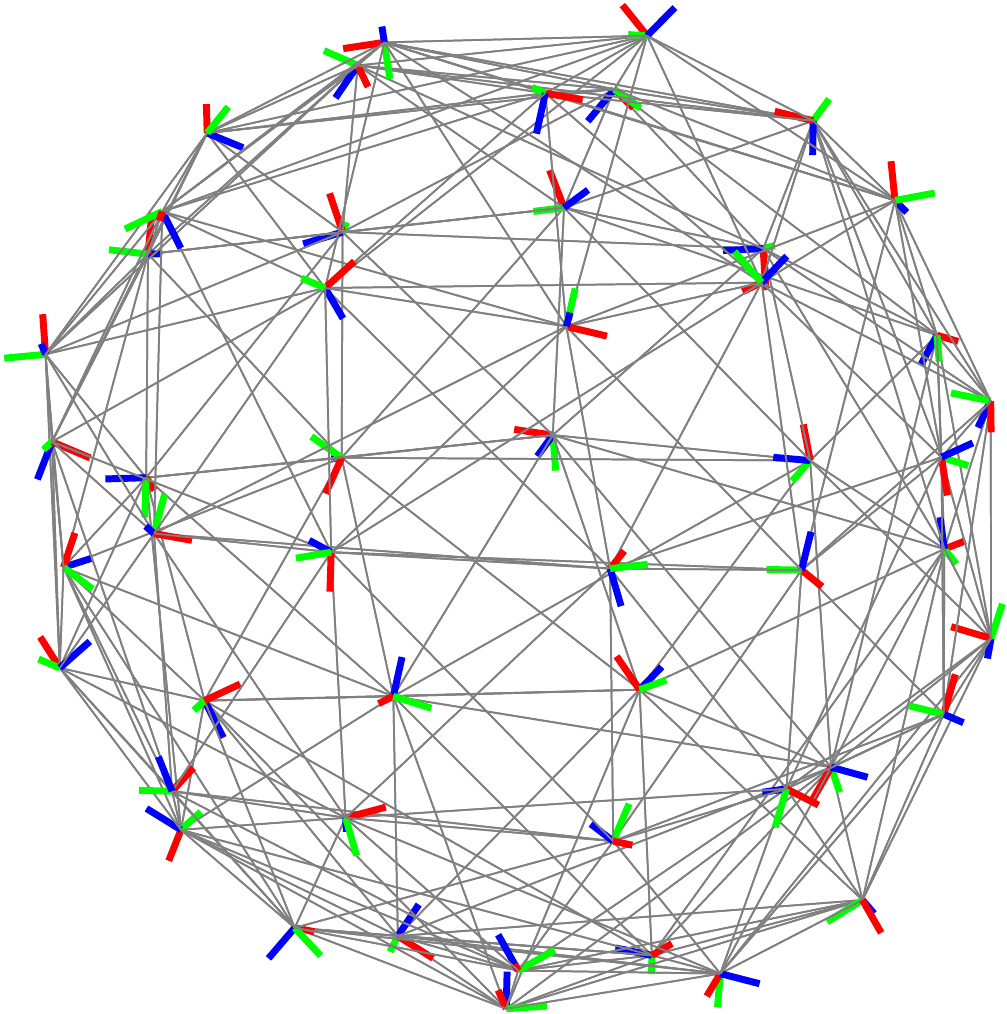}}
	\subfigure{\includegraphics[width=0.18\linewidth]{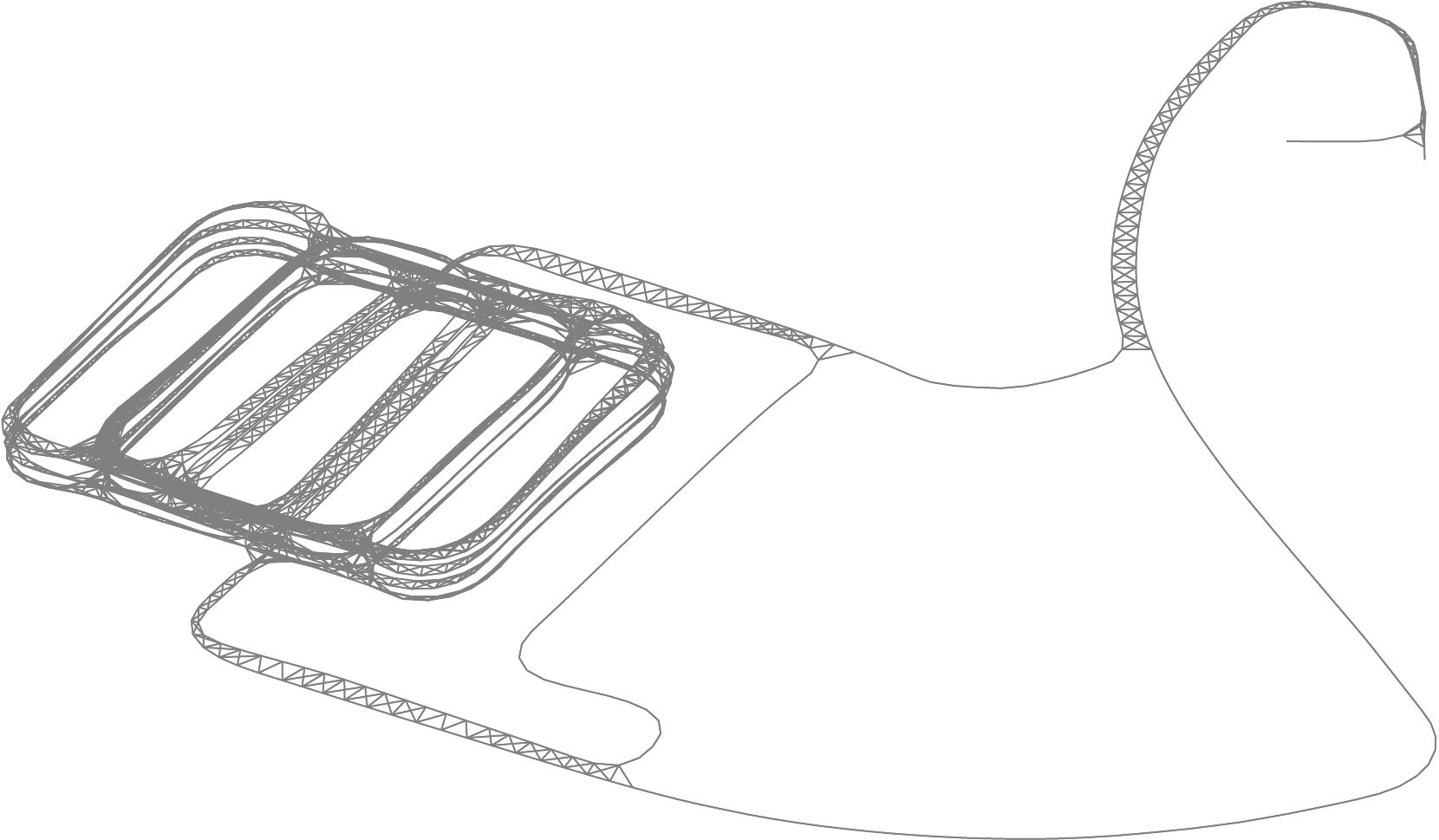}}\\
	\subfigure{\raisebox{15mm}{\rotatebox[origin=t]{90}{DGS~\cite{choudhary2017distributed}}}\hspace{5mm}}
	\subfigure{\includegraphics[width=0.18\linewidth]{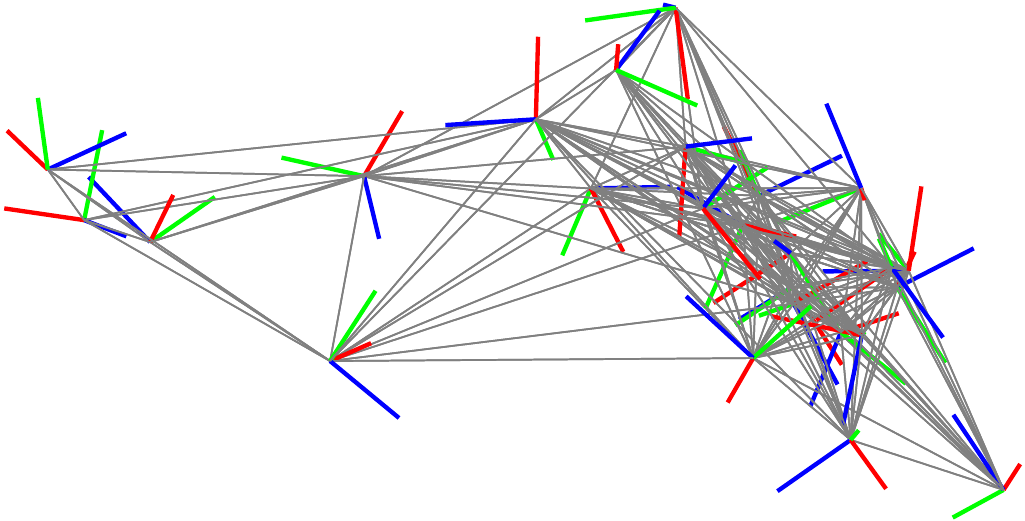}}
	\subfigure{\includegraphics[width=0.18\linewidth]{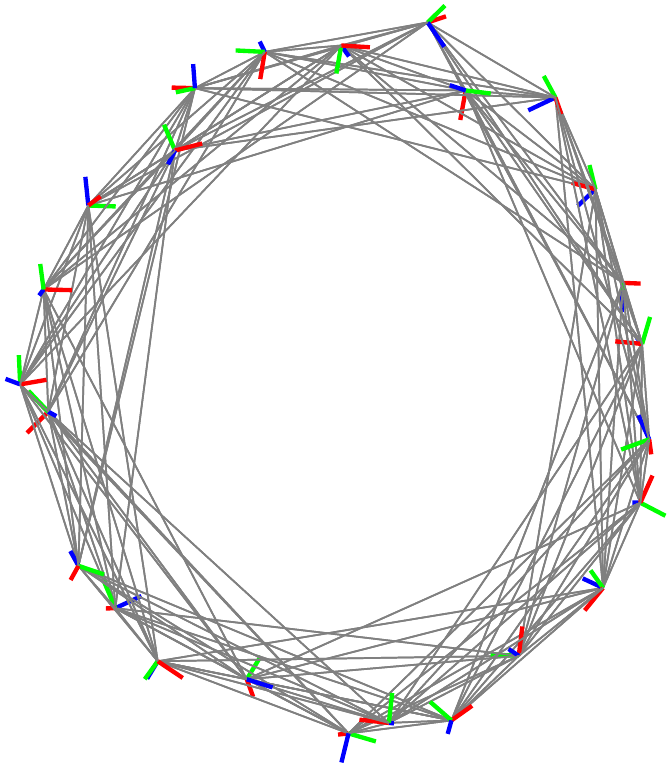}}
	\subfigure{\includegraphics[width=0.18\linewidth]{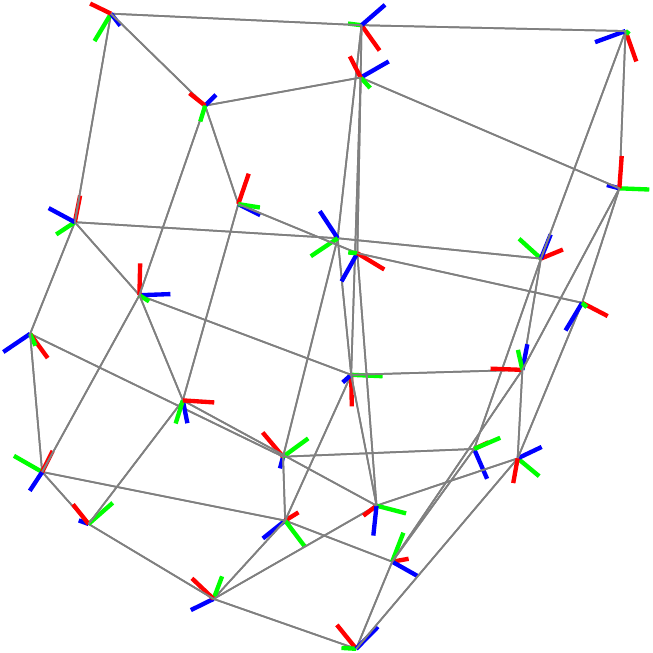}}
	\subfigure{\includegraphics[width=0.18\linewidth]{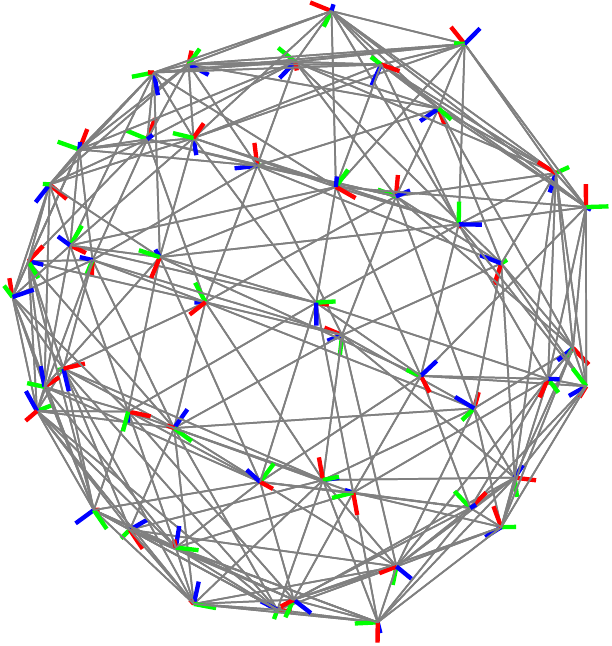}}
	\subfigure{\includegraphics[width=0.18\linewidth]{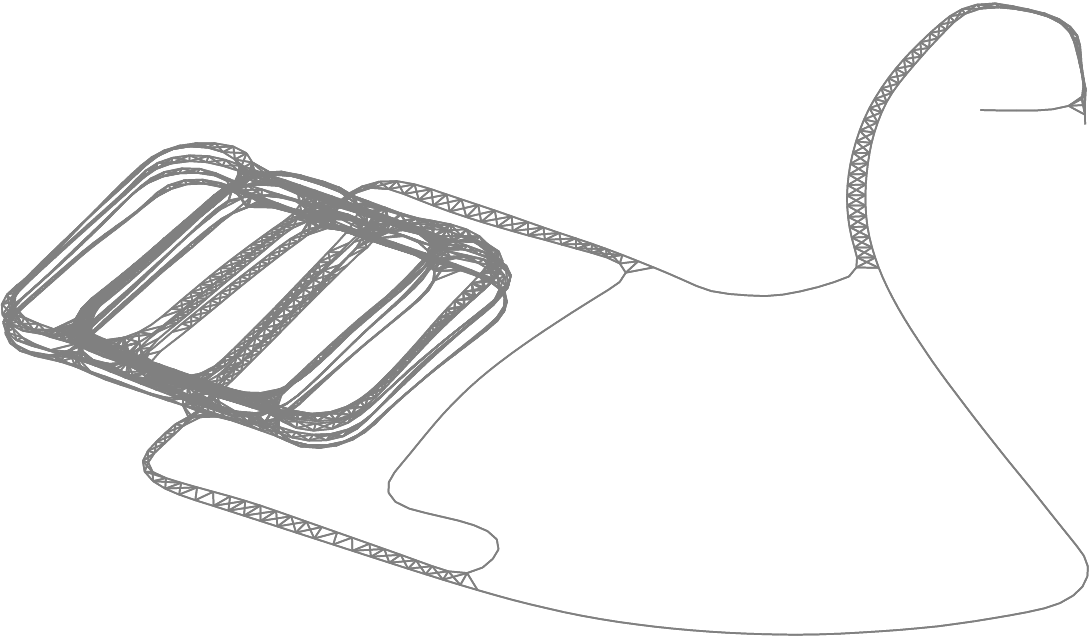}}\\
    \subfigure{\raisebox{15mm}{\rotatebox[origin=t]{90}{GeoD [This Paper]}}\hspace{5mm}}
	\subfigure{\includegraphics[width=0.18\linewidth]{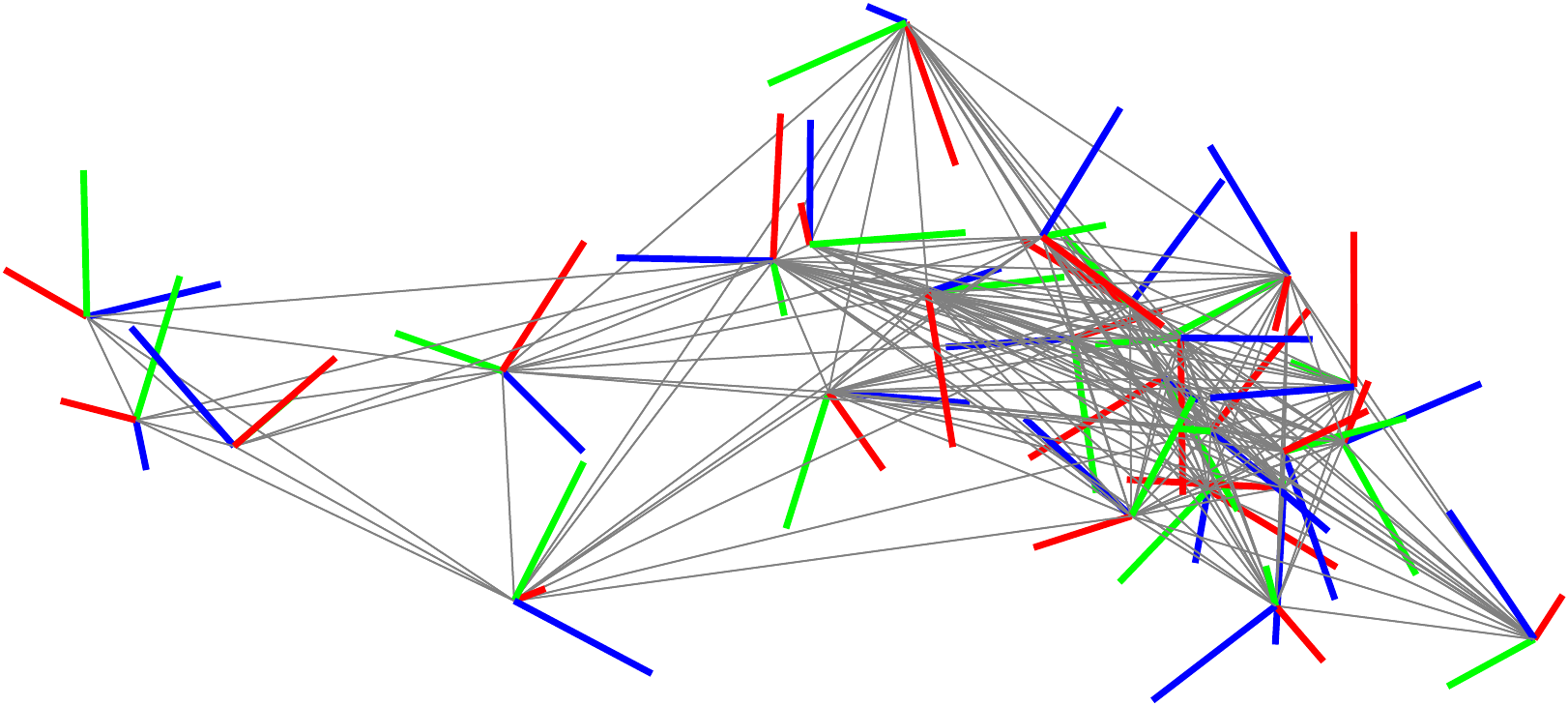}}
	\subfigure{\includegraphics[width=0.18\linewidth]{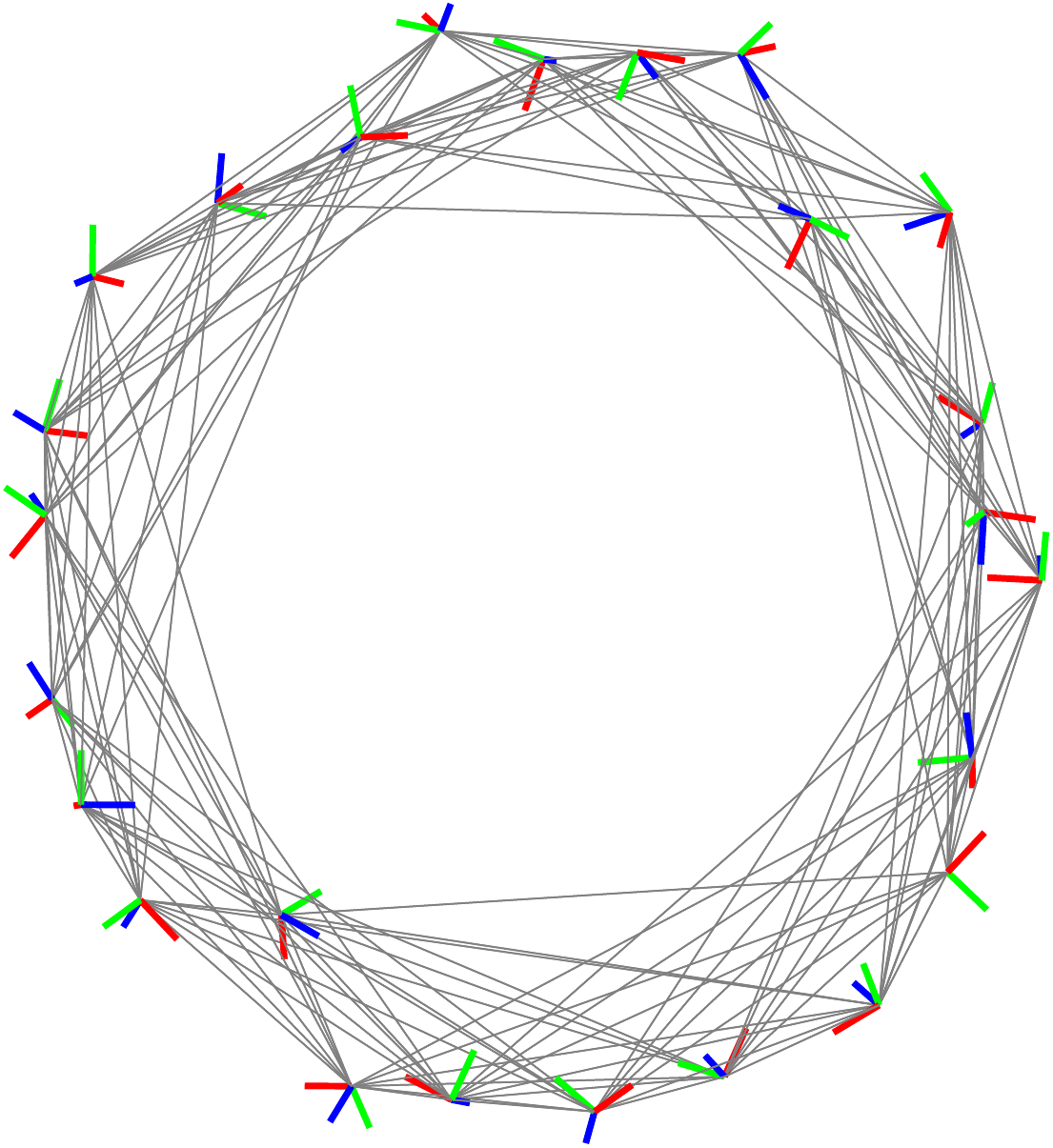}}
	\subfigure{\includegraphics[width=0.18\linewidth]{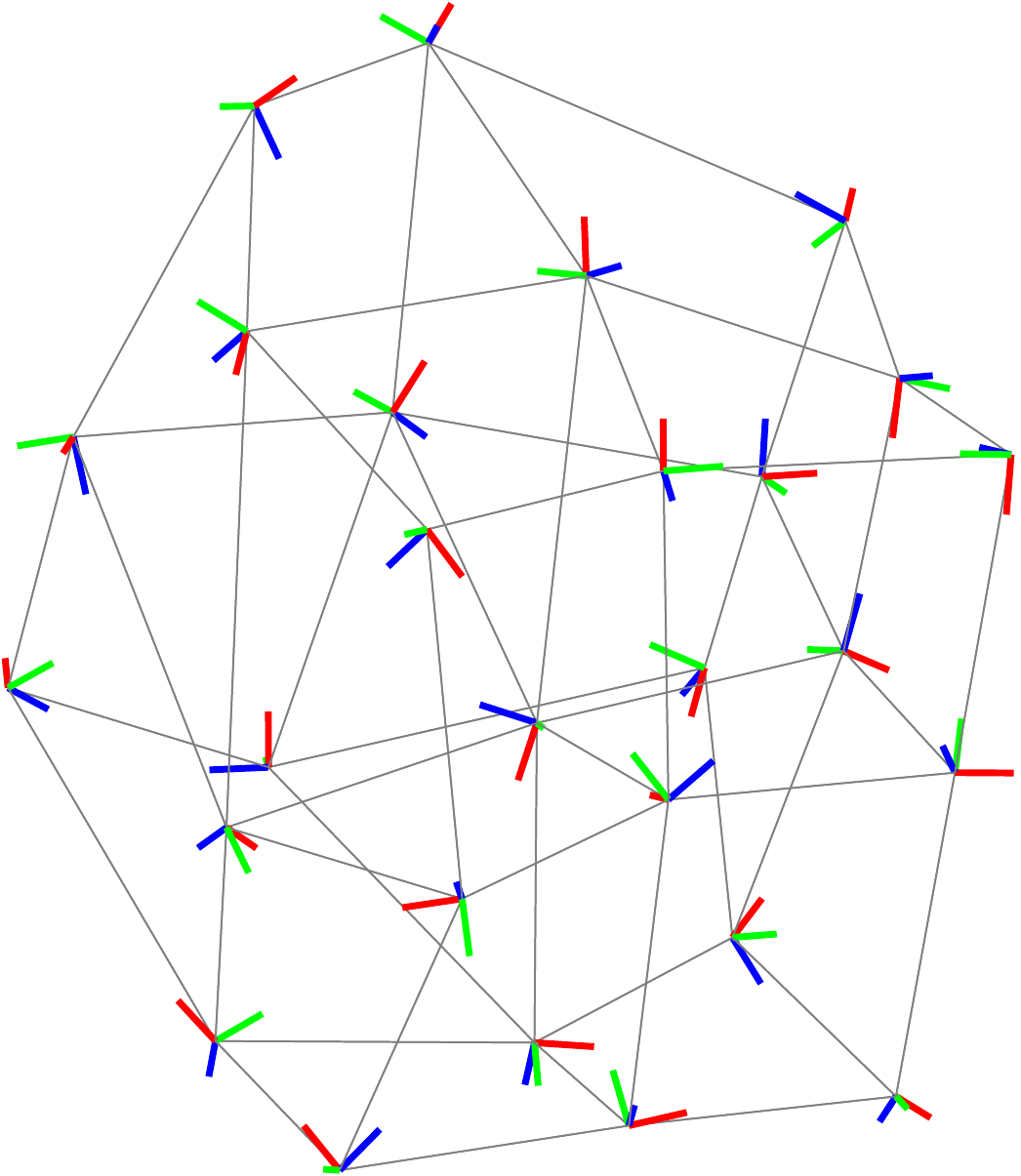}}
	\subfigure{\includegraphics[width=0.18\linewidth]{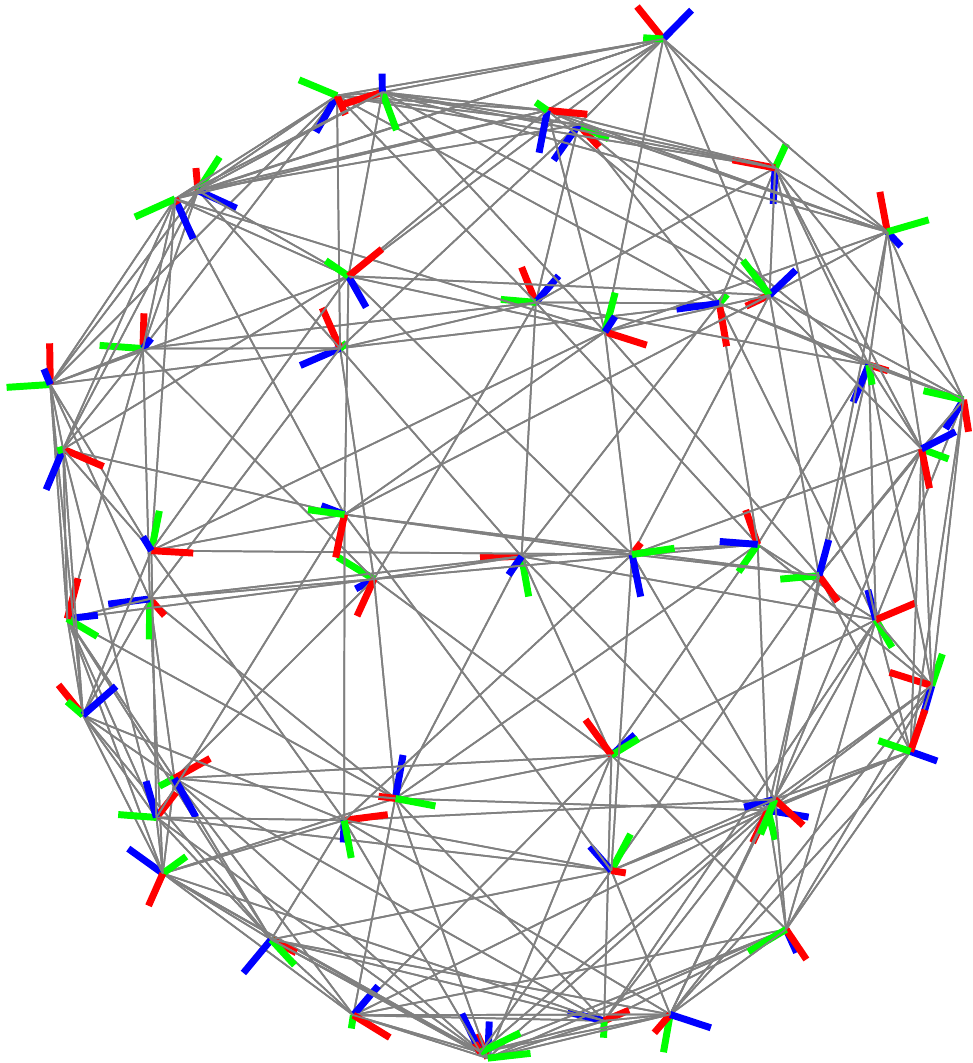}}
	\subfigure{\includegraphics[width=0.18\linewidth]{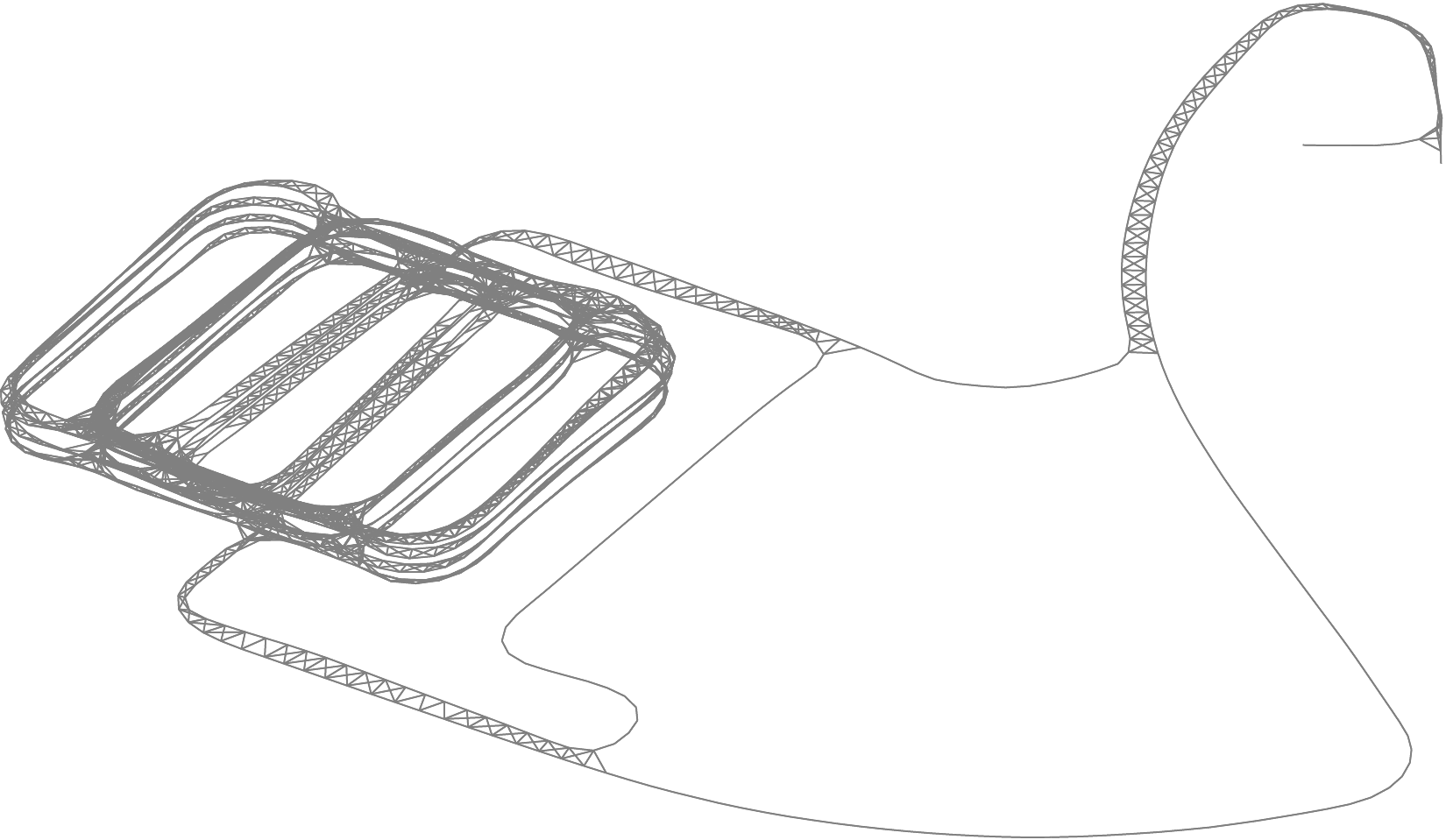}}\\
	\subfigure{\hspace{2.75mm}   \begin{picture}(0,5) \put(0,0){Random}   \end{picture}}
    \subfigure{\hspace{32mm}    \begin{picture}(0,5) \put(0,0){Circle}   \end{picture}}
    \subfigure{\hspace{32mm}    \begin{picture}(0,5) \put(0,0){Grid}     \end{picture}}
    \subfigure{\hspace{32mm}    \begin{picture}(0,5) \put(0,0){Sphere}   \end{picture}}
    \subfigure{\hspace{32mm}    \begin{picture}(0,5) \put(0,0){Garage}   \end{picture}}
\caption{Several pose graphs used in the numerical analysis are shown above; from left to right are the random, circular, grid, sphere, and parking-garage pose graphs. The ground truth network topologies are shown in the first row (the parking-garage does not include ground truth data). The global minimum solutions provided by SE-Sync are shown in the second row. The solutions obtained by DGS are shown in the third row. The solutions obtained by GeoD are shown in the fourth row. Our distributed solution reaches these results on average over $700$ times faster than the time to compute the centralized solution and is approximately $100\times$ faster than DGS on large networks like the parking-garage. Moreover, our method obtains a value that is on average 3.4$\times$ more accurate than DGS over all networks. The differences in value are hard to appreciate visually, but the grid is a good example where our approach yields a result closer to SE-Sync than DGS. 
}
\label{fig:results_networks}
\end{figure*}

\subsubsection{Consensus-based Distributed Comparison}
\label{sec:results_distributed}

We first compare our distributed estimation technique to the only other state-of-the-art consensus-based approach that formally considers the noisy relative measurements in the analysis which uses the angle-axis representation~\cite{cristofalo2019consensus}. 
% In this study (Fig.~\ref{fig:results_distributed}), 
We execute both methods on the sphere dataset with pairwise consistency in the measurements. We initialize both methods using the GPS technique as well. 
GeoD quickly converges to a geodesic value of $422.238$ in $32$ iterations ($0.0089$ $\mathrm{seconds}/\mathrm{agent}$) as calculated from the objective in Problem~\ref{prob:main}. The angle-axis approach achieves a geodesic value of $2237.248$ in $281$ iterations ($0.0743$ $\mathrm{seconds}/\mathrm{agent}$). 
%Visually, the angle-axis solution tends to collapse into a compact network configuration that does not accurately represent the ground truth as GeoD does. 
The angle-axis method in \cite{cristofalo2019consensus} yields approximately $5\times$ greater pose estimation error and takes approximately $9\times$ longer than GeoD. We do not show plots of the resulting pose graphs due to space limitations, however the pose graph using \cite{cristofalo2019consensus} appears significantly farther form the ground truth poses than the pose graph from GeoD. 
This performance deficit is apparent in all other  datasets as well, hence we do not consider the method from \cite{cristofalo2019consensus} in the more extensive performance comparisons below. %as well, thus we disregard the angle-axis estimator from the following study. 

\begin{comment}
\begin{figure}
\centering
    \subfigure[GeoD\label{fig:distributed_comp_geo}]{\raisebox{0mm}{\includegraphics[width=0.36\linewidth]{network_sphere_geo.pdf}}}
    \subfigure[Angle-axis\label{fig:distributed_comp_angle_axis}]{\raisebox{0mm}{\includegraphics[width=0.36\linewidth]{}}}
    \\
	\subfigure[Rotation-only\label{fig:distributed_rot}]{\includegraphics[width=0.48\linewidth]{}}
	\subfigure[Coupled Translation\label{fig:distributed_trans}]{\includegraphics[width=0.48\linewidth]{}}
\caption{Results comparing our distributed method with a consensus-based angle-axis method. 
}
\label{fig:results_distributed}
\end{figure}
\end{comment}

\subsubsection{State-of-the-art Comparison}
\label{sec:results_comparison}

The data in Table~\ref{tab:value_comparison} compares the chordal and geodesic values between all three methods, as well as the percent error of the chordal value with respect to the global minimum chordal value (denoted by a *). Table~\ref{tab:time_comparison} compares the run times of each method, as well as the communication iterations to convergence for the distributed methods. SE-Sync is a centralized algorithm and does not involve communication rounds, hence the tables show no iterations for SE-Sync. All methods are initialized using the GPS-like initialization, except for the parking-garage and cubicle datasets which are initialized using the spanning-tree initialization. SE-Sync reaches the certified global minimum chordal value in all cases. We report the following average statistics over all datasets except for the cubicle since DSG runs out of memory and could not return a result. Compared to the global minimum, our distributed method converges to a value with an average error of $32.5\%$, while the DGS approach converges to a value with an average error of $110.2\%$. It is important to recall that our method does not operate on the chordal error directly, so it should not be expected to reach the global minimum chordal value. Despite this fact, it still outperforms DGS. 

Our method is on average $717.27$ times faster than SE-Sync. 
% On average, our method requires approximately $34$ iterations to converge while DGS requires $19$ iterations. 
On the small datasets, DGS is about $2-4\times$ faster than GeoD. 
However, DGS requires $100\times$ more computation on the garage dataset time compared to our method and exceeds our computer's memory capacity in the cubicle case, failing to produce a result. We believe this is due to the pre-processing required to set up and store the DGS problem for such a large number of agents. In contrast, our method runs~\eqref{eq:estimation_system} for each pose, which is more efficient, and scales well with graphs of thousands of poses. 
Although our method does not reach the global minimum, it reaches qualitatively representative local minima as shown in Fig.~\ref{fig:results_networks} more quickly than with SE-Sync, and significantly more quickly than DGS for large networks. 

% Error Statistics
% DGS: (24.0 + 31.9 + 43.7 + 23.3 + 337.4 + 201.0)/6 = 110.2% error
% GeoD: (0.6 + 5.5 + 7.4 + 5.5 + 33.9 + 142.1)/6 = 32.5% error
% 	3.39x more accurate than DGS
% Timing Statistics
% SE-Sync: (0.1750 + 0.0975 + 0.0593 + 0.1134 +  0.4836 + 84.7133)/6 = 14.2737
% DGS: (0.0054 + 0.0027 + 0.0013 + 0.0020 + 0.0146 + 2.6211)/6 = 0.4412
% GeoD: (0.0189 + 0.0044 + 0.0068 + 0.0078 + 0.0609 + 0.0204)/6 = 0.0199
% 	717.27x faster than SE-Sync
% 	22.17x faster than DGS

%%%
\begin{table*}
\caption{Pose graph optimization objective value results. 
\textnormal{We compare our distributed consensus-based method, GeoD, to those from a state-of-the-art certifiable centralized method, SE-Sync, and a state-of-the-art distributed method, DGS. SE-Sync and DGS are convex relaxations that utilize the chordal distance instead of the geodesic distance, hence we report both values in this table. SE-Sync provides the certified global minimum chordal value which is indicated by the *. In all scenarios, our method reaches values that are closer to the global minimum compared to DGS, despite using different objective function. Note DGS runs out of memory on the large datasets with more than one thousand poses. } }
\label{tab:value_comparison}
%\begin{center} \begin{tabular}{|c|c|c||c|c||c|c|}
\begin{center} \begin{tabular}{| ccc | cc  | ccc | ccc |}
    \hline
    & 	&	& \multicolumn{2}{c|}{SE-Sync~\cite{rosen2016se}}		& \multicolumn{3}{c|}{DGS~\cite{choudhary2017distributed}}	 &
    \multicolumn{3}{c|}{GeoD [This Paper]} \\
    % \hline
    Scene	& \# Poses	& \# Measurements	& Geodesic	& Chordal*	& Geodesic	& Chordal   & \% Error from * & Geodesic	& Chordal & \% Error from * \\
    % \hhline{|=======|}
    \hline
    random	& 25	& 390	& 284.205 & 407.277 & 349.717 & 505.180 & 24.0\% & 286.315 & 409.615 & 0.6\% \\
    circle	& 25	& 300	& 221.367 & 315.298 & 304.577 & 415.762 & 31.9\% & 239.830 & 332.776 & 5.5\% \\
    grid	& 27	& 108	& 58.479 & 85.774 & 88.417 & 123.244 & 43.7\% & 66.179 & 92.082 & 7.4\% \\
    sphere	& 50	& 544 	& 387.896 & 558.386 & 496.650 & 688.637 & 23.3\% & 422.238 & 589.080 & 5.5\% \\
    experiment  & 217 & 508 & 78.056 & 99.635 & 296.637 & 435.768 & 337.4\% & 86.922 & 133.413 & 33.9\% \\
    garage	& 1,661	& 6,275	& 1.250 & 1.263 & 3.533 & 3.801 & 201.0\% & 3.055 & 3.056 & 142.1\% \\
    cubicle	& 5,750	& 16,869	& 644.799 & 717.126  & --- & --- & --- & 1234.126 & 1324.659 & 84.7\% \\
    % rim	& 10,195 & 29,743	& 4393.545 & 5460.897  & --- & --- & --- & 46070.655 & 49903.618 & 813.8\% \\
    % old
    % garage	& 1,661	& 6,275	& 2.637 & 2.647 & 5.402 & 5.767 & 117.8\% & 4.642 & 4.645 & 75.5\% \\
    % cubicle	& 5,750	& 16,869	& 1284.203 & 1436.156  & --- & --- & --- & 3158.846 & 3335.622 & 132.3\% \\
    % rim	& 10,195 & 29,743	& 1284.203 & 1436.156  & --- & --- & --- & 3158.846 & 3335.622 & 132.3\% \\
    % large random	& 2,500	& 10,000    & 6053.334 & 9294.043 & --- & --- & --- & 57027.120 & 59662.868 & 541.9\% \\
    \hline
\end{tabular} \end{center}
\end{table*}
%%%

%%%
\begin{table*}
\caption{Pose graph optimization computation time results. 
\textnormal{We compare GeoD's timing performance to SE-Sync and DGS. In all cases, our method outperforms SE-Sync. For small networks less than 50 agents, DGS is only faster by a factor of $2-4\times$. For larger networks, GeoD easily outperforms DGS, indicating that it scales well with increasing number of poses. In fact, our DGS implementation runs out of memory on large networks with greater than one thousand poses, illustrating GeoD's ability to successfully scale and handle pose graphs with thousands of poses. } }
\label{tab:time_comparison}
%\begin{center} \begin{tabular}{|c|c|c||c|c||c|c|}
\begin{center} \begin{tabular}{| ccc | cc  | cc | cc |}
    \hline
    & 	&	& \multicolumn{2}{c|}{SE-Sync~\cite{rosen2016se}}		& \multicolumn{2}{c|}{DGS~\cite{choudhary2017distributed}}	 &
    \multicolumn{2}{c|}{GeoD [This Paper]} \\
    % \hline
    Scene	& \# Poses	& \# Measurements & Iterations	& Time [$\mathrm{s}$]	& Iterations	& Time/Agent [$\mathrm{s}$]	& Iterations	& Time/Agent [$\mathrm{s}$]\\
    % \hhline{|=======|}
    \hline
    random	& 25	& 390   & --- & 0.1750 & 9 & 0.0054 & 17 & 0.0189 \\
    circle	& 25	& 300	& --- & 0.0975 & 15 & 0.0027 & 16 & 0.0044 \\
    grid	& 27	& 108	& --- & 0.0593 & 10 & 0.0013 & 55 & 0.0068 \\
    sphere	& 50	& 544	& --- & 0.1134 & 10 & 0.0020 & 32 & 0.0078 \\
    experiment & 217 & 508  & --- & 0.4836 & 141 & 0.0146 & 781 & 0.0609 \\
    garage	& 1,661	& 6,275	& --- & 84.7133 & 265 & 2.6211 & 94 & 0.0204 \\
    cubicle	& 5,750	& 16,869	& --- & 19.6280 & --- &--- & 5732 & 1.1467 \\
    % rim	& 10,195 & 29,7435	& --- & 48.4087 & --- &--- & 1000 & 1.3100 \\
    % old
    % garage	& 1,661	& 6,275	& --- & 97.9406 & 382 & 2.2611 & 122 & 0.0235 \\
    % cubicle	& 5,750	& 16,869	& --- & 19.6280 & --- &--- & 1000 & 0.1400 \\
    % rim	& 10,195 & 29,7435	& --- & 19.6280 & --- &--- & 1000 & 0.1400 \\
    % large random    & 2,500	& 10,000	& --- & 42.9162  & 464 & 0.0620 \\
    \hline
\end{tabular} \end{center}
\end{table*}
%%%

\subsection{Multi-robot Pose Estimation Experiment}
\label{sec:results_experiment}

We compare approaches in an experiment using real quadrotors with forward-facing cameras. The purpose of this experiment is to show that our algorithm can estimate the poses of multiple robots' trajectories using only raw images collected during flight. Our algorithm then estimates the poses of the quadrotors as they moved through the scene, informing each robot where it is and where its neighbors are throughout the flight. 
% We show that these pose estimates compare to the real-world flight of the robots using ground truth motion capture data. 
We fly seven autonomous quadrotors around an object of interest while simultaneously collecting images from their cameras (see Fig.~\ref{fig:experiment}). After collection, the quadrotors extact image features and match them with their neighboring robots using classic feature-based Structure from Motion (SfM) methods. We additionally use SfM to estimate the noisy relative poses between inter and intra quadrotor trajectories. This information is the input to GeoD as the graph of noisy pose measurements. Here, we assume each quadrotor is responsible for computing the GeoD iterations (\ref{eq:estimation_system}) for all poses within its own trajectory. 

We initialize the pose optimization by adding artificial noise to the poses obtained from an external motion capture system during their flight, to simulate the GPS-like initialization used throughout this work. In this experiment the parameters for the initial condition noise are $\tau = 0.1 \ \mathrm{meters}$ and $\kappa = 0.175\ \mathrm{radians}$ or $10\ \mathrm{degrees}$. GeoD reaches a final geodesic value of $86.922$ (compared to the geodesic value associated to the chordal global minimum of $78.056$) in $0.0609$ $\mathrm{seconds}/\mathrm{agent}$ (compared to $0.4836\ \mathrm{seconds}$ for SE-Sync). In contrast, the distributed DGS method reaches a final geodesic value of $296.637$ in $0.0146$ $\mathrm{seconds}/\mathrm{agent}$. The final pose graphs are visualized around the SfM $3$D reconstruction in Fig.~\ref{fig:experiment}. The pose graphs are aligned to the reconstruction afterwards to illustrate the resulting scale of the solution. 
Interestingly, the converged solution of GeoD visually represents the ground truth trajectories better than SE-Sync. We suspect this indicates that the geodesic value captures important geometrical qualities. 
% GPS initialization (0.1,10)

\begin{figure*}
\centering
    \subfigure[Ground truth (top)\label{fig:experiment_ground_truth_top}]{\raisebox{5mm}{\includegraphics[width=0.24\linewidth]{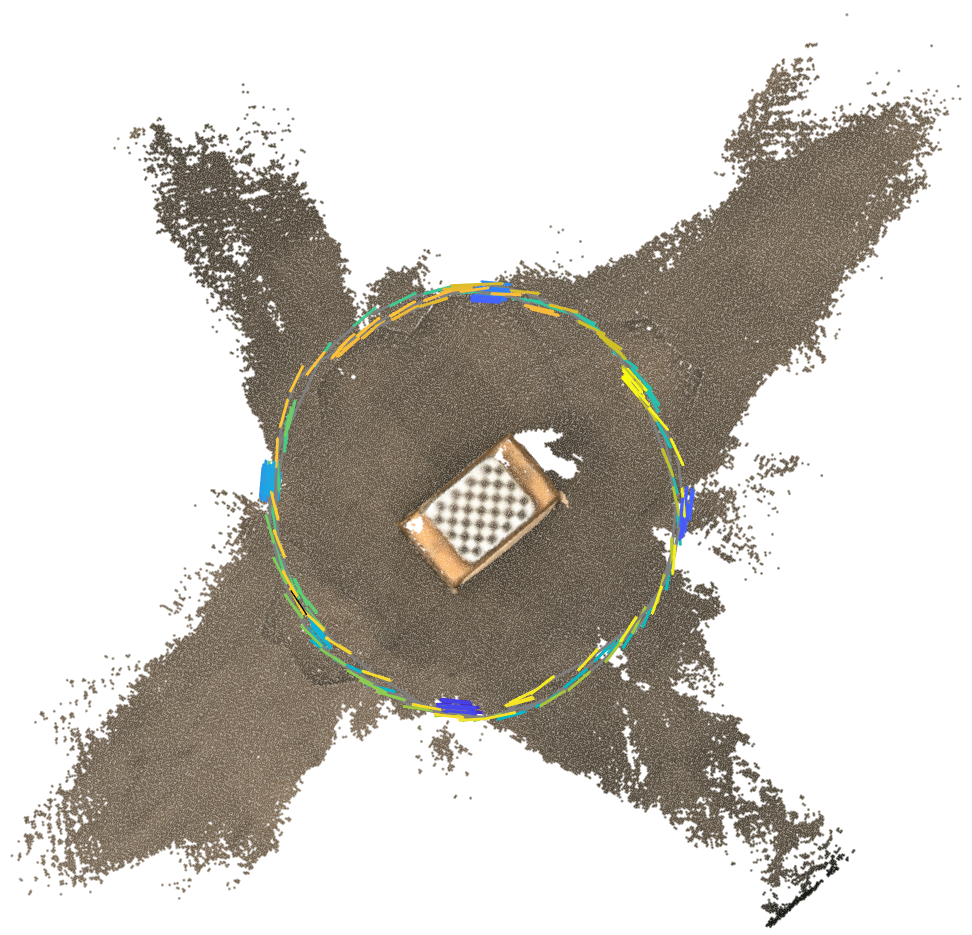}}}
	\subfigure[SE-Sync~\cite{rosen2016se} (top)\label{fig:experiment_sesync_top}]{\raisebox{0mm}{\includegraphics[width=0.24\linewidth]{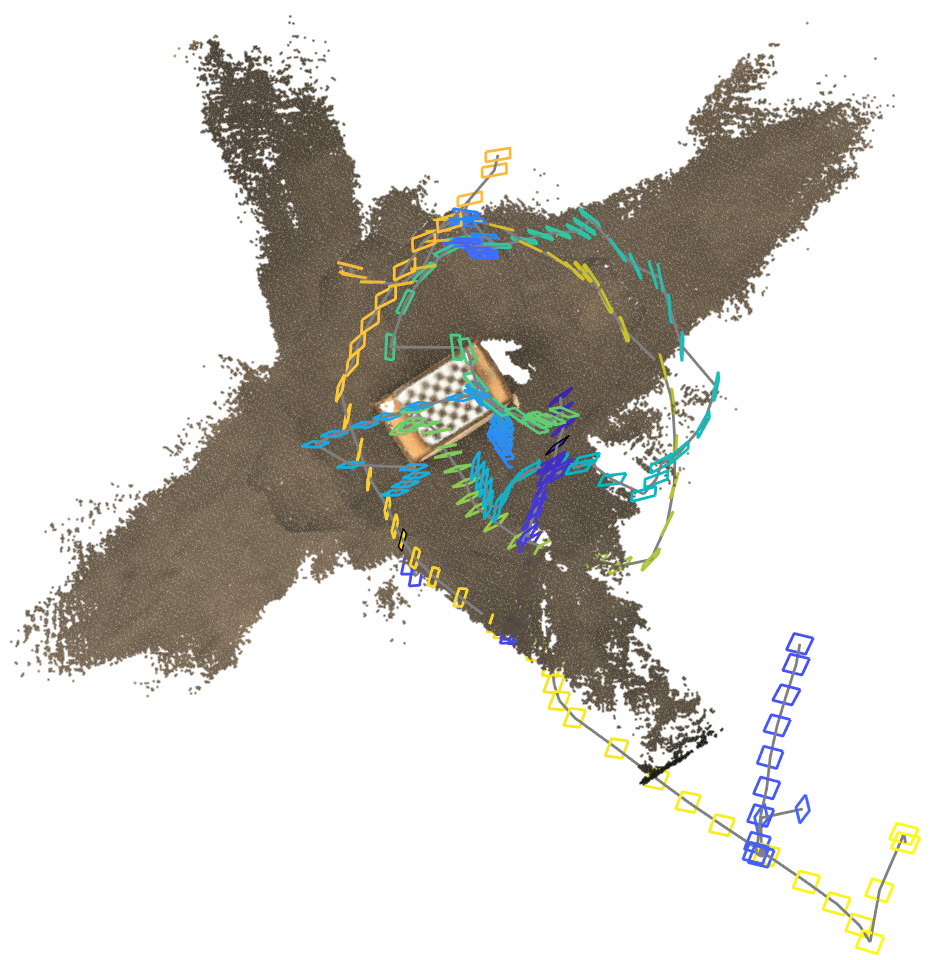}}} \subfigure[DGS~\cite{choudhary2017distributed} (top)\label{fig:experiment_jor_top}]{\raisebox{5mm}{\includegraphics[width=0.24\linewidth]{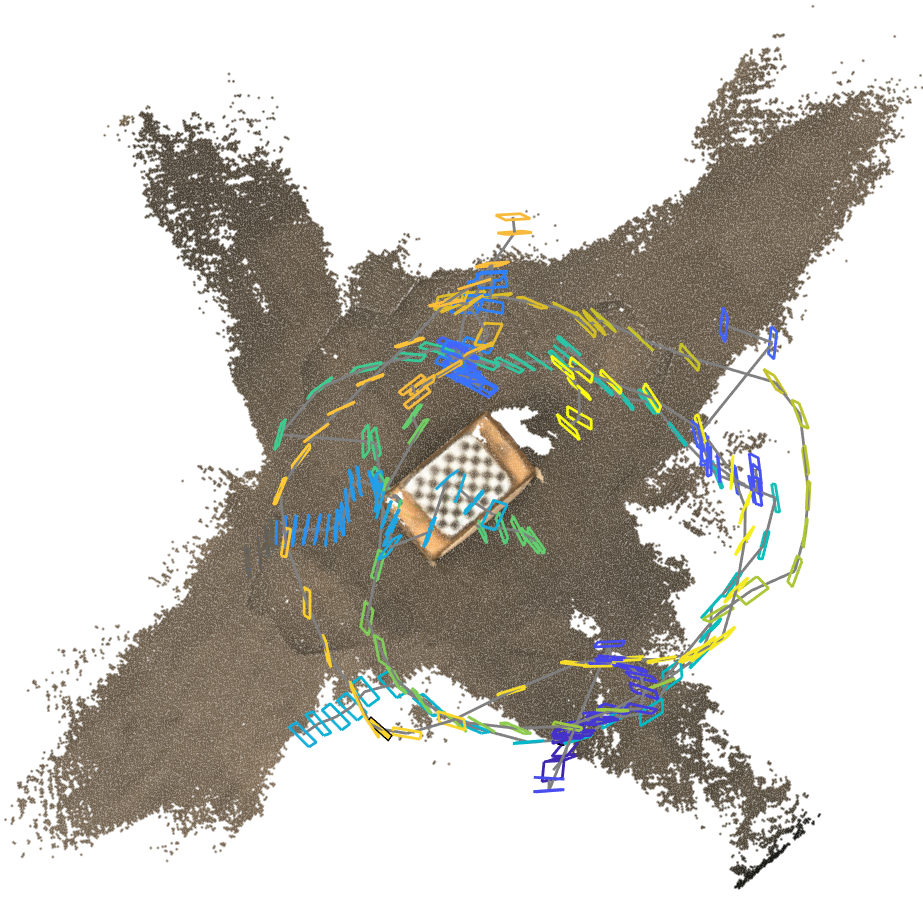}}}
    \subfigure[GeoD  (top)\label{fig:experiment_geodesic_top}]{\raisebox{5mm}{\includegraphics[width=0.24\linewidth]{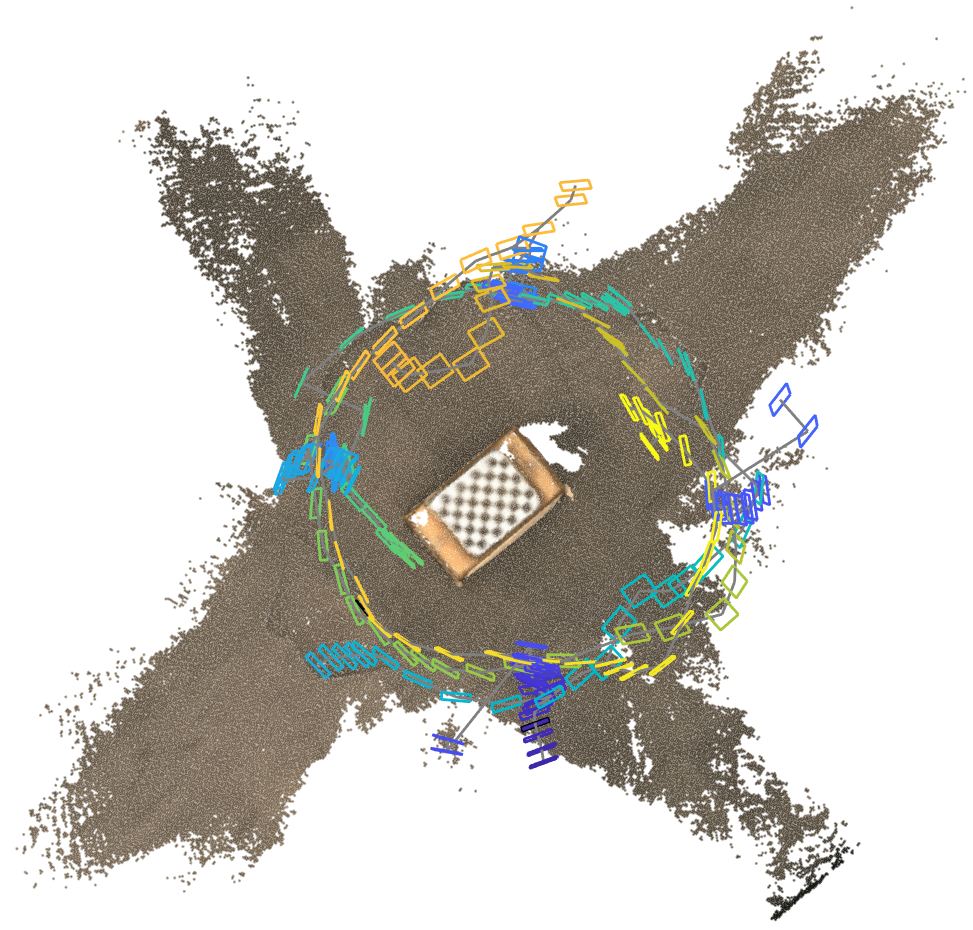}}}
    \\
    \subfigure[Ground truth (side)\label{fig:experiment_ground_truth}]{\raisebox{4.9mm}{\includegraphics[width=0.24\linewidth]{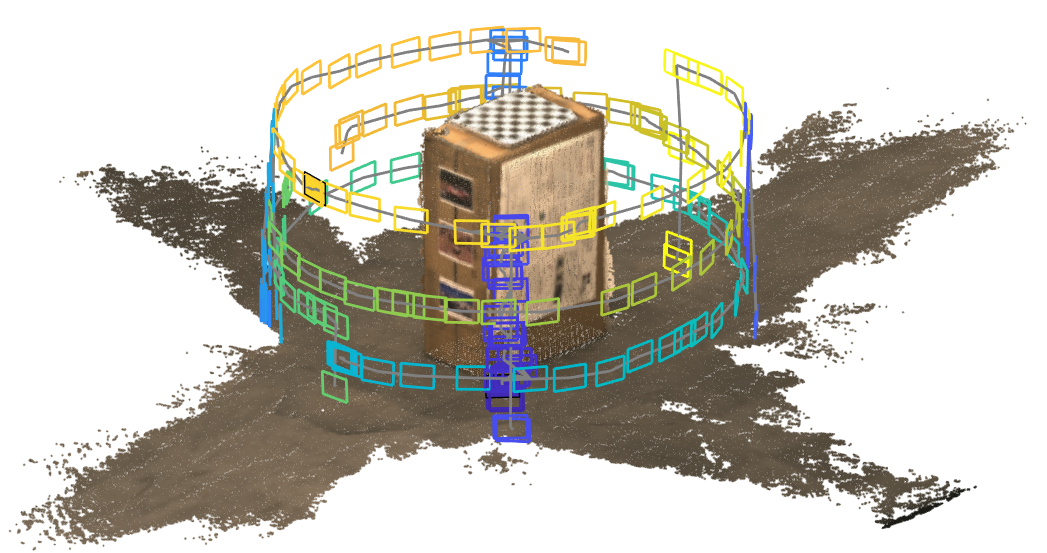}}}
	\subfigure[SE-Sync~\cite{rosen2016se} (side)\label{fig:experiment_sesync}]{\raisebox{0mm}{\includegraphics[width=0.24\linewidth]{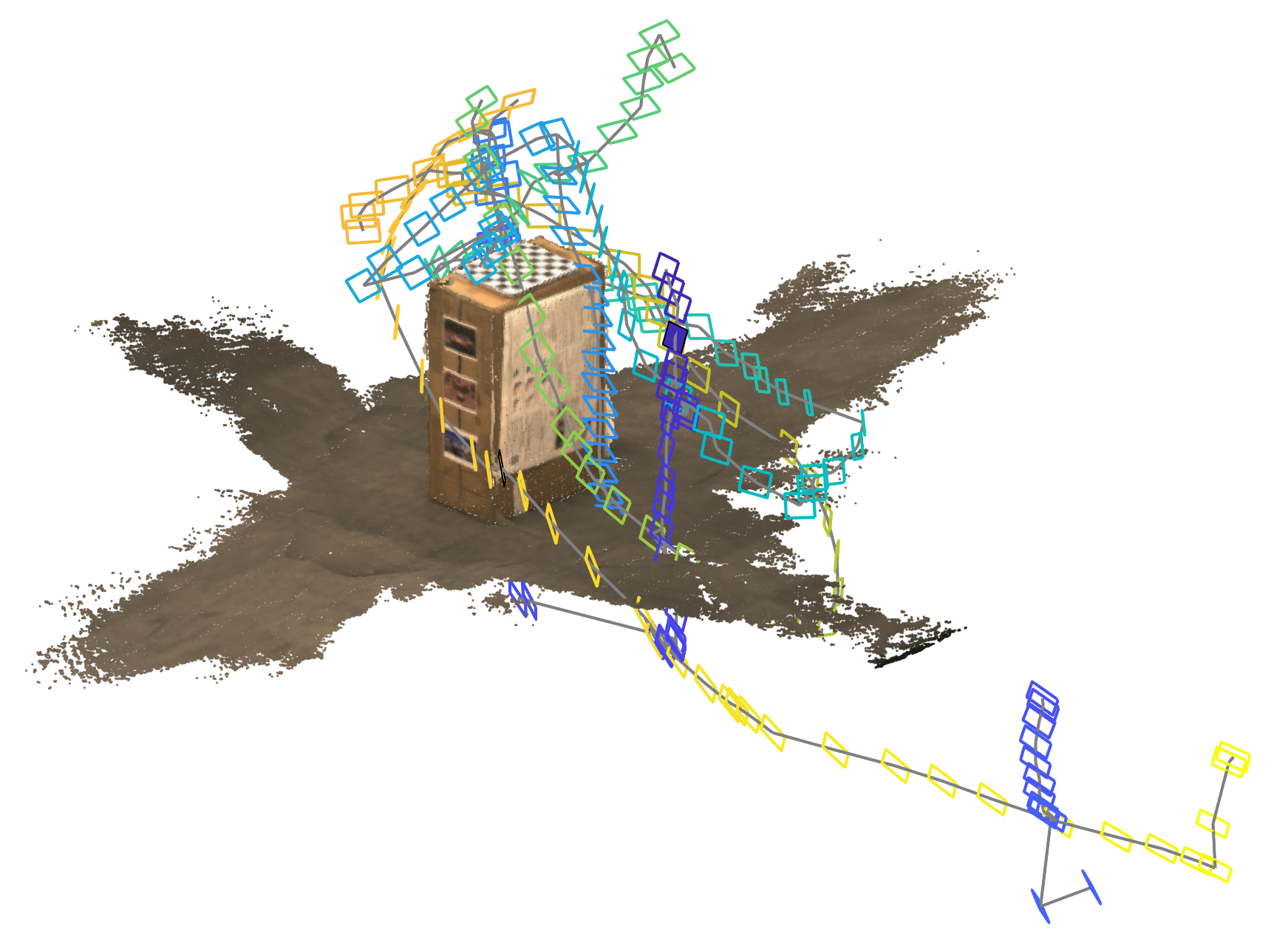}}} \subfigure[DGS~\cite{choudhary2017distributed} (side)\label{fig:experiment_jor}]{\raisebox{0mm}{\includegraphics[width=0.24\linewidth]{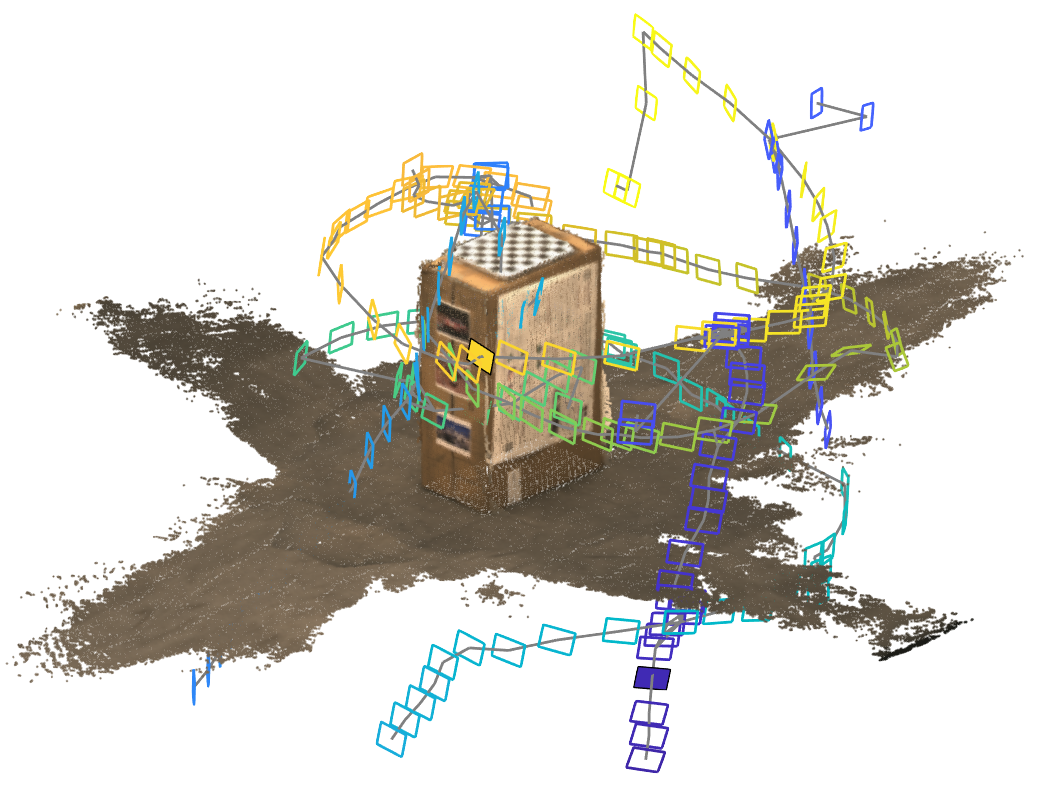}}}
    \subfigure[GeoD  (side)\label{fig:experiment_geodesic}]{\raisebox{4.9mm}{\includegraphics[width=0.24\linewidth]{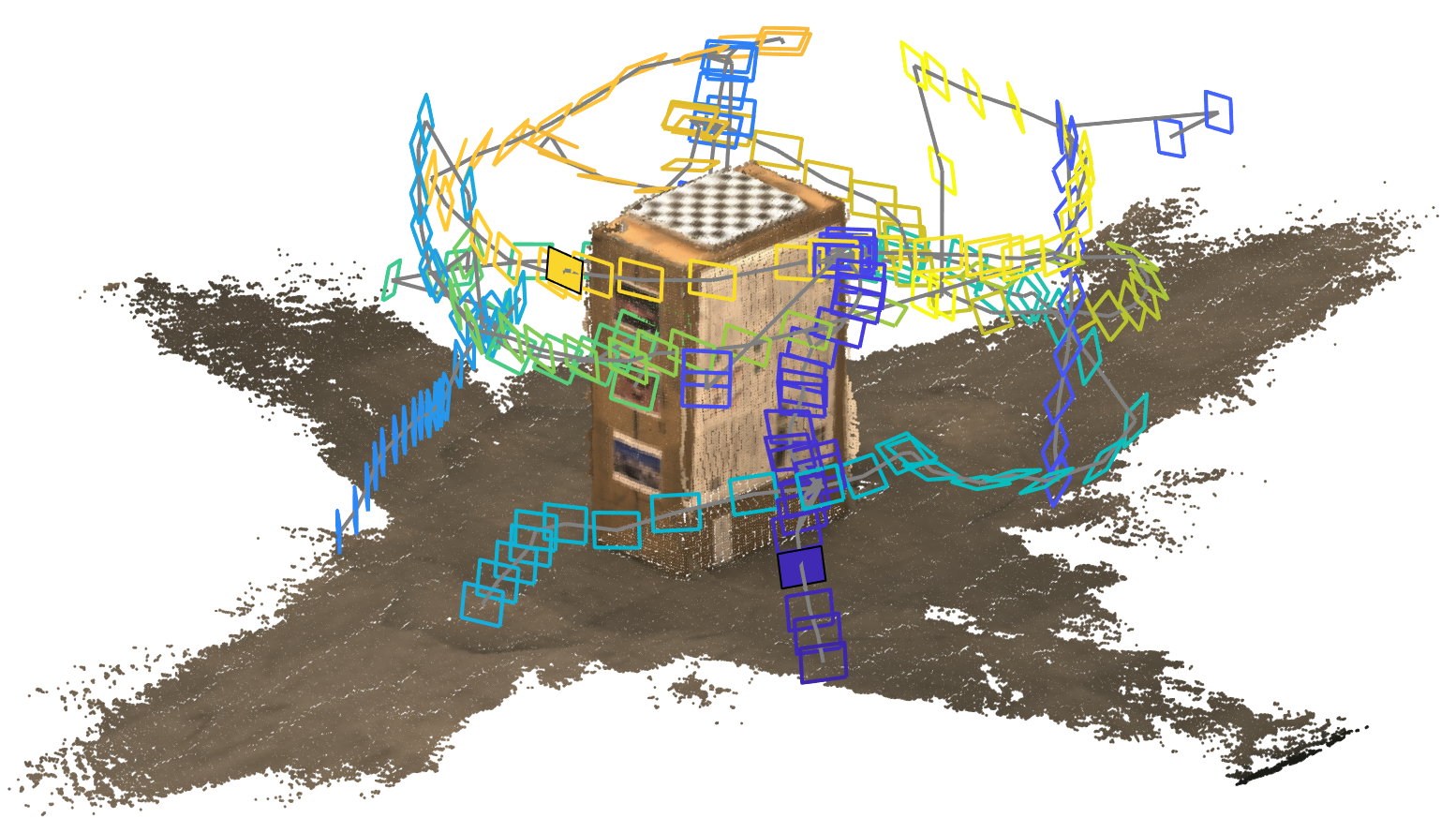}}}
\caption{We deploy GeoD on a team of vision-equipped quadrotors to compare their images with their neighbors, extract noisy relative pose measurements, and reach agreement on the quadrotors' poses in a distributed manner. Above are examples of the camera poses in the 3D reconstructed scene for the ground truth (a, e), SE-Sync (b, f), DGS (c, g), and GeoD (d, h). The figures in the top row and bottom row are viewed from an aerial view and side view, respectively. We aligned the resulting solutions to the $3$D reconstruction by anchoring the first camera pose to the ground truth coordinate frame. Our method outperforms the DGS by converging to a solution with less value ($86.922$ compared to $296.637$) and also qualitatively better represents the ground truth acquired via motion capture. Notice that the trajectories provided GeoD visually correspond to the ground truth, while the other methods return pose graphs that intersect the ground plane. 
}
\label{fig:experiment}
\end{figure*}

\section{Conclusion}\label{sec:conclusion}
We have presented GeoD, a consensus-based distributed pose graph optimization algorithm with provable convergence guarantees. 
GeoD utilizes the geodesic distance on rotation matrices rather than the chordal distance and is guaranteed to converge when the noisy measurements along the graph's edges are constistent. 
We have leveraged tools from Lyapunov theory and multi-agent consensus to prove that pairwise consistent relative measurements are a sufficient condition for convergence of the entire $SE(3)$ estimator. 

We have compared GeoD to state-of-the-art centralized and decentralized methods. Our gradient-based method reaches convergence in less time than the centralized method and reaches a more accurate solution that the distributed method. GeoD was additionally shown to scale better on pose graphs with large numbers of poses. We finally deployed our pose graph optimization method on real image data acquired from aerial drone trajectories. Our method allows the multi-UAV system to align their images and use this information to perform a vision-based $3$D reconstruction. 

We believe consensus-based estimators like GeoD are useful for their provable guarantees and their ease of implementation. In the future, we hope to find the conditions that ensure convergence to the global geodesic minimum and to deploy consensus-based methods as back-ends in an online distributed SLAM system. 

\appendix
\label{sec:appendix}

\subsection{Proof of Theorem~\ref{thm:time_derivative_dot_matrix_logarithm}}
\label{app:lie_group}

\begin{proof}
From the definition of $\theta$ in~\eqref{eq:angle_axis_theta}, we write 
% the time derivative of $\theta$ is, 
% \begin{equation}
% \label{eq:theta_dot}
% 	\dot\theta = \frac{d}{dt} \|\log(\R)^{\vee}\|_2 = \frac{-\tr(\dot\R)}{2 \sin(\theta)}
% 	\, .
% \end{equation}
% In a similar fashion, 
the time derivative of the squared geodesic distance as a function of $\theta$ and $\dot\R$, 
\begin{equation}
	\frac{d}{dt} \frac{1}{2} \|\log(\R)^{\vee}\|_2^2 = \frac{d}{dt} \frac{1}{2} \theta^2 
	= \theta \dot\theta = \frac{-\theta\tr(\dot\R)}{2 \sin(\theta)}
	\, .
\end{equation}
We premultiply the $\dot\R$ inside the trace by the identity matrix $\R \R^T$ such that $\R (\R^T\dot\R)$ is a product of a rotation matrix, $\R$, and a skew-symmetric matrix, $(\R^T\dot\R)$, such that, 
\begin{equation}
	\frac{d}{dt} \frac{1}{2} \|\log(\R)^{\vee}\|_2^2 = \frac{-\theta\, \tr(\R (\R^T\dot\R))}{2 \sin(\theta)}
	\, .
\end{equation}
Note that $(\R^T\dot\R)\in so(3)$ due to the expression for the rate of change of a rotation matrix. 
Using Lemma~\ref{lem:skew_symmetric}, we find the resulting expression is a function of $\log(\R)$ and thus arrive at the desired expression, 
\begin{equation} \begin{aligned}
	\frac{d}{dt} \frac{1}{2} \|\log(\R)^{\vee}\|_2^2 
	&= \frac{\theta {(\R^T\dot\R)^{\vee}}^T(\R - \R^T)^{\vee} }{2 \sin(\theta)} \\
	&=  {(\R^T\dot\R)^{\vee}}^T \left( \frac{\theta}{2 \sin(\theta)} (\R - \R^T)\right)^{\vee} \\
	&= {(\R^T\dot\R)^{\vee}}^T \log(\R)^{\vee}
	\, .
\end{aligned} \end{equation}
\end{proof}

\subsection{Proofs of Results from Section~\ref{sec:analysis_rotation_matrix}}
\label{app:analysis_rotation_matrix}

\begin{proof}[Proof of Lemma~\ref{lem:rotated_control}]
	From the properties of the matrix Logarithm~\eqref{eq:log_products_result}, we know that the following is true for any $\mathbf{Q} \in \{ \R_{ij}, \tilde\R_{ij} \}$ and $\bm{\omega}_{ij}$, 
\begin{equation} \begin{aligned}
	\mathbf{Q}^T\bm\omega_{ij}^{\wedge}\mathbf{Q}
		&= \mathbf{Q}^T \log(\R_{ij}\tilde\R_{ij}^T) \mathbf{Q} \\
		&= \log(\mathbf{Q}^T \R_{ij} \tilde\R_{ij}^T \mathbf{Q} ) 
		\, .
\end{aligned} \end{equation}
The previous equation now takes the same form for each $\mathbf{Q} \in \{ \R_{ij}, \tilde\R_{ij} \}$, 
\begin{equation} \begin{aligned}
	\log(\tilde\R_{ij}^T \R_{ij} \tilde\R_{ij}^T \tilde\R_{ij} ) 
		&= \log(\R_{ij}^T \R_{ij} \tilde\R_{ij}^T \R_{ij} ) \\
		&= \log(\tilde\R_{ij}^T \R_{ij} )
		\, .
\end{aligned} \end{equation}
Finally, using the property of the matrix logarithm of a matrix inverse, we reach the desired result.
\begin{equation} \begin{aligned}
	\log(\tilde\R_{ij}^T \R_{ij} )
	&= - \log(\R_{ij}^T \tilde\R_{ij} ) \\
	&= - \log(\R_{ji} \tilde\R_{ji}^T ) \\
	&= - \bm\omega_{ji}^{\wedge} 
	\, ,
\end{aligned} \end{equation}
and $\mathbf{Q}^T\bm\omega_{ij}^{\wedge}\mathbf{Q} = -\bm\omega_{ji}^{\wedge}$ as intended. 
\end{proof}

\begin{proof}[Proof of Lemma~\ref{lem:dot_product_equivalence_1}]
	We reach the desired result using Lemma~\ref{lem:rotated_control},~\eqref{eq:dot_product_trace}, and the properties of the trace operator,  
	\begin{equation} \begin{aligned}
		\bm\omega_{ij}^T \left( \tilde\R_{ij} \bm\omega_{ik}^{\wedge} \tilde\R_{ij}^T \right)^{\vee} &= \frac{1}{2} \tr \left( {\bm\omega_{ij}^{\wedge}}^T \tilde\R_{ij} \bm\omega_{ik}^{\wedge} \tilde\R_{ij}^T \right) \\
		&= \frac{1}{2} \tr \left( \tilde\R_{ij}^T {\bm\omega_{ij}^{\wedge}}^T \tilde\R_{ij} \bm\omega_{ik}^{\wedge} \right) \\
		&= \frac{1}{2} \tr \left( -{\bm\omega_{ji}^{\wedge}}^T \bm\omega_{ik}^{\wedge} \right) \\
		&= -\bm\omega_{ji}^T\bm\omega_{ik}
		\, .
	\end{aligned} \end{equation}
\end{proof}

\begin{proof}[Proof of Lemma~\ref{lem:dot_product_equivalence_2}]
	Similarly to Lemma~\ref{lem:dot_product_equivalence_1}, we reach the desired result using the previous properties of the matrix Logarithm, 
	\begin{equation} \begin{aligned}
		&\bm\omega_{ij}^T \left( \tilde\R_{ij}\R_{ij}^T \bm\omega_{ik}^{\wedge} \R_{ij}\tilde\R_{ij}^T \right)^{\vee} \\
		&\qquad\qquad = \frac{1}{2} \tr \left( {\bm\omega_{ij}^{\wedge}}^T \tilde\R_{ij}\R_{ij}^T \bm\omega_{ik}^{\wedge} \R_{ij}\tilde\R_{ij}^T \right) \\
		&\qquad\qquad = \frac{1}{2} \tr \left( \R_{ij} \tilde\R_{ij}^T {\bm\omega_{ij}^{\wedge}}^T \tilde\R_{ij}\R_{ij}^T \bm\omega_{ik}^{\wedge} \right) \\
		&\qquad\qquad = \frac{1}{2} \tr \left( -\R_{ij} {\bm\omega_{ji}^{\wedge}}^T \R_{ij}^T \bm\omega_{ik}^{\wedge} \right) \\
		&\qquad\qquad = \frac{1}{2} \tr \left( {\bm\omega_{ij}^{\wedge}}^T \bm\omega_{ik}^{\wedge} \right) \\
		&\qquad\qquad = \bm\omega_{ij}^T\bm\omega_{ik}
		\, .
	\end{aligned} \end{equation}
\end{proof}

\begin{proof}[Proof of Lemma~\ref{lem:relative_control_product}]
The product $\bar\R^T \dot{\bar\R}$ is given by, 
\begin{equation*} \begin{aligned}
	\bar\R^T \dot{\bar\R} & = (\R_i^T\R_j \tilde\R_{ij}^T)^{-1}\frac{d}{dt}(\R_i^T\R_j \tilde\R_{ij}^T) \\
	&= \tilde\R_{ij}\R_j^T\R_i \left( \dot{\R}_i^T\R_j \tilde\R_{ij}^T + \R_i^T\dot{\R}_j \tilde\R_{ij}^T \right) \\
	&=  \tilde\R_{ij}\R_{ij}^T \dot{\R}_i^T\R_j \tilde\R_{ij}^T + \tilde\R_{ij}\R_j^T\dot{\R}_j \tilde\R_{ij}^T
	\, .
\end{aligned} \end{equation*}
The time derivative of a rotation matrix $\R_i$ is defined by the rotation estimation system~\eqref{eq:rotation_estimation_system} and substituting this into the previous equation yields the stated result, 
\begin{equation*} \begin{aligned}
	\dot{\bm\omega}_{ij}^{\wedge} 
	&= \tilde\R_{ij}\R_{ij}^T \left(\R_i \bm\omega_i^{\wedge}\right)^T \R_j \tilde\R_{ij}^T + \tilde\R_{ij}\R_j^T \left(\R_j \bm\omega_j^{\wedge}\right) \tilde\R_{ij}^T \\
	&= \tilde\R_{ij}\R_{ij}^T {\bm\omega_i^{\wedge}}^T \R_{ij} \tilde\R_{ij}^T + \tilde\R_{ij} \bm\omega_j^{\wedge} \tilde\R_{ij}^T \\
	&= - \tilde\R_{ij}\R_{ij}^T \bm\omega_i^{\wedge} \R_{ij}\tilde\R_{ij}^T + \tilde\R_{ij}\bm\omega_j^{\wedge}\tilde\R_{ij}^T
	\, .
\end{aligned} \end{equation*}
Additionally, this expression is indeed a skew-symmetric matrix since it is the sum of two elements in $so(3)$ from the property of logarithm products~\eqref{eq:log_products_result}. 
\end{proof}

% % use section* for acknowledgment
% \section*{Acknowledgment}
% The authors would like to thank...

% % Can use something like this to put references on a page
% % by themselves when using endfloat and the captionsoff option.
% \ifCLASSOPTIONcaptionsoff
%   \newpage
% \fi

%\IEEEtriggeratref{29} % break reference into a new page
\bibliographystyle{IEEEtran}
\bibliography{IEEEfull,00_references.bib}

\end{document}